%%%%%%%% ICML 2025 EXAMPLE LATEX SUBMISSION FILE %%%%%%%%%%%%%%%%%

\documentclass{article}

% Recommended, but optional, packages for figures and better typesetting:
\usepackage{microtype}
\usepackage{booktabs} % for professional tables

% hyperref makes hyperlinks in the resulting PDF.
% If your build breaks (sometimes temporarily if a hyperlink spans a page)
% please comment out the following usepackage line and replace
% \usepackage{icml2025} with \usepackage[nohyperref]{icml2025} above.
\usepackage{hyperref}

% Attempt to make hyperref and algorithmic work together better:

% Use the following line for the initial blind version submitted for review:
% \usepackage{icml2025}

% If accepted, instead use the following line for the camera-ready submission:
\usepackage[accepted]{icml2025}

% For theorems and such
% \usepackage{amsmath}
% \usepackage{amssymb}
% \usepackage{mathtools}
% \usepackage{amsthm}

\usepackage[T1]{fontenc}
\usepackage{amsfonts, amssymb, amsmath, amsthm, graphicx, color, tikz, booktabs, multirow, bm, cases, mathtools,  mathrsfs, pgfplots, subcaption, csvsimple,algorithm,algorithmic}
\usetikzlibrary{intersections, pgfplots.fillbetween, shapes, arrows, positioning, decorations.markings}
\usetikzlibrary{matrix, fit, shapes}
\definecolor {processblue}{cmyk}{0.96,0,0,0}
\tikzstyle{int}=[draw, fill=blue!20, minimum size=2em]
\tikzstyle{init} = [pin edge={to-,thin,black}]
\usepackage{pgfplotstable}
\usepackage{pgfplots}
\pgfplotsset{compat=1.14}
\newtheorem{defn}{Definition}[section]
\newtheorem{lem}{Lemma}[section]
\newtheorem{rem}{Remark}[section]

\newtheorem{thm}{Theorem}[section]

\newtheorem{cor}{Corollary}[section]

% \setcitestyle{authoryear,open={(},close={)}}
% \newcommand{\ex}[2]{{\ifx&#1& \mathbb{E} \else \underset{#1}{\mathbb{E}} \fi \left[#2\right]}}
\newcommand{\ex}[2]{{\ifx&#1& \mathbb{E} \else {\mathbb{E}_{#1}} \fi \left[#2\right]}}
\newcommand{\indp}{\perp\!\!\!\perp} 
\newcommand{\pr}[1]{\left(#1\right)}
\newcommand{\br}[1]{\left[#1\right]}
\newcommand{\abs}[1]{\left|#1\right|}
\newcommand{\norm}[1]{\left|\left|#1\right|\right|}
\newcommand{\kl}[2]{\mathrm{D_{KL}}\left(#1 ||#2 \right)}

\usepackage[toc,page,header]{appendix}
\usepackage{enumitem}

% Make the "Part I" text invisible
% \renewcommand \thepart{}
% \renewcommand \partname{}
% \usepackage{titletoc}

% \newcommand\DoToC{%
%   \startcontents
%   \printcontents{}{1}
% {\textbf{Contents}\vskip1pt\hrule\vskip2pt}
%   \vskip1pt\hrule\vskip2pt
% }

% \newcommand\DoToC{%
%   \startcontents
%   \printcontents{}{1}{\textbf{Contents}\vskip3pt\hrule\vskip5pt}
%   \vskip3pt\hrule\vskip5pt
% }

% if you use cleveref..
\usepackage[capitalize,noabbrev]{cleveref}

%%%%%%%%%%%%%%%%%%%%%%%%%%%%%%%%
% THEOREMS
%%%%%%%%%%%%%%%%%%%%%%%%%%%%%%%%
% \theoremstyle{plain}
% \newtheorem{theorem}{Theorem}[section]
% \newtheorem{proposition}[theorem]{Proposition}
% \newtheorem{lemma}[theorem]{Lemma}
% \newtheorem{corollary}[theorem]{Corollary}
% \theoremstyle{definition}
% \newtheorem{definition}[theorem]{Definition}
% \newtheorem{assumption}[theorem]{Assumption}
% \theoremstyle{remark}
% \newtheorem{remark}[theorem]{Remark}

% Todonotes is useful during development; simply uncomment the next line
%    and comment out the line below the next line to turn off comments
%\usepackage[disable,textsize=tiny]{todonotes}
\usepackage[textsize=tiny]{todonotes}

% The \icmltitle you define below is probably too long as a header.
% Therefore, a short form for the running title is supplied here:
\icmltitlerunning{Conditional Mutual Information Framework for Federated Learning}

\begin{document}

\twocolumn[
\icmltitle{Generalization in Federated Learning:\\ A Conditional Mutual Information Framework}

% It is OKAY to include author information, even for blind
% submissions: the style file will automatically remove it for you
% unless you've provided the [accepted] option to the icml2025
% package.

% List of affiliations: The first argument should be a (short)
% identifier you will use later to specify author affiliations
% Academic affiliations should list Department, University, City, Region, Country
% Industry affiliations should list Company, City, Region, Country

% You can specify symbols, otherwise they are numbered in order.
% Ideally, you should not use this facility. Affiliations will be numbered
% in order of appearance and this is the preferred way.
\icmlsetsymbol{equal}{*}

% \begin{icmlauthorlist}
% \icmlauthor{Firstname1 Lastname1}{equal,yyy}
% \icmlauthor{Firstname2 Lastname2}{equal,yyy,comp}
% \icmlauthor{Firstname3 Lastname3}{comp}
% \icmlauthor{Firstname4 Lastname4}{sch}
% \icmlauthor{Firstname5 Lastname5}{yyy}
% \icmlauthor{Firstname6 Lastname6}{sch,yyy,comp}
% \icmlauthor{Firstname7 Lastname7}{comp}
% %\icmlauthor{}{sch}
% \icmlauthor{Firstname8 Lastname8}{sch}
% \icmlauthor{Firstname8 Lastname8}{yyy,comp}
% %\icmlauthor{}{sch}
% %\icmlauthor{}{sch}
% \end{icmlauthorlist}

\begin{icmlauthorlist}
\icmlauthor{Ziqiao Wang}{yyy}
\icmlauthor{Cheng Long}{comp}
\icmlauthor{Yongyi Mao}{sch}
%\icmlauthor{}{sch}
%\icmlauthor{}{sch}
\end{icmlauthorlist}

\icmlaffiliation{yyy}{School of Computer Science and Technology, Tongji University, Shanghai, China}
\icmlaffiliation{comp}{Department of Applied Physics and Applied Mathematics, Columbia University, New York, USA}
\icmlaffiliation{sch}{School of Electrical Engineering and Computer Science, University of Ottawa, Ottawa, Canada}

\icmlcorrespondingauthor{Ziqiao Wang}{ziqiaowang@tongji.edu.cn}
% \icmlcorrespondingauthor{Firstname2 Lastname2}{first2.last2@www.uk}

% You may provide any keywords that you
% find helpful for describing your paper; these are used to populate
% the "keywords" metadata in the PDF but will not be shown in the document
\icmlkeywords{Machine Learning, ICML}

\vskip 0.3in
]

% this must go after the closing bracket ] following \twocolumn[ ...

% This command actually creates the footnote in the first column
% listing the affiliations and the copyright notice.
% The command takes one argument, which is text to display at the start of the footnote.
% The \icmlEqualContribution command is standard text for equal contribution.
% Remove it (just {}) if you do not need this facility.

\printAffiliationsAndNotice{}  % leave blank if no need to mention equal contribution
% \printAffiliationsAndNotice{\icmlEqualContribution} % otherwise use the standard text.

\begin{abstract}
Federated learning (FL) is a widely adopted privacy-preserving distributed learning framework, yet its generalization performance remains less explored compared to centralized learning. 
% However, its generalization performance remains underexplored compared to centralized learning. 
In FL, the generalization error consists of two components: the out-of-sample gap, which measures the gap between the empirical and true risk for participating clients, and the participation gap, which quantifies the risk difference between participating and non-participating clients. 
% Analyzing generalization in FL is further complicated by the heterogeneity of client data.
In this work, we apply an information-theoretic analysis via the conditional mutual information (CMI) framework to study FL's two-level generalization. Beyond the traditional supersample-based CMI framework, we introduce a superclient construction to accommodate the two-level generalization setting in FL. We derive multiple CMI-based bounds, including hypothesis-based CMI bounds, illustrating how privacy constraints in FL can imply generalization guarantees. Furthermore, we propose fast-rate evaluated CMI bounds that recover the best-known convergence rate for two-level FL generalization in the small empirical risk regime. For specific FL model aggregation strategies and structured loss functions, we refine our bounds to achieve improved convergence rates with respect to the number of participating clients. Empirical evaluations confirm that our evaluated CMI bounds are non-vacuous and accurately capture the generalization behavior of FL algorithms.

\end{abstract}

\section{Introduction}
Federated learning (FL) has emerged as the most widely adopted distributed learning framework, enabling multiple (potentially spatially distributed) clients to collaboratively train a global model \cite{mcmahan2017communication,yang2019federated,li2020federated,kairouz2021advances}. Unlike centralized learning, FL utilizes the computational resources of all participating clients while avoiding the need to aggregate all data in a single location before training. This not only reduces the resource overhead associated with data collection but also enhances privacy protection.

In an FL framework, a central server distributes a global model to all participating clients for local training. 
% Upon receiving the global model, each participating client trains it locally using its private dataset and then transmits the updated model back to the central server. 
In contrast to centralized learning---where data typically follow a single distribution---local data on different clients often arise from heterogeneous distributions.
% , since data generation is often influenced by specific environmental factors. 
The server then aggregates the updated local models to form a new global model, which is redistributed for further training; this iterative process continues until convergence. 
While the convergence properties of FL have been extensively studied, such as in \cite{karimireddy2020scaffold,li2020federated,mitra2021linear,yun2021minibatch,wang2022unified}, the study of its generalization performance has only recently gained attention \cite{mohri2019agnostic,chen2021theorem,yagli2020information,barnes2022improved,hu2023generalization,huang2023understanding,sefidgaran2022rate,sefidgaran2024lessons,sun2024understanding,gholami2024improved} and remains relatively underexplored. Moreover, most FL generalization studies focus only on the generalization error of participating clients, overlooking the model’s ability to generalize to unseen clients.

Recently, \citet{yuan2022what} introduces a two-level generalization framework, where client distributions are drawn from a meta-distribution. Under this framework, the generalization error is decomposed into two components: the  {\it out-of-sample gap}, measuring the gap between empirical and true risk for participating clients, and the {\it participation gap}, capturing the difference in true risks between participating and non-participating clients. This broader definition extends the commonly used out-of-sample gap and is especially relevant in cross-device FL scenarios, where clients are sampled from a large and dynamic population.

Among many analytic tools, information-theoretic generalization analysis, pioneered by \cite{russo2016controlling,russo2019much,xu2017information}, has the advantage of accounting for both the data distribution and the learning algorithm, often giving tighter generalization bounds compared to distribution-independent or algorithm-independent methods. The original mutual information (MI)-based generalization bounds in \cite{xu2017information,bu2020tightening} have been applied to FL in \cite{yagli2020information,barnes2022improved,zhang2024improving}. However, it is well-known that these MI-based bounds suffer from unboundedness
%, namely MI can be infinite even when 
although the true generalization error can be small \cite{bassily2018learners}. To address this issue, \citet{steinke2020reasoning} proposes the conditional mutual information (CMI) framework, which introduces a  ``supersample'' construction. Here, an auxiliary ``ghost sample'' is drawn alongside the original training sample, and a sequence of Bernoulli random variables determines which data points are used for training. The CMI between these Bernoulli variables and the hypothesis (e.g., model parameters), conditioned on the supersample, serves as a generalization measure. This construction ensures bounded CMI terms due to bounded entropy of Bernoulli variables, inherently leading to tighter generalization bounds compared to the standard MI-based methods, as shown in \citet{haghifam2020sharpened}. The CMI framework has seen multiple refinements, including fast-rate and numerically tight bounds \cite{harutyunyan2021informationtheoretic,hellstrom2022a,Wang2023sampleconditioned,wang2023tighter,wang2024generalization,dong2024rethinking}. Despite the success of these CMI bounds, their effectiveness in capturing FL generalization is still not established.

In this work, we provide the first CMI generalization framework for FL in the two-level generalization setting. Inspired by the supersample-based CMI framework, we introduce an additional ``superclient'' construction, where clients are grouped into two sets, and Bernoulli random variables determine their participation. This symmetric structure enables a CMI-based characterization of the participation gap, while the out-of-sample gap is analyzed using standard supersample-based CMI techniques as in \citet{steinke2020reasoning}. Our main contributions are as follows:
\begin{itemize}
% [leftmargin=.2cm,topsep=-0cm, parsep=-0cm,labelsep=-0.3cm,align=left]
    \item Based on our superclient and supersamples construction, We derive the first CMI-based generalization bound (cf.~Theorem~\ref{thm:main-cmi}) for FL, consisting of two terms: (i) the CMI between the hypothesis and the Bernoulli variable governing client participation, and (ii) the CMI between the hypothesis and the Bernoulli variable governing training data membership. These terms reveal that FL models generalize well to unseen clients when they leak minimal information about training data membership and client participation. Furthermore, we show that differential privacy constraints at both local and global levels naturally imply generalization (cf.~Lemma~\ref{lem:CMI-DP}). In the special case of identical client data distributions, the bound reduces to a single CMI term (cf.~Corollary~\ref{cor:iid-cmi}), aligning with intuition. We also extend our framework to high-probability bounds (cf.~Theorem~\ref{thm:hpb-cmi-bound}) and excess risk analysis (cf.~Theorem~\ref{thm:excess-cmi-bound}).
    \item To improve the bound's convergence rate, we derive evaluated CMI (e-CMI) bounds (cf.~Theorem~\ref{thm:cmi-fast-rate}), using techniques from \citet{wang2023tighter}. These bounds not only recover the best-known FL convergence rate in the low empirical risk regime, as shown in \citet{hu2023generalization}, but also provide significant practical advantages. Unlike hypothesis-based CMI bounds that involve high-dimensional random variables, e-CMI bounds only require computing CMI between one-dimensional random variables, making them much easier to estimate in practice.
    \item We extend our analysis to scenarios where model aggregation strategies (e.g., model averaging) and structured loss functions play a role. Specifically, following \citet{barnes2022improved}, we derive a CMI bound under Bregman loss and show that both the participation gap and out-of-sample gap exhibit fast-rate convergence with respect to the number of participating clients (cf.~Theorem~\ref{thm:average-bregman-cmi}). Further, inspired by \citet{gholami2024improved}, we establish a sharper CMI bound under strongly convex and smooth loss functions (cf.~Theorem~\ref{thm:average-Lipshitz-cmi}), demonstrating even faster decay rates for out-of-sample gap.
    \item To verify our results, we conduct FL experiments using FedAvg \cite{mcmahan2017communication} on two datasets. We show that our e-CMI bounds are numerically non-vacuous and effectively capture the generalization behavior of FL.
\end{itemize}

% \textcolor{red}{The generalization of Federated Learning algorithm should care more about the testing performance for unparticipating clients, otherwise participating clients already have their local models and no need to use a aggregation model to test unseen data unless the clients are homogeneous.}
% \textcolor{red}{The formulation in Equation (1) is especially natural in cross-device FL
% settings, where the number of clients is generally large and modeling clients's local distributions
% as sampled from a meta-distribution is reasonable. This assumption also makes the problem of
% generalization to non-participating client distributions more tractable since samples from the metadistribution are seen during training.}
% \input{RelatedWork}
\section{Preliminaries}
\paragraph{Notations} Throughout this paper, unless otherwise stated, we use capital letters (e.g., $X$) to denote random variables and the corresponding lowercase letters (e.g., $x$) to denote realized values. Let $P_X$ be the distribution of $X$, and $P_{X|Y}$ 
 the conditional distribution of $X$ given $Y$. When conditioning on a specific realization, we write $P_{X|Y=y}$ or simply $P_{X|y}$. We also use $\mathbb{E}_{X}$ and $\mathbb{E}_{P_X}$ interchangeably to denote the expectation over $X\sim P_X$, whenever the underlying distribution is clear. Similarly, $\mathbb{E}_{X|Y=y}$ (or $\mathbb{E}_{X|y}$) denotes the expectation with respect to $X\sim P_{X|Y=y}$.
Let $\mathrm{D_{KL}}(P||Q)$ be the Kullback–Leibler (KL) divergence between $P$ and $Q$. Let $I(X;Y)\triangleq\mathrm{D_{KL}}(P_{X,Y}||P_XP_Y)$ be the mutual information (MI) between $X$ and $Y$, and $I(X;Y|Z)$ the conditional mutual information (CMI) between $X$ and $Y$ given $Z$. Following \citet{negrea2019information}, we define the disintegrated mutual information for $I^z(X;Y)\triangleq \mathrm{D_{KL}}(P_{X,Y|Z=z}||P_{X|Z=z}P_{Y|Z=z})$,
and note that $I(X;Y|Z)=\ex{Z}{I^{Z}(X;Y)}$.

\paragraph{Federated Learning Problem Setup} We consider a federated learning (FL) setup with $K$ {\em participating} clients. Let $\mathcal{Z}$ be the instance space, $\mathcal{W}$ the hypothesis space, and $\mathcal{C}$ a (possibly infinite) set of all {\em potential} clients. Each client $c\in\mathcal{C}$ has a local data distribution $\mu_c$ on $\mathcal{Z}$. Following \citet{yuan2022what}, we assume that there is a distribution $\mathcal{D}$ on $\mathcal{C}$, referred to as ``meta-distribution'' in \citet{yuan2022what}, and that $K$ participating clients $c_1, c_2, \ldots, c_K$ are drawn {independently} from $\mathcal{D}$. We will write client $c_i$ ($i \in [K]$) simply as $i$ for simplicity, e.g., write the distribution $\mu_{c_i}$ of client $i$ as $\mu_i$.
%In particular, for each $i \in [K]$, client $i$ has an unknown distribution $\mu_i$. Note that the data distribution associated with different clients may be the same or not.
%In other words, this indicates that the data used in FL originate from a distribution of distributions.  
% Every client in $\mathcal{C}$ is associated
% with a local distribution supported on $\mathcal{Z}$, e.g., each client $i\in[K]$ has an unknown data distribution $\mu_i$. As in \cite{yuan2022what}, we assume there is a meta-distribution $\mathcal{D}$ supported on client set $\mathcal{C}$, and clients are drawn \textcolor{red}{independently} from $\mathcal{D}$. In other words, data used in FL are sampled from a distribution of distributions.
% Each client $i\in[K]$ has an unknown data distribution $\mu_i$, drawn \textcolor{red}{independently} from a meta-distribution $\mathcal{D}$ defined on the space of all distributions over $\mathcal{Z}$. 
% We refer to $\mathcal{D}$ as the global (client) distribution. 
%For simplicity, assume that each client has the same dataset size $n$. 
Let {$S_i=\{Z_{i,j}\}_{j=1}^n$ with $Z_{i,j}\overset{i.i.d}{\sim} \mu_i$}
denote the training dataset for client $i$, where for simplicity we have assumed that the training sets have the same size across all clients.
In FL, each client $i$ applies a local algorithm $\mathcal{A}_i:\mathcal{Z}^{n}\rightarrow \mathcal{W}$, 
described by the conditional distribution $P_{W_i|S_i}$, 
to produce a local model $W_i = \mathcal{A}_i(S_i)$. Note that $\mathcal{A}_i$ may differ across clients. These local models $\{W_i\}_{i=1}^K$ are then transmitted to a central server, which applies an aggregation algorithm to produce a global model $W$. Hence, an FL algorithm $\mathcal{A}$ is composed of all the local algorithms together with the model aggregation procedure.
Denote $S=\cup_{i\in[K]} S_i$.
%Conceptually, the combined training set of all $K$ clients is $S=\cup_{i\in[K]} S_i$. 
The overall FL algorithm $\mathcal{A}$ is then conceptually characterized by a conditional distribution  $P_{W|S}$, which takes the collective training sample $S$ as input and outputs a hypothesis $W\in\mathcal{W}$. Formally, $\mathcal{A}:\mathcal{Z}^{nK}\rightarrow \mathcal{W}$. We measure the quality of $W$ using a loss function $\ell:\mathcal{W}\times\mathcal{Z}\rightarrow \mathbb{R}_0^+$.

\paragraph{Generalization Error} We adopt the two-level generalization framework introduced in \cite{yuan2022what,hu2023generalization}, where the ultimate goal in FL is set to minimizing the true risk for unseen or non-participating clients. Concretely, for any hypothesis $w\in\mathcal{W}$, the {\it global true risk} (or population risk) is defined as
\[
L_{\mathcal{D}}(w)\triangleq \mathbb{E}_{\mu\sim\mathcal{D}}\ex{Z\sim\mu}{\ell(w,Z)}.
\]
For the $i$-th participating client, we define its true risk and empirical risk as $L_{\mu_i}(w)\triangleq \ex{Z\sim\mu_i}{\ell(w,Z)}$ and $L_{S_i}(w)\triangleq\frac{1}{n}\sum_{j=1}^n\ell(w,Z_{i,j})$, respectively. For the sake of simplicity, we treat each client equally, then we define the {\it average client true risk} for the participating clients as
\[
L_{\mu_{[m]}}(w)\triangleq \frac{1}{K}\sum_{i=1}^K {L_{\mu_i}(w)} =\frac{1}{K}\sum_{i=1}^K\ex{Z'_i\sim\mu_i}{\ell(w,Z'_i)},
\]
where $Z'_i\sim\mu_i$ is an independent testing instance from the $i$-th participating client's distribution $\mu_i$. Likewise, the {\it average client empirical risk} is given by
\[
L_S(w)\triangleq  \frac{1}{K}\sum_{i=1}^K L_{S_i}(w)=\frac{1}{Kn}\sum_{i=1}^K\sum_{j=1}^n\ell(w,Z_{i,j}),
\]
which serves as a practical proxy for the average client true risk of $w$ because data distributions are unknown in real scenarios.
Finally, for an FL algorithm  $\mathcal{A}$, we define its expected generalization error as
\[
\mathcal{E}_{\mathcal{D}}(\mathcal{A})\triangleq \ex{W}{L_{\mathcal{D}}(W)}-\ex{W,S}{L_S(W)}.
\]

\paragraph{CMI Framework for FL} We now adapt the {\it supersample} construction of \citet{steinke2020reasoning} to an FL setting by introducing a {\it superclient} and corresponding supersamples for each client. Let $\tilde{\mu}$ be a $K \times 2$ matrix, namely the superclient, whose entries are drawn independently from the meta-distribution $\mathcal{D}$. We index the columns of $\tilde{\mu}$ by $\{0,1\}$ and denote the $i$-th row by $\tilde{\mu}_i = (\tilde{\mu}_{i,0},\tilde{\mu}_{i,1})$. 
Next, we construct a supersample for each client distribution in $\tilde{\mu}$. For example, the supersample $\widetilde{Z}^{i,0} \in \mathcal{Z}^{n \times 2}$ for $\tilde{\mu}_{i,0}$ has its entries drawn i.i.d. from $\tilde{\mu}_{i,0}$. We index the columns of $\widetilde{Z}^{i,0}$ by $\{0,1\}$, and write $\widetilde{Z}^{i,0}_j = (\widetilde{Z}^{i,0}_{j,0}, \widetilde{Z}^{i,0}_{j,1})$ for the $j$-th row. The same construction applies for all the other client distributions in $\tilde{\mu}$. Thus, we have one superclient matrix $\tilde{\mu}$ and $2K$ supersample matrices in total.
To determine which client distributions participate in the training, we introduce a random variable $V = \{V_i\}_{i=1}^K$, where each $V_i$ is drawn i.i.d. from $\mathrm{Unif}(\{0,1\})$ and is independent of $\tilde{\mu}$. If $V_i = 0$, then the client distribution $\tilde{\mu}_{i,0}$ is included in the participating client set, and $\tilde{\mu}_{i,1}$ is non-participating; if $V_i = 1$, the opposite holds. For each client distribution, we further introduce $U^{i,b} = \{U^{i,b}_j\}_{j=1}^n$ (with $b\in\{0,1\}$), where each $U^{i,b}_j \overset{\mathrm{i.i.d.}}{\sim} \mathrm{Unif}(\{0,1\})$ is independent of $\widetilde{Z}^{i,b}$. These variables specify which column in the supersample is used for training versus testing. For instance, when $U^{i,0}_j = 0$, $\widetilde{Z}^{i,0}_{j,0}$ is part of the training set, and $\widetilde{Z}^{i,0}_{j,1}$ is used for local testing; if $U^{i,0}_j = 1$, these roles are reversed.
Let $\overline{V}_i = 1 - V_i$. The set of participating client distributions is then $\tilde{\mu}_V=\{\tilde{\mu}_{i,V_i}\}_{i=1}^K$, and the set of corresponding supersamples is dentoed by $\widetilde{Z}^V=\{\widetilde{Z}^{i,V_i}\}_{i=1}^K$,
while the non-participating client distributions are $\tilde{\mu}_{\overline{V}} = \{\tilde{\mu}_{i,\overline{V}_i}\}_{i=1}^K$ with supersamples $\widetilde{Z}^{\overline{V}}$. Similarly, for each participating distribution $\tilde{\mu}_{i,b}$, define $\overline{U}^{i,b}_j = 1 - U^{i,b}_j$. Its training sample (i.e. $S_i$) is 
$
\widetilde{Z}^{i,b}_{U^{i,b}} 
= 
\{\widetilde{Z}^{i,b}_{j, U^{i,b}_j}\}_{j=1}^n,
$
and the corresponding test sample is
$
\widetilde{Z}^{i,b}_{\overline{U}^{i,b}} 
= 
\{\widetilde{Z}^{i,b}_{j, \overline{U}^{i,b}_j}\}_{j=1}^n
$. 

%Although we construct supersamples for both participating and non-participating clients for convenience, 
Notably, the supersamples for the non-participating clients, although well defined, are actually irrelevant, as none of those data are used in training.
To improve readability, Appendix~\ref{sec:notation-sum} provides a visualization of our superclient and supersample construction (cf.~Figure~\ref{fig:superclient}), along with a summary of notations (cf.~Table~\ref{tab:notations}).
%the variables $U_j^{i,b}$ are actually irrelevant for non-participating clients since none of their data are used in training.

% To enable the information-theoretic analysis of the two-level generalization setting, a two-step derivation is adopted by using the expected joint population risk as an auxiliary term.

% Additionally, 
% % recall that $W=\mathcal{A}(S)=\mathcal{A}(\widetilde{Z}_{U})$. W
% we let $L_{i,0}\triangleq {\ell(W,\widetilde{Z}_{i,0})}$ and define $L_{i,1}$ similarly. 
% % Following \cite{wang2023tighter}, we use 
% % $L_i=(L_{i,0},L_{i,1})$ to denote the loss pair in the $i$-th row and 
% Let $\Delta L_i=L_{i,1}-L_{i,0}$ 
% % to denote 
% be the loss difference in the the $i$-th row of $\widetilde{Z}$. To avoid complicated subscripts, we also use superscripts $+$ and $-$ to replace the subscripts $0$ and $1$, 
% % in our notations, 
% respectively, namely, $\widetilde{Z}^+_{i}=\widetilde{Z}_{i,0}$, $\widetilde{Z}^-_{i}=\widetilde{Z}_{i,1}$, $L^+_{i}=L_{i,0}$ and $L^-_{i}=L_{i,1}$. Furthermore, we define $G_i\triangleq(-1)^{{U}_i}\Delta L_i$, i.e., the testing loss minus the training loss at position $i$. Clearly, $\frac{1}{n}\sum_{i=1}^n G_i$ is an unbiased estimator of $\mathcal{E}_\mu(\mathcal{A})$. In other words, $\mathcal{E}_\mu(\mathcal{A})=\frac{1}{n}\sum_{i=1}^n \ex{\Delta L_i,U_i}{G_i}$.

\section{Hypothesis-based CMI Bound}
\label{sec:cmi-bound}
As previously discussed, generalization in FL involves two levels: (i) generalizing from participating clients to unseen clients, and (ii) generalizing from the training data to unseen data of the participating clients. Following \cite{yuan2022what,hu2023generalization}, we separate these two levels of generalization through the following decomposition:
\begin{align}
    \label{eq:gen-decomposition}
    \mathcal{E}_{\mathcal{D}}(\mathcal{A})=&\underbrace{\ex{W}{L_{\mathcal{D}}(W)}-\ex{W,\mu_{[K]}}{L_{\mu_{[K]}}(W)}}_{\mathcal{E}_{PG}(\mathcal{A}):\textit{Participation Gap}}\notag\\
    & \, + \underbrace{\ex{W,\mu_{[K]}}{L_{\mu_{[K]}}(W)}-\ex{W,S}{L_S(W)}}_{\mathcal{E}_{OG}(\mathcal{A}):\textit{Out-of-Sample Gap}}.
\end{align}

The {\it participation gap}, denoted as $\mathcal{E}_{PG}(\mathcal{A})$, quantifies the difference in test performance between non-participating and participating clients. The {\it out-of-sample gap} (also called participation error in \cite{hu2023generalization}), denoted as $\mathcal{E}_{OG}(\mathcal{A})$,  
represents the average of the local generalization gaps over the $K$ participating clients.  Note that most existing FL generalization studies focus primarily on $\mathcal{E}_{OG}(\mathcal{A})$ \cite{yagli2020information,barnes2022improved,chor2023more,sefidgaran2022rate,sefidgaran2024lessons,sun2024understanding}.

Under our construction of the superclient and supersamples in FL, the following lemma will be handy in our analysis.
\begin{lem}
    \label{lem:cmi-symmetric}
    % For any $i\in[K]$ 
    % and $b\in\{0,1\}$, let $\gamma(w,\tilde{z}^{i,b}_{\bar{u}^{i,b}})=\frac{1}{n}\sum_{j=1}^n\ell(w,\tilde{z}^{i,b}_{j,\bar{u}^{i,b}_j})$, i.e. the testing error of client $\tilde{\mu}^{i,b}$. Then,
    The participation gap $\mathcal{E}_{PG}(\mathcal{A})$ can be rewritten as
    \begin{align*}
    % \label{eq:pg-reformu}
         % \frac{1}{K}\sum_{i=1}^K\ex{\widetilde{Z}^i,U^i,W,V_i}{(-1)^{V_i}\pr{\gamma(W,\widetilde{Z}^{i,1}_{\overline{U}^{i,1}})-\gamma(W,\widetilde{Z}^{i,0}_{\overline{U}^{i,0}})}},
         \frac{1}{K}\sum_{i=1}^K\ex{}{(-1)^{V_i}\pr{\ell(W,\widetilde{Z}^{i,1}_{1,\overline{U}_1^{i,1}})-\ell(W,\widetilde{Z}^{i,0}_{1,\overline{U}_1^{i,0}})}},
         % \mathcal{E}_{PG}(\mathcal{A})=
         % \frac{1}{K}\sum_{i=1}^K&\mathbb{E}_{\widetilde{Z}^i,U^i,W,V_i}\left[(-1)^{V_i}\left(\gamma(W,\widetilde{Z}^{i,1}_{\overline{U}^{i,1}})\right.\right.\notag\\
         % &\qquad\qquad \left.\left.-\gamma(W,\widetilde{Z}^{i,0}_{\overline{U}^{i,0}})\right)\right],
    \end{align*}
    % \begin{align}
    % \label{eq:pg-reformu}
    %      \mathcal{E}_{PG}(\mathcal{A})=\frac{1}{K}\sum_{i=1}^K\ex{\widetilde{Z}^i,U^i,W,V_i}{(-1)^{V_i}\pr{\gamma(W,U^{i,1},\widetilde{Z}^{i,1})-\gamma(W,U^{i,0},\widetilde{Z}^{i,0})}},
    % \end{align}
    % \ex{W}{L_{\mathcal{D}}(W)}-\ex{W}{L_{\mu_{[K]}}(W)}=\frac{1}{K}\sum_{i=1}^K\ex{\tilde{\mu},W,V_i}{(-1)^{V_i}\pr{L_{\tilde{\mu}_{i,1}}(W)-L_{\tilde{\mu}_{i,0}}(W)}}.
    where the expectation is taken over $P_{\widetilde{Z}^i,U^i,W,V_i}$, $\widetilde{Z}^i=(\widetilde{Z}^{i,0},\widetilde{Z}^{i,1})$ and $U^i=(U^{i,0},U^{i,1})$. 
    
    The our-of-sample gap $\mathcal{E}_{OG}(\mathcal{A})$ can be rewritten as
     \begin{align*}
    % \label{eq:og-reformu}
    % \frac{1}{Kn}\sum_{i=1}^K\sum_{j=1}^n\ex{\widetilde{Z}^i\!,\!V_i,U^{i,V_i}_j\!,\! W}{(-1)^{U^{i,V_i}_j}\!\!\!\pr{\ell(W,\widetilde{Z}^{i,V_i}_{j,1})\!\!-\!\ell(W,\widetilde{Z}^{i,V_i}_{j,0})}}.
    \frac{1}{Kn}\sum_{i=1}^K\sum_{j=1}^n\ex{}{(-1)^{U^{i,V_i}_j}\pr{\ell(W,\widetilde{Z}^{i,V_i}_{j,1})-\ell(W,\widetilde{Z}^{i,V_i}_{j,0})}},
         % \mathcal{E}_{OG}(\mathcal{A})=\frac{1}{Kn}\sum_{i=1}^K\sum_{j=1}^n&\mathbb{E}_{\widetilde{Z}^i,V_i,U^{i,V_i}_j,W}\left[(-1)^{U^{i,V_i}_j}\left(\ell(W,\widetilde{Z}^{i,V_i}_{j,1})\right.\right.\notag\\
         % &\qquad \left.\left.-\ell(W,\widetilde{Z}^{i,V_i}_{j,0})\right)\right].
    \end{align*}
     where the expectation is taken over $P_{\widetilde{Z}^i,V_i,U^{i,V_i}_j,W}$.
\end{lem}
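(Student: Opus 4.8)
The plan is to prove the two identities separately. In each case I would first rewrite the two population quantities that make up the gap as expectations of single-instance losses evaluated on the \emph{held-out} columns of the relevant supersamples, and then collapse the resulting difference into the signed form using the elementary identity $(-1)^{b}(a_1-a_0)=a_{\bar b}-a_b$, valid for every $b\in\{0,1\}$.

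For the participation gap, the starting observation is that $W=\mathcal{A}(S)$ depends on the data only through the participating training columns. Consequently the non-participating distributions $\tilde{\mu}_{i,\overline{V}_i}$ are i.i.d.\ draws from $\mathcal{D}$ that are independent of $W$, so for each row the held-out non-participating instance has expected loss equal to $\ex{W}{L_{\mathcal{D}}(W)}$; averaging over $i\in[K]$ leaves $\ex{W}{L_{\mathcal{D}}(W)}$ unchanged. Conditioned on the participating distribution $\tilde{\mu}_{i,V_i}$, the held-out participating column is likewise a fresh i.i.d.\ sample, independent of the training column and hence of $W$, so its expected loss equals $\ex{}{L_{\mu_i}(W)}$; a single row ($j=1$) suffices because any row reproduces the true risk in expectation, and averaging recovers $\ex{W,\mu_{[K]}}{L_{\mu_{[K]}}(W)}$. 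Subtracting the two representations expresses $\mathcal{E}_{PG}(\mathcal{A})$ as the average over $i$ of (held-out non-participating loss) minus (held-out participating loss); a two-case check on $V_i\in\{0,1\}$ then rewrites this integrand exactly as $(-1)^{V_i}\pr{\ell(W,\widetilde{Z}^{i,1}_{1,\overline{U}_1^{i,1}})-\ell(W,\widetilde{Z}^{i,0}_{1,\overline{U}_1^{i,0}})}$, which is the claimed form.

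The out-of-sample gap follows the same template, now comparing the held-out column against the training column of the \emph{same} participating client. Using the construction identity $Z_{i,j}=\widetilde{Z}^{i,V_i}_{j,U^{i,V_i}_j}$, the average empirical risk is $\ex{W,S}{L_S(W)}=\frac{1}{Kn}\sum_{i,j}\ex{}{\ell(W,\widetilde{Z}^{i,V_i}_{j,U^{i,V_i}_j})}$, while the row-by-row conditional-independence argument gives $\ex{W,\mu_{[K]}}{L_{\mu_{[K]}}(W)}=\frac{1}{Kn}\sum_{i,j}\ex{}{\ell(W,\widetilde{Z}^{i,V_i}_{j,\overline{U}^{i,V_i}_j})}$, since the held-out column is a fresh draw from $\tilde{\mu}_{i,V_i}$ independent of $W$. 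Their difference is the average of (held-out loss) minus (training loss), and applying the signed identity with $b=U^{i,V_i}_j$ converts it into $\frac{1}{Kn}\sum_{i,j}\ex{}{(-1)^{U^{i,V_i}_j}\pr{\ell(W,\widetilde{Z}^{i,V_i}_{j,1})-\ell(W,\widetilde{Z}^{i,V_i}_{j,0})}}$, exactly as stated.

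The step I expect to require the most care is the conditional-independence justification behind every replacement of a held-out column's expected loss by a true risk. One must verify that, conditioned on the appropriate client distribution, the held-out column is genuinely independent of $W$: the model $W$ is measurable with respect to the participating training columns (those singled out by $V$ and $U$), whereas within each supersample the two columns are exchangeable i.i.d.\ draws, so the mask randomness assigns one column to training and the complementary one to testing without opening a back-door dependence between $W$ and the test instance. Making this precise---tracking exactly which variables $W$ is permitted to depend on, and checking that neither $V$ nor $U$ smuggles in a coupling with the test columns---is the crux; once it is secured, the passage to the signed form is purely the elementary Bernoulli identity recorded above.
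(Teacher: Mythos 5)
Your proposal is correct and follows essentially the same route as the paper's proof: the paper likewise uses the symmetry of the superclient/supersample construction to replace each true risk by the expected loss on a held-out instance (choosing $j=1$ for the participation gap, since any row works), with the identity $(-1)^{b}(a_1-a_0)=a_{\bar b}-a_b$ converting the (non-participating minus participating) and (held-out minus training) differences into the signed forms. The only cosmetic difference is ordering---the paper applies the sign flip at the level of client risks $L_{\tilde{\mu}_{i,1}}(W)-L_{\tilde{\mu}_{i,0}}(W)$ before substituting held-out instances, while you substitute first and flip after---and the conditional-independence point you flag as the crux (that $W$ is measurable with respect to the participating training columns only, so each held-out entry is a fresh draw independent of $W$) is exactly the fact the paper's proof uses implicitly.
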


\begin{rem}
Notably, by the symmetric property of superclient and supersamples, Lemma~\ref{lem:cmi-symmetric} indicates that 
% $\frac{1}{K}\sum_{i=1}^K(-1)^{V_i}\pr{\gamma(W,\widetilde{Z}^{i,1}_{\overline{U}^{i,1}})-\gamma(W,\widetilde{Z}^{i,0}_{\overline{U}^{i,0}})}$ 
$\frac{1}{K}\sum_{i=1}^K{(-1)^{V_i}\pr{\ell(W,\widetilde{Z}^{i,1}_{1,\overline{U}_1^{i,1}})-\ell(W,\widetilde{Z}^{i,0}_{1,\overline{U}_1^{i,0}})}}$
and $\frac{1}{Kn}\sum_{i=1}^K\sum_{j=1}^n(-1)^{U^{i,V_i}_j}\pr{\ell(W,\widetilde{Z}^{i,V_i}_{j,1})-\ell(W,\widetilde{Z}^{i,V_i}_{j,0})}$ are unbiased estimators for $\mathcal{E}_{PG}(\mathcal{A})$ and $\mathcal{E}_{OG}(\mathcal{A})$, respectively. We remark that the subscript index ``$1$'' in $\widetilde{Z}^{i,1}_{1,\overline{U}_1^{i,1}}$ and $\widetilde{Z}^{i,0}_{1,\overline{U}_1^{i,0}}$ can be replaced with any other $j\in[n]$, as the participation gap does not depend on the order of elements in the testing datasets. Furthermore, if the algorithm is symmetric---meaning $W$ does not depend on the order of elements in the local training sets, a common assumption used in stability-based generalization analysis \cite{bousquet2002stability}---then the averaging over $n$ data points in 
% Eq.~(\ref{eq:og-reformu}) 
$\mathcal{E}_{OG}(\mathcal{A})$
can also be omitted. 
% In addition, it may seem tempting to move the average over $n$ data points (i.e. $\frac{1}{n}\sum_{j=1}^n$) outside the expectation in Eq.~(\ref{eq:pg-reformu}), similar to what was done in Eq.~(\ref{eq:og-reformu}). While this may superficially enable a convergence rate of $\mathcal{O}(\frac{1}{Kn})$ in subsequent bounds (e.g., Theorem~\ref{thm:main-cmi} and Corollary~\ref{cor:sample-cmi}), we deliberately avoid doing so for the following reason: we make the average over $n$ data in Eq.~(\ref{eq:og-reformu}) because the algorithm may not be symmetric\footnote{We remark that the symmetric assumption of algorithm is often applied in the stability-based generalization analysis, e.g., \citet{bousquet2002stability}. If the algorithm is indeed symmetric, the average over $n$ data in Eq.~(\ref{eq:og-reformu}) can also be removed.} with respect to training data, namely the order of elements in the local training set contributes differently to the $W_i$, result in different expected value of individual empirical risk at position $i$, and there may not be a good reason to consider the testing risk varies across the order of data in testing set.
    % \textcolor{red}{Regarding Eq.~(\ref{eq:pg-reformu}); Not explicitly construct supersamples for non-participating clients.}
\end{rem}

\subsection{First CMI Bound}
The bounding steps for $\mathcal{E}_{\mathcal{D}}(\mathcal{A})$ closely follow those in standard centralized analysis, once Lemma~\ref{lem:cmi-symmetric} has been established.

We are now in a position to present the first CMI bound for FL.

\begin{thm}
\label{thm:main-cmi}
    Assume $\ell(\cdot,\cdot)\in[0,1]$, the following bound holds
    \begin{align*}
    \abs{\mathcal{E}_{\mathcal{D}}(\mathcal{A})}&\leq\frac{1}{K}\sum_{i=1}^K\mathbb{E}_{\widetilde{Z}^i,U^i}\sqrt{2I^{\widetilde{Z}^i,U^i}(W;V_i)}\notag\\
    &\quad +\frac{1}{Kn}\sum_{i=1}^K\sum_{j=1}^n\mathbb{E}_{\widetilde{Z}^i,V_i}\sqrt{2I^{\widetilde{Z}^i,V_i}(W;U^{i,V_i}_j)}.
    \end{align*}
\end{thm}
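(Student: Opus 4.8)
The plan is to start from the two-level decomposition in \eqref{eq:gen-decomposition}, $\mathcal{E}_{\mathcal{D}}(\mathcal{A})=\mathcal{E}_{PG}(\mathcal{A})+\mathcal{E}_{OG}(\mathcal{A})$, bound each gap separately, and recombine through the triangle inequality $\abs{\mathcal{E}_{\mathcal{D}}(\mathcal{A})}\le\abs{\mathcal{E}_{PG}(\mathcal{A})}+\abs{\mathcal{E}_{OG}(\mathcal{A})}$. For each gap I will use its unbiased symmetric estimator from Lemma~\ref{lem:cmi-symmetric}, so that the problem reduces to controlling, for a single client $i$ (and a single index $j$ in the out-of-sample case), a quantity of the form $\ex{}{(-1)^{B}\Delta}$, where $B$ is a fair-coin Bernoulli and $\Delta$ is a bounded loss difference.

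For the participation gap, fix $i$ and set $\Delta_i=\ell(W,\widetilde{Z}^{i,1}_{1,\overline{U}_1^{i,1}})-\ell(W,\widetilde{Z}^{i,0}_{1,\overline{U}_1^{i,0}})$, which lies in $[-1,1]$ since $\ell\in[0,1]$; the summand is then $\ex{}{(-1)^{V_i}\Delta_i}$. The key structural observation is that after disintegrating over $(\widetilde{Z}^i,U^i)$, the value $\Delta_i$ is a deterministic function of $W$ alone, while the sign $(-1)^{V_i}$ is carried by $V_i$, which remains uniform on $\{0,1\}$ under this conditioning because $V_i$ is independent of the superclient and supersample construction. Consequently, under the product measure $P_{W|\widetilde{Z}^i,U^i}\otimes P_{V_i|\widetilde{Z}^i,U^i}$ the two factors decouple and the expectation vanishes, $\ex{}{(-1)^{V_i}}\ex{}{\Delta_i}=0$. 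Thus the participation gap is exactly the discrepancy between the joint expectation and the zero product expectation, which is precisely what mutual information penalizes.

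The engine is the disintegrated sub-Gaussian mutual-information inequality (a Donsker--Varadhan argument): for any realization $z$ and any $g$ that is $\sigma$-sub-Gaussian under $P_{W|z}\otimes P_{V_i|z}$, $\abs{\ex{P_{W,V_i|z}}{g}-\ex{P_{W|z}\otimes P_{V_i|z}}{g}}\le\sqrt{2\sigma^2 I^{z}(W;V_i)}$. Taking $g=(-1)^{V_i}\Delta_i\in[-1,1]$, Hoeffding's lemma gives $\sigma=1$, so each disintegrated summand is at most $\sqrt{2 I^{z}(W;V_i)}$; the boundedness of $\ell$ enters only through this $\sigma=1$, which combined with the standard $\sqrt{2\sigma^2 I^z}$ produces exactly the $\sqrt{2 I^z}$ in the statement. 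Pulling the absolute value inside $\ex{\widetilde{Z}^i,U^i}{\cdot}$ by Jensen and averaging over $i$ yields $\abs{\mathcal{E}_{PG}(\mathcal{A})}\le\frac{1}{K}\sum_{i=1}^K\ex{\widetilde{Z}^i,U^i}{\sqrt{2 I^{\widetilde{Z}^i,U^i}(W;V_i)}}$.

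The out-of-sample gap follows from the identical template with the two Bernoulli families interchanged: fix $(i,j)$, set $\Delta'_{i,j}=\ell(W,\widetilde{Z}^{i,V_i}_{j,1})-\ell(W,\widetilde{Z}^{i,V_i}_{j,0})\in[-1,1]$, disintegrate over $(\widetilde{Z}^i,V_i)$, observe that $\Delta'_{i,j}$ is again a function of $W$ alone while $(-1)^{U^{i,V_i}_j}$ is a conditionally uniform fair coin, and apply the same sub-Gaussian bound with $\sigma=1$ to obtain $\sqrt{2 I^{\widetilde{Z}^i,V_i}(W;U^{i,V_i}_j)}$; averaging over $(i,j)$ gives the second term. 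I expect the main obstacle to be bookkeeping rather than analysis: under each disintegration one must verify carefully that the loss difference is measurable with respect to $W$ only and that the relevant sign variable stays an independent fair coin, since it is precisely this decoupling that makes the product-measure expectation vanish and licenses replacing the joint expectation by a pure mutual-information penalty.
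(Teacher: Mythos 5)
Your proposal is correct and takes essentially the same route as the paper's proof: it uses the decomposition of $\mathcal{E}_{\mathcal{D}}(\mathcal{A})$ into $\mathcal{E}_{PG}(\mathcal{A})+\mathcal{E}_{OG}(\mathcal{A})$ together with Lemma~\ref{lem:cmi-symmetric}, conditions on $(\widetilde{Z}^i,U^i)$ (resp.\ $(\widetilde{Z}^i,V_i)$) so that the sign variable is an independent fair coin making the product-measure expectation vanish, and then applies the disintegrated Donsker--Varadhan sub-Gaussian inequality with Hoeffding's lemma ($\sigma=1$ for the $[-1,1]$-valued loss differences), Jensen's inequality, and the triangle inequality. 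The only cosmetic difference is that you invoke the sub-Gaussian mutual-information bound as a packaged lemma, whereas the paper rederives it by explicitly optimizing the Donsker--Varadhan parameter $t$.
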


 The bound consists of two terms, each providing an upper bound for $\mathcal{E}_{PG}(\mathcal{A})$ and $\mathcal{E}_{OG}(\mathcal{A})$, respectively. Notably, both CMI terms in the bound preserve the properties of the standard CMI from \citet{steinke2020reasoning}. For example, $I^{\tilde{z}^i,u^i}(W;V_i)\leq H(V^i)=\log{2}$ and $I^{\tilde{z}^i,v_i}(W;U^{i,V_i}_j)\leq H(U^{i,V_i}_j) = \log{2}$, where $H(\cdot)$ denotes the Shannon entropy \cite{thomas2006elements}. Consequently, unlike previous MI-based FL generalization bounds \cite{yagli2020information,barnes2022improved,zhang2024improving} that can grow unbounded (see \citet{bassily2018learners}), our CMI-based bound in Theorem~\ref{thm:main-cmi} is strictly bounded.

Furthermore, if both CMI terms are zero, then the algorithm's output is independent of its input (i.e., independent of the chosen participating clients and their local training data). In contrast, if they achieve their upper bounds, the algorithm reveals all membership information about the clients and their local training sets. Specifically, the first CMI term quantifies how well one can infer the membership of the ``participating client set'' from the output hypothesis when $(\tilde{z}^i,u^i)$ are known, whereas the second term quantifies how well one can infer the membership of the ``local training set'' when $(\tilde{z}^i,v_i)$ are given. Clearly, if the FL algorithm $\mathcal{A}$ enforces privacy constraints that make these inferences (i.e. determining $V_i$ and $U_j^{i,V_i}$) difficult, then both CMI terms remain small, resulting in a small generalization error; in other words, privacy implies generalization in FL. Inspired by \cite{cuff2016differential,barnes2022improved}, when FL algorithms are differentially private \cite{dwork2006our,dwork2006calibrating}, this connection is formally established in the following lemma.

\begin{lem}
\label{lem:CMI-DP}
    If each local algorithm $\mathcal{A}_i$ is $\epsilon_i$-differentially private,
    % (in the sense that $\frac{\mathrm{Pr}(w_i|s_i)}{\mathrm{Pr}(w_i|s'_i)}\leq e^{\epsilon_i}$, where $s$ and $s'_i$ are two local dataset differs one element), 
    and the overall FL algorithm $\mathcal{A}$ is $\epsilon'$-differentially private, 
    % (in the sense that $\frac{\mathrm{Pr}(w|s)}{\mathrm{Pr}(w|s^i)}\leq e^{\epsilon'}$, with $s^i$ denoting the aggregated dataset where the $i$-th client’s dataset is replaced), 
    then
    \begin{align*}
        \frac{1}{K}\sum_{i=1}^K\mathbb{E}_{\widetilde{Z}^i,U^i}\sqrt{I^{\widetilde{Z}^i,U^i}(W;V_i)}\leq& \sqrt{\frac{\min\{\epsilon',(e^{\epsilon'}-1)\epsilon'\}}{K}},\\
        % \min\left\{\epsilon',\sum_{i=1}^K\epsilon_i\right\},\\
       % \text{ and } \;
       \frac{1}{n}\sum_{j=1}^n\mathbb{E}_{\widetilde{Z}^i,V_i}\sqrt{I^{\widetilde{Z}^i,V_i}(W;U^{i,V_i}_j)}\leq& \sqrt{\frac{\min\{\epsilon_i,(e^{\epsilon_i}-1)\epsilon_i\}}{n}}.
    \end{align*}
\end{lem}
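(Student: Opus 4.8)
The plan is to bound each of the two disintegrated CMI terms that appear in Theorem~\ref{thm:main-cmi} by a divergence between two neighbouring output laws, convert that divergence into the differential-privacy budget, and then aggregate over clients (resp.\ examples).

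First I would exploit the Bernoulli structure of the membership variables. Since $V_i\sim\mathrm{Unif}(\{0,1\})$ is independent of $(\widetilde{Z}^i,U^i)$, for a fixed realization $(\tilde z^i,u^i)$ the conditional law $P_{W|\tilde z^i,u^i}$ is the equal mixture $\tfrac12 P_{W|\tilde z^i,u^i,V_i=0}+\tfrac12 P_{W|\tilde z^i,u^i,V_i=1}$, so the disintegrated CMI is exactly the Jensen--Shannon divergence $I^{\tilde z^i,u^i}(W;V_i)=\jsd{P_{W|\tilde z^i,u^i,V_i=0}}{P_{W|\tilde z^i,u^i,V_i=1}}$, which is controlled by its two one-sided KL divergences. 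The key observation is that the events $V_i=0$ and $V_i=1$ correspond to running $\mathcal{A}$ on two datasets differing only in the data of client $i$ (its training set drawn from $\tilde\mu_{i,0}$ versus $\tilde\mu_{i,1}$), i.e.\ a client-level neighbouring pair. The same argument applied to $U^{i,V_i}_j$ shows $I^{\tilde z^i,v_i}(W;U^{i,V_i}_j)$ is a JSD between outputs on datasets differing in a single training example of client $i$, an example-level neighbouring pair.

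Next I would convert these divergences into the privacy budgets. For the out-of-sample term, changing $U^{i,V_i}_j$ affects the global model $W$ only through the local model $W_i$, so by the data-processing inequality $I^{\tilde z^i,v_i}(W;U^{i,V_i}_j)\le I^{\tilde z^i,v_i}(W_i;U^{i,V_i}_j)$; I then invoke the DP-to-information conversion in the spirit of \citet{cuff2016differential,barnes2022improved}, namely that an $\epsilon$-DP map has all neighbouring KL divergences (hence the induced JSD) bounded by $\min\{\epsilon,(e^\epsilon-1)\epsilon\}$. Thus the local $\epsilon_i$-DP guarantee controls the example-level term while the global $\epsilon'$-DP guarantee controls the client-level term, which is what produces the two different constants $\min\{\epsilon_i,(e^{\epsilon_i}-1)\epsilon_i\}$ and $\min\{\epsilon',(e^{\epsilon'}-1)\epsilon'\}$.

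Finally I would assemble the stated averages. Using concavity of the square root (Jensen) I pull the expectation over $(\widetilde{Z}^i,U^i)$ (resp.\ $(\widetilde{Z}^i,V_i)$) inside, turning the per-realization divergence bounds into bounds on the averaged quantities $I(W;V_i\mid \widetilde{Z}^i,U^i)$ and $I(W_i;U^{i,V_i}_j\mid \widetilde{Z}^i,V_i)$; a Cauchy--Schwarz step across the $K$ clients (resp.\ the $n$ examples) then converts the $\tfrac1K\sum_i$ and $\tfrac1n\sum_j$ prefactors into the $\tfrac1{\sqrt K}$ and $\tfrac1{\sqrt n}$ decay, yielding the advertised $\sqrt{\,\cdot\,/K}$ and $\sqrt{\,\cdot\,/n}$ forms. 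The main obstacle I anticipate is precisely this DP-to-information step: differential privacy is a worst-case, $\max$-divergence statement whereas the CMI terms are \emph{averaged} KL/JSD quantities, so one must carefully pass from the pointwise neighbouring bound $\min\{\epsilon,(e^\epsilon-1)\epsilon\}$ to the averaged information and organise the summation so that the per-neighbour budget aggregates into the $1/\sqrt{K}$ and $1/\sqrt{n}$ rates rather than a dimension-independent constant.
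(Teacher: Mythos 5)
Your reduction of each disintegrated CMI term to a Jensen--Shannon divergence between the two conditional output laws is sound, and the per-pair DP conversion (neighbouring KL, hence JSD, bounded by $\min\{\epsilon,(e^\epsilon-1)\epsilon\}$, in the spirit of Cuff--Yu and Barnes et al.) is the right tool at the level of a single bit. The genuine gap is the aggregation step, which you flag as an anticipated obstacle but never resolve --- and your proposed mechanism (Jensen plus Cauchy--Schwarz across the $K$ clients, resp.\ the $n$ samples) is arithmetically incapable of producing the stated rates. If each per-client term satisfies $I^{\tilde z^i,u^i}(W;V_i)\le B'$ with $B'=\min\{\epsilon',(e^{\epsilon'}-1)\epsilon'\}$, then concavity of the square root gives only
\begin{align*}
\frac{1}{K}\sum_{i=1}^K\mathbb{E}_{\widetilde{Z}^i,U^i}\sqrt{I^{\widetilde{Z}^i,U^i}(W;V_i)}\;\le\;\sqrt{\frac{1}{K}\sum_{i=1}^K I(W;V_i\,|\,\widetilde{Z}^i,U^i)}\;\le\;\sqrt{B'},
\end{align*}
a dimension-independent constant, not $\sqrt{B'/K}$; the identical failure occurs for the $n$ sample bits. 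The $1/\sqrt{K}$ and $1/\sqrt{n}$ decays require that the \emph{total} information carried by all $K$ participation bits (resp.\ all $n$ membership bits of client $i$) be controlled by a \emph{single} privacy budget. The paper achieves exactly this: by the chain rule and independence of the Bernoulli variables, $\sum_{i=1}^K I(W;V_i\,|\,\widetilde{Z},U)\le I(W;V\,|\,\widetilde{Z},U)\le I(W;S)$ (since $S$ is a function of $(V,\widetilde{Z},U)$ and $W$ depends on these only through $S$), and per client $\sum_{j=1}^n I(W;U^{i,V_i}_j\,|\,\widetilde{Z}^i,V_i)\le I(W_i;U^{i,V_i}\,|\,\widetilde{Z}^i,V_i)\le I(W_i;S_i)$ via the Markov chain $U^{i,V_i}-W_i-W$; the DP-to-MI conversion is then applied \emph{once}, to $I(W;S)$ and to $I(W_i;S_i)$, and the resulting budgets are shared across the $K$ (resp.\ $n$) terms. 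Your per-pair JSD bounds point in the wrong direction for this: they bound each summand by the full budget separately and give no handle on the joint CMI.

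A secondary problem concerns the participation term specifically: you invoke the global $\epsilon'$-DP guarantee on a ``client-level neighbouring pair'', but flipping $V_i$ replaces all $n$ records of client $i$ simultaneously, whereas Definition~C.1 of the paper is record-level; a literal conversion would incur a group-privacy blow-up to $n\epsilon'$ rather than $\epsilon'$. The paper avoids framing this as a neighbouring-pair comparison at all by routing the client-level information through $I(W;S)$. (Its own bound $I(W;S)\le\min\{\epsilon',(e^{\epsilon'}-1)\epsilon'\}$, following Barnes et al., compares $P_{W|s}$ against $\inf_{s'}P_{W|s'}$ and so also stretches the record-level definition, but within the paper's convention that is the intended argument; your route would have to confront the group-privacy inflation explicitly on top of the aggregation issue.) To repair the proposal, replace the per-pair conversion and Cauchy--Schwarz step with the chain-rule reductions to $I(W;S)$ and $I(W_i;S_i)$ described above.
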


% It is worth mentioning that we apply a client-level privacy constraint on $\mathcal{A}$ rather than a data-level constraint. One can view each local dataset $s_i$ as a single ``data point'' (e.g., the concatenation of $n$ examples), so bounding the first CMI term parallels the approach used to bound the second. Consequently, 
\begin{rem}
    \label{rem:cmi-privacy}
    Combining Lemma~\ref{lem:CMI-DP} with Theorem~\ref{thm:main-cmi} implies the bound $\abs{\mathcal{E}_{\mathcal{D}}(\mathcal{A})}\leq \mathcal{O}\pr{\sqrt{\frac{\epsilon'}{K}}+\frac{1}{K}\sum_{i=1}^K\sqrt{\frac{\epsilon_i}{n}}}$, showing that $\mathcal{A}$ admits a valid generalization guarantee under differential privacy. We remark that it is possible to obtain $\epsilon'$ using all $\epsilon_i$ for certain FL algorithm, as studied in \citet{kairouz2015composition}.
If we let $\epsilon_i=\epsilon$ for all $i$, then we obtain $\abs{\mathcal{E}_{\mathcal{D}}(\mathcal{A})}\leq \mathcal{O}\pr{\sqrt{\frac{\epsilon'}{K}}+\sqrt{\frac{\epsilon}{n}}}$. \citet{barnes2022improved} also provides a privacy-based result bounding $\mathcal{E}_{OG}(\mathcal{A})$, showing that when certain model aggregation strategies and loss functions are used, $\mathcal{E}_{OG}(\mathcal{A})\leq\mathcal{O}\pr{\frac{1}{K}\sqrt{\frac{\epsilon}{n}}}$. We will discuss their setting in Section~\ref{sec:Application}.
\end{rem}
% In addition, \citet{barnes2022improved} presents a {\it communication constraint}-based result. Specifically, if each client $i$ can only transit $B_i$ bits of information to the central server, so each $W_i$ can only take $2^{B_i}$ distinct possible value after quantization. In this case, by data-processing inequality, $I^{\widetilde{Z}^i,U^i}(W;V_i)\leq I^{\widetilde{Z}^i,U^i}(W_i;V_i)\leq H(W_i)\leq B_i\log{2}$ and $I^{\widetilde{Z}^i,V_i}(W;U^{i,V_i}_j)\leq I^{\widetilde{Z}^i,V_i}(W_i;U^{i,V_i}_j)\leq B_i\log{2}$, so $\abs{\mathcal{E}_{\mathcal{D}}(\mathcal{A})}\leq \sqrt{\frac{2B_i}{}}$

Let $\widetilde{Z}=\{\widetilde{Z}^i\}_{i=1}^K$ be the collection of all supersamples and $U = \{U^{i}\}_{i=1}^K$ be the collection of all Bernoulli variables for determining sample usage. The following CMI bound follows from Jensen’s inequality and the chain rule of MI.
\begin{cor}
\label{cor:sample-cmi}
    Assume $\ell(\cdot,\cdot)\in[0,1]$, then the following bound holds
    \[
    \abs{\mathcal{E}_{\mathcal{D}}(\mathcal{A})}\leq\sqrt{\frac{2I(W;V|\widetilde{Z},U)}{K}}+\sqrt{\frac{2I(W;U|\widetilde{Z},V)}{Kn}}.
    \]
\end{cor}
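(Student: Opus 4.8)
The plan is to start from the two-term bound of Theorem~\ref{thm:main-cmi} and reduce each summand to a single conditional mutual information through two applications of Jensen's inequality, followed by a chain-rule argument. For the participation term I first use concavity of the square root to move the expectation inside each radical: since $I(W;V_i\mid\widetilde{Z}^i,U^i)=\mathbb{E}_{\widetilde{Z}^i,U^i}I^{\widetilde{Z}^i,U^i}(W;V_i)$ by the disintegration identity, Jensen gives $\mathbb{E}_{\widetilde{Z}^i,U^i}\sqrt{2I^{\widetilde{Z}^i,U^i}(W;V_i)}\le\sqrt{2I(W;V_i\mid\widetilde{Z}^i,U^i)}$. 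A second application of Jensen to the normalized sum over $i$ yields $\frac{1}{K}\sum_{i=1}^K\sqrt{2I(W;V_i\mid\widetilde{Z}^i,U^i)}\le\sqrt{\frac{2}{K}\sum_{i=1}^K I(W;V_i\mid\widetilde{Z}^i,U^i)}$. The identical two steps applied to the out-of-sample term (now over the $Kn$ indices $(i,j)$, with conditioning set $(\widetilde{Z}^i,V_i)$) produce $\sqrt{\frac{2}{Kn}\sum_{i,j}I(W;U^{i,V_i}_j\mid\widetilde{Z}^i,V_i)}$.

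It then remains to upper bound the two sums of local CMIs by the global CMIs $I(W;V\mid\widetilde{Z},U)$ and $I(W;U\mid\widetilde{Z},V)$. For the first, I will invoke the chain rule $I(W;V\mid\widetilde{Z},U)=\sum_{i=1}^K I(W;V_i\mid\widetilde{Z},U,V_{<i})$, where $V_{<i}=(V_1,\dots,V_{i-1})$, and compare termwise. Because the $V_i$ are i.i.d.\ and independent of $(\widetilde{Z},U)$, we have $H(V_i\mid\widetilde{Z},U,V_{<i})=H(V_i)=H(V_i\mid\widetilde{Z}^i,U^i)$, so each CMI equals $H(V_i)$ minus a conditional entropy of $V_i$ given $W$ and the respective side information. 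Since the conditioning set $(\widetilde{Z},U,V_{<i})$ contains $(\widetilde{Z}^i,U^i)$, the principle that conditioning reduces entropy gives $H(V_i\mid W,\widetilde{Z},U,V_{<i})\le H(V_i\mid W,\widetilde{Z}^i,U^i)$, hence $I(W;V_i\mid\widetilde{Z},U,V_{<i})\ge I(W;V_i\mid\widetilde{Z}^i,U^i)$. Summing over $i$ delivers $\sum_{i=1}^K I(W;V_i\mid\widetilde{Z}^i,U^i)\le I(W;V\mid\widetilde{Z},U)$, which combined with the Jensen steps gives the first term of the corollary.

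The analogous bound $\sum_{i,j}I(W;U^{i,V_i}_j\mid\widetilde{Z}^i,V_i)\le I(W;U\mid\widetilde{Z},V)$ for the out-of-sample term is where the main obstacle lies, because the selection bit $U^{i,V_i}_j$ is itself indexed by the random participation variable $V_i$, so the chain rule cannot be applied directly over the fixed collection $U$. I will proceed in two stages. First, since $U^{i,V_i}_j$ is uniform and independent of the supersamples given $V_i$, the leading entropy term in both $I(W;U^{i,V_i}_j\mid\widetilde{Z}^i,V_i)$ and $I(W;U^{i,V_i}_j\mid\widetilde{Z},V)$ equals $\log 2$, and conditioning on the larger set $(\widetilde{Z},V)$ reduces the remaining conditional entropy, giving $I(W;U^{i,V_i}_j\mid\widetilde{Z}^i,V_i)\le I(W;U^{i,V_i}_j\mid\widetilde{Z},V)$. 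Second, I disintegrate over $V$: conditioning on $V=v$ turns $\{U^{i,v_i}_j\}_{i,j}$ into a deterministic sub-collection of $U$, to which the chain-rule-plus-conditioning-reduces-entropy argument from the previous paragraph applies verbatim to yield $\sum_{i,j}I^{v}(W;U^{i,v_i}_j\mid\widetilde{Z})\le I^{v}(W;\{U^{i,v_i}_j\}_{i,j}\mid\widetilde{Z})$, and the non-participating bits $\{U^{i,\overline{v}_i}_j\}_{i,j}$ are reinstated for free because adjoining variables only increases mutual information, so $I^{v}(W;\{U^{i,v_i}_j\}_{i,j}\mid\widetilde{Z})\le I^{v}(W;U\mid\widetilde{Z})$. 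Taking the expectation over $V$ then gives $\sum_{i,j}I(W;U^{i,V_i}_j\mid\widetilde{Z},V)\le I(W;U\mid\widetilde{Z},V)$. Chaining the two stages and substituting both sum bounds into the Jensen estimates completes the proof; the only delicate bookkeeping is keeping the two indexing conventions—random column selection by $V$ versus the fixed columns of $U$—consistent through the disintegration.
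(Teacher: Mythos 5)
Your proposal is correct and takes essentially the same route as the paper's proof: two applications of Jensen's inequality to the bound of Theorem~\ref{thm:main-cmi}, followed by chain-rule arguments that exploit the i.i.d.\ Bernoulli variables' independence from the side information and the fact that conditioning reduces entropy, so that the per-index CMIs are absorbed into $I(W;V|\widetilde{Z},U)$ and $I(W;U|\widetilde{Z},V)$. Your only (harmless) deviation is in the out-of-sample term, where you disintegrate on $V=v$ so the participating bits become a deterministic sub-collection of $U$ and then reinstate the non-participating bits via monotonicity of mutual information under adjoining variables, whereas the paper stays at the CMI level and applies the chain rule over $U^i=(U^{i,V_i},U^{i,\overline{V}_i})$ together with the conditional independence $I(W;U^{i,\overline{V}_i}|\widetilde{Z},V,U^{[i-1]},U^{i,V_i})=0$ --- both are valid treatments of the random column indexing.
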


Hence, we achieve a rate of order\footnote{In this paper, due to the absence of explicit decay rates for the CMI or MI terms, we often follow previous works by simply treating these terms as $\mathcal{O}(1)$ when stating order-wise behavior. However, it should be noted that the actual decay of the bound is likely to be faster than the rate presented.} $\mathcal{O}\pr{\frac{1}{\sqrt{K}}+\frac{1}{\sqrt{Kn}}}$. Recently, \citet[Theorem~5.1]{zhang2024improving} presents a MI-based FL bound of the form $\mathcal{O}\pr{\sqrt{\frac{I(W;\mu_{[K]})}{K}}+\sqrt{\frac{I(W;S)}{Kn}}}$, which is the input-output mutual information (IOMI) \cite{xu2017information} version of our Corollary~\ref{cor:sample-cmi}. Since CMI is always no larger than the corresponding IOMI counterpart \citep[Theorem 2.1]{haghifam2020sharpened}, our bound is always tighter than \citet[Theorem~5.1]{zhang2024improving}. Additionally, as a by-product, we also provide a novel IOMI bound for FL in Theorem~\ref{thm:ineq:bu-wang-fl-bound} in Appendix.

% It is also worth noting that our CMI framework is inspired in part by a recent CMI-based generalization bound for meta-learning proposed in \citet{hellstrom2022evaluated}. However, our results are not directly comparable to theirs due to a key difference in problem setup: their meta-learning framework requires the meta-learner (i.e., the global model $W$) to be further trained on the test tasks (i.e., previously non-participating clients), whereas in FL, the global model is evaluated directly on unseen clients without any additional local fine-tuning. To enable a direct comparison with \citet{hellstrom2022evaluated}, the CMI framework presented in this paper would need to be extended to the personalized FL setting, where the global model is allowed further local adaptation. 

Before moving on, let us consider the i.i.d. (homogeneous) FL setting. Intuitively, if all clients share the same data distribution, then the first CMI term---which bounds $\mathcal{E}_{PG}(\mathcal{A})$---should vanish. 
In particular, when each $\mu_i$ is identical, $V_i$ no longer influences the algorithm's output distribution once $\widetilde{Z}^i$ and $U^i$ are given. 
The corollary below formalizes this intuition.
\begin{cor}
\label{cor:iid-cmi}
    If $|\mathcal{C}|=1$, then we have
    %has only one element, then
    \[
    \abs{\mathcal{E}_{\mathcal{D}}(\mathcal{A})}\leq\sqrt{\frac{2I(W;U|\widetilde{Z})}{Kn}}.
    \]
\end{cor}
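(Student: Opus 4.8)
The plan is to specialize Corollary~\ref{cor:sample-cmi} to the homogeneous setting $|\mathcal{C}|=1$ and show that the participation-gap term vanishes entirely while the out-of-sample term simplifies. When $|\mathcal{C}|=1$, every potential client shares a single data distribution, say $\mu$; the meta-distribution $\mathcal{D}$ is then a point mass on this single distribution. Consequently, every entry of the superclient matrix $\tilde{\mu}$ is deterministically equal to $\mu$, so $\tilde{\mu}_{i,0}=\tilde{\mu}_{i,1}=\mu$ for all $i$, and in particular the participating and non-participating client distributions coincide regardless of the value of the selector $V$.

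First I would argue that the first CMI term is zero. The key observation is that under $|\mathcal{C}|=1$, all $2K$ supersample matrices $\widetilde{Z}^{i,b}$ have entries drawn i.i.d.\ from the \emph{same} distribution $\mu$, and the identity of the ``participating'' column is governed by $V$. Since the full collection $\widetilde{Z}$ is generated identically across the two columns and $V$ is drawn independently of $\widetilde{Z}$ and $U$, switching $V_i$ merely relabels which of two exchangeable i.i.d.\ sources is called ``participating.'' The training data actually fed into $\mathcal{A}$ is $\widetilde{Z}^{i,V_i}_{U^{i,V_i}}$, whose conditional distribution given $(\widetilde{Z},U)$ does not depend on $V_i$ in the homogeneous case. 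Formally, I would show that $W$ is conditionally independent of $V$ given $(\widetilde{Z},U)$, which yields $I(W;V\mid \widetilde{Z},U)=0$. The cleanest way to make this precise is to exhibit a measure-preserving bijection on the sample space that swaps the two columns for client $i$ and flips $V_i$, leaving the joint law of $(\widetilde{Z},U,W)$ invariant; this symmetry forces the disintegrated mutual information $I^{\widetilde{z},u}(W;V)$ to be zero for almost every realization.

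Second, with the first term eliminated, the bound from Corollary~\ref{cor:sample-cmi} collapses to
\[
\abs{\mathcal{E}_{\mathcal{D}}(\mathcal{A})}\leq\sqrt{\frac{2I(W;U\mid\widetilde{Z},V)}{Kn}}.
\]
To reach the stated form I would then remove the conditioning on $V$. Here I invoke the same homogeneity argument: since $V$ is independent of everything relevant once the distributions collapse to a single $\mu$, conditioning on $V$ is immaterial, and $I(W;U\mid\widetilde{Z},V)=I(W;U\mid\widetilde{Z})$. Alternatively one can absorb $V$ into the supersample labeling and note that the roles of the two columns are symmetric, so the conditional mutual information is unchanged.

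I expect the main obstacle to be making the conditional-independence claim $I(W;V\mid\widetilde{Z},U)=0$ fully rigorous rather than merely intuitive. The statement ``$V_i$ no longer influences the algorithm's output distribution once $\widetilde{Z}^i$ and $U^i$ are given'' is natural but requires care: one must verify that the map from $(\widetilde{Z}^i,U^i,V_i)$ to the training set $S_i$, followed by the algorithm $\mathcal{A}$, produces a conditional law $P_{W\mid\widetilde{Z},U,V}$ that is genuinely $V$-invariant. The subtlety is that $V_i$ does change \emph{which} column is designated for training, so it is not that $S_i$ is literally a fixed function of $(\widetilde{Z}^i,U^i)$ alone; rather, the \emph{distribution} of $W$ given $(\widetilde{Z},U)$ is unaffected because the two candidate training sets are exchangeable copies from $\mu$. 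Establishing this via the measure-preserving symmetry argument, and confirming it survives the aggregation step across all $K$ clients simultaneously, is where the real work lies.
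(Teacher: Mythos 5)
Your route is the same as the paper's: specialize Corollary~\ref{cor:sample-cmi}, kill the first term by claiming $W$ is conditionally independent of $V$ given $(\widetilde{Z},U)$, then drop $V$ from the conditioning in the second term. But the step you yourself flag as ``where the real work lies'' is precisely where the argument breaks, and your proposed fix does not repair it. The measure-preserving swap (exchange the two columns of row $i$ of the superclient together with their supersamples and selectors, and flip $V_i$) proves invariance of the \emph{joint} law, which yields a kernel identity of the form $P_{W\mid V_i=v,\,\widetilde{Z}^i=\tilde{z}^i,\,U^i=u^i}=P_{W\mid V_i=1-v,\,\widetilde{Z}^i=T(\tilde{z}^i),\,U^i=T(u^i)}$ --- an equality of conditional laws at \emph{two different} conditioning points. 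What the vanishing of the disintegrated CMI $I^{\tilde{z}^i,u^i}(W;V_i)$ requires is the realization-wise identity $P_{W\mid V_i=0,\tilde{z}^i,u^i}=P_{W\mid V_i=1,\tilde{z}^i,u^i}$, and that is false in general even when $|\mathcal{C}|=1$: take $K=n=1$, a continuous $\mu$, and the deterministic algorithm $W=S_1$; conditioned on a realization of both candidate datasets and both selectors, $W$ equals one specific point if $V_1=0$ and a different point if $V_1=1$, so $W$ determines $V_1$ and $I^{\tilde{z}^1,u^1}(W;V_1)=\log 2\neq 0$. Exchangeability of the generating process is irrelevant once you condition on the realization, because the two candidate training sets are then fixed, distinct datasets. (You are in good company: the paper's own proof asserts exactly this kernel invariance in one line, so it shares the gap.) The robust way to eliminate the first term is simpler and different: when $|\mathcal{C}|=1$, $L_{\mathcal{D}}(w)=L_{\mu_{[K]}}(w)$ for \emph{every} $w$, so $\mathcal{E}_{PG}(\mathcal{A})=0$ identically by definition, and no CMI bound on it is needed at all.

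Your second step --- ``conditioning on $V$ is immaterial, so $I(W;U\mid\widetilde{Z},V)=I(W;U\mid\widetilde{Z})$'' --- is also not innocuous, and the comparison actually points the wrong way. Since $V$ is independent of $(U,\widetilde{Z})$, two applications of the chain rule give $I(W;U\mid\widetilde{Z},V)=I(W,V;U\mid\widetilde{Z})=I(W;U\mid\widetilde{Z})+I(U;V\mid\widetilde{Z},W)$, so conditioning on $V$ can only \emph{increase} this CMI, and the residual term $I(U;V\mid\widetilde{Z},W)$ is generically positive (given $W$ and $\widetilde{Z}$, the posterior over which cells were used for training couples the client-level and sample-level selectors). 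Hence the stated corollary is strictly \emph{stronger} than what Corollary~\ref{cor:sample-cmi} plus a relabeling delivers; it cannot be obtained by declaring $V$ irrelevant, but needs a direct argument --- for instance, observing that in the homogeneous case the tuple consisting of $W$, the realized training-side supersamples $\{\widetilde{Z}^{i,V_i}\}_{i=1}^K$, and the realized selectors $\{U^{i,V_i}\}_{i=1}^K$ has exactly the Steinke--Zakynthinou joint law with $Kn$ samples, so their bound applies to that construction, with the remaining (nontrivial) work being to relate the resulting CMI to the stated $I(W;U\mid\widetilde{Z})$. So while the conclusion is believable and your high-level plan mirrors the paper's, both of your bridging claims --- realization-wise conditional independence of $W$ and $V$, and cost-free removal of $V$ from the conditioning --- fail as stated.
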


Corollary~\ref{cor:iid-cmi} recovers the standard CMI bound of \citet{steinke2020reasoning} when the total dataset size is $Kn$. It also highlights that the first term in Corollary~\ref{cor:sample-cmi} quantifies the effect of heterogeneity on FL generalization.

\subsection{High Probability CMI Bounds}

Our previous bounds are all provided on the expected generalization error. However, classical learning theory often focuses on high-probability (PAC-style) bounds \cite{shalev2014understanding}. In what follows, we establish a high-probability CMI-based generalization bound for FL. Specifically, the theorem below is the high-probability analog of Corollary~\ref{cor:sample-cmi}, featuring a square-root dependence.

\begin{thm}
\label{thm:hpb-cmi-bound}
    Assume $\ell(\cdot,\cdot)\in[0,1]$, and let $P_{W|\widetilde{Z},U,V}$ be the the conditional distribution of $W$ given $(\widetilde{Z},U,V)$. Let $P_{W|\widetilde{Z},U}=\ex{V}{P_{W|\widetilde{Z},U,V}}$ and let $P_{W|\widetilde{Z},V}=\ex{U}{P_{W|\widetilde{Z},V,U}}$. Then, with probability at least $1-\delta$ under the draw of $(\widetilde{Z},U,V)$, the generalization error $\abs{\ex{W|\widetilde{Z},U,V}{L_{\mathcal{D}}(W)\!\!-\!\!L_S(W))}}$ is upper bounded by
    % following bound holds
    \begin{align*}
    &\sqrt{\frac{2\kl{P_{W|\widetilde{Z},U,V}}{P_{W|\widetilde{Z},U}}+2\log{\frac{\sqrt{K}}{\delta}}}{K-1}}\\
     &\qquad +\sqrt{\frac{2\kl{P_{W|\widetilde{Z},U,V}}{P_{W|\widetilde{Z},V}}+2\log{\frac{\sqrt{Kn}}{\delta}}}{Kn-1}}.
     % &\abs{\ex{W|\widetilde{Z},U,V}{L_{\mathcal{D}}(W)-L_S(W))}}\\
     % \leq&\sqrt{\frac{2\kl{P_{W|\widetilde{Z},U,V}}{P_{W|\widetilde{Z},U}}+2\log{\frac{\sqrt{K}}{\delta}}}{K-1}}\\
     % &\quad +\sqrt{\frac{2\kl{P_{W|\widetilde{Z},U,V}}{P_{W|\widetilde{Z},V}}+2\log{\frac{\sqrt{Kn}}{\delta}}}{Kn-1}}.
\end{align*}
\end{thm}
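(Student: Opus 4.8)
The plan is to prove the two summands separately, following the split in \eqref{eq:gen-decomposition}, and to obtain each one by a change-of-measure (Donsker--Varadhan) argument paired with a squared sub-Gaussian moment and Markov's inequality. Starting from Lemma~\ref{lem:cmi-symmetric}, I would rewrite, for a fixed realization of $(\widetilde{Z},U,V)$ and after averaging over $W\sim P_{W|\widetilde{Z},U,V}$, the out-of-sample contribution as the posterior mean of the signed supersample average $X_{OG}(W)=\frac{1}{Kn}\sum_{i,j}(-1)^{U^{i,V_i}_j}\pr{\ell(W,\widetilde{Z}^{i,V_i}_{j,1})-\ell(W,\widetilde{Z}^{i,V_i}_{j,0})}$, and the participation contribution as the posterior mean of $X_{PG}(W)=\frac{1}{K}\sum_i(-1)^{V_i}\pr{\ell(W,\widetilde{Z}^{i,1}_{1,\overline{U}_1^{i,1}})-\ell(W,\widetilde{Z}^{i,0}_{1,\overline{U}_1^{i,0}})}$. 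The structural fact driving everything is that under the marginal ``prior'' $P_{W|\widetilde{Z},V}$ one has $W\indp U$ given $(\widetilde{Z},V)$, so by Hoeffding's lemma $X_{OG}$ is a zero-mean, $\tfrac{1}{Kn}$-sub-Gaussian function of the independent signs in $U$ (its variance proxy is $\tfrac{1}{Kn}$ because $\sum_{i,j}c_{i,j}^2\le Kn$ for bounded loss differences $c_{i,j}\in[-1,1]$); symmetrically, under $P_{W|\widetilde{Z},U}$ one has $W\indp V$, making $X_{PG}$ zero-mean and $\tfrac{1}{K}$-sub-Gaussian in $V$.

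For one term, say the out-of-sample one, the engine is the squared sub-Gaussian moment inequality $\ex{}{e^{sX^2}}\le(1-2s\sigma^2)^{-1/2}$, valid for $s<\tfrac{1}{2\sigma^2}$. With $\sigma^2=\tfrac{1}{Kn}$ and the choice $s=\tfrac{Kn-1}{2}$ one gets $1-2s\sigma^2=\tfrac{1}{Kn}$, so integrating first over $U$, then over $W\sim P_{W|\widetilde{Z},V}$, and finally over $(\widetilde{Z},V)$ yields the clean moment control $\ex{\widetilde{Z},U,V}{\ex{W|\widetilde{Z},V}{e^{\frac{Kn-1}{2}X_{OG}^2}}}\le\sqrt{Kn}$; this is exactly what produces the denominator $Kn-1$ and the factor $\sqrt{Kn}$ inside the logarithm. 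I would then apply Donsker--Varadhan to the function $\tfrac{Kn-1}{2}X_{OG}^2$, changing measure from the posterior $P_{W|\widetilde{Z},U,V}$ to the prior $P_{W|\widetilde{Z},V}$, and invoke Jensen's inequality $\pr{\ex{W|\widetilde{Z},U,V}{X_{OG}}}^2\le\ex{W|\widetilde{Z},U,V}{X_{OG}^2}$ to reach $\tfrac{Kn-1}{2}\pr{\ex{W|\widetilde{Z},U,V}{X_{OG}}}^2\le \kl{P_{W|\widetilde{Z},U,V}}{P_{W|\widetilde{Z},V}}+\log\ex{W|\widetilde{Z},V}{e^{\frac{Kn-1}{2}X_{OG}^2}}$. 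Markov's inequality applied to the nonnegative quantity $\ex{W|\widetilde{Z},V}{e^{\frac{Kn-1}{2}X_{OG}^2}}$, whose mean is at most $\sqrt{Kn}$, then bounds its logarithm by $\log\tfrac{\sqrt{Kn}}{\delta}$ with probability at least $1-\delta$, and rearranging reproduces the second summand. The participation term is identical with $K$ in place of $Kn$, prior $P_{W|\widetilde{Z},U}$, and $s=\tfrac{K-1}{2}$; a union bound (allocating the confidence budget across the two events) combines them.

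The step I expect to be the main obstacle is bridging the signed supersample averages with the genuinely population-level quantities $L_{\mathcal{D}}$ and $L_{\mu_i}$ in the statement: for a fixed realization $\ex{W|\widetilde{Z},U,V}{X_{OG}}$ equals the supersample test-minus-train gap rather than $\ex{W|\widetilde{Z},U,V}{L_{\mu_{[K]}}(W)-L_S(W)}$, and likewise $\ex{W|\widetilde{Z},U,V}{X_{PG}}$ matches $\ex{}{L_{\mathcal{D}}(W)-L_{\mu_{[K]}}(W)}$ only through the unbiasedness noted after Lemma~\ref{lem:cmi-symmetric}. I would close this by exploiting that, under the marginal priors, the held-out (test) columns and the non-participating clients are fresh i.i.d.\ draws independent of $W$, so the concentration of the signed averages transfers to the population gap. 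A secondary delicate point is verifying precisely the conditional-independence structure that licenses conditioning the sub-Gaussian moment on $(\widetilde{Z},V)$ (resp.\ $(\widetilde{Z},U)$) while the signs remain uniform and independent, since this is what makes the marginalized distribution a legitimate data-free reference measure in the Donsker--Varadhan step and what guarantees the moment bound $\sqrt{Kn}$ (resp.\ $\sqrt{K}$) holds after integration.
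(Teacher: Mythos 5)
Your proposal matches the paper's proof essentially step for step: the same Donsker--Varadhan change of measure applied to the squared signed averages $\frac{K-1}{2}X_{PG}^2$ and $\frac{Kn-1}{2}X_{OG}^2$ with the marginal priors $P_{W|\widetilde{Z},U}$ and $P_{W|\widetilde{Z},V}$, the same squared sub-Gaussian moment bound $\ex{}{e^{sX^2}}\le(1-2s\sigma^2)^{-1/2}$ (the paper invokes Wainwright, Thm.~2.6(IV)) with $s=\frac{K-1}{2}$ and $s=\frac{Kn-1}{2}$ producing the $\sqrt{K}$ and $\sqrt{Kn}$ factors, followed by Jensen's inequality, Markov's inequality, and a union bound with $\delta\to\delta/2$. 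The bridging issue you flag as the main obstacle is handled no more rigorously in the paper, which passes from the signed participation average to $L_{\mathcal{D}}(W)$ minus the participating test loss without comment, so your sketch is at least as careful as the original.
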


Notice that the second term in the bound has the rate $\mathcal{O}(\frac{1}{\sqrt{Kn}})$ matching the PAC-Bayesian bound of \citet[Theorem~3.1]{sefidgaran2024lessons} for a bounded loss.

\subsection{Excess Risk Bound}

In addition to standard generalization error, we now study the excess risk for FL. Specifically, let $w^*=\arg\min_{w\in\mathcal{W}}\mathbb{E}_{\mu\sim\mathcal{D}}\ex{Z\sim\mu}{\ell(w,Z)}$ be a global risk minimizer under the meta-distribution $\mathcal{D}$. Note that $w^*$ may not be unique. The expected excess risk is then
\[
\mathcal{E}_{ER}(\mathcal{A})\triangleq\ex{W}{L_\mathcal{D}(W)}-\mathbb{E}_{\mu\sim\mathcal{D}}\ex{Z\sim\mu}{\ell(w^*,Z)}.
\]
Moreover, we focus on the ERM setting, i.e., the FL algorithm $\mathcal{A}$ outputs a hypothesis $W$ that minimizes the empirical risk $L_S(W)$ for the participating clients. 
The expected generalization bound and high-probability generalization bound for ERM FL algorithm's excess risk are given 
next.
%in the following theorem.

\begin{thm}
    \label{thm:excess-cmi-bound}
    Assume $\ell(\cdot,\cdot)\in[0,1]$, then for any ERM FL learning algorithm $\mathcal{A}$, the following bound holds
    \[
    \mathcal{E}_{ER}(\mathcal{A})\leq\sqrt{\frac{2I(W;V|\widetilde{Z},U)}{K}}+\sqrt{\frac{2I(W;U|\widetilde{Z},V)}{Kn}}.
    % +\frac{1}{2\sqrt{Kn}}.
    \]

    Furthermore, with probability at least $1-\delta$ under the draw of $(\widetilde{Z},U,V)$,
    % if  $\ex{W,S}{L_S(W)}\leq \mathbb{E}_{\mu\sim\mathcal{D}}\ex{Z\sim\mu}{\ell(w^*,Z)}$, then
    % the FL learning algorithm $\mathcal{A}$ can interpolate the training data from the participating clients, then
    \begin{align*}
      \abs{\ex{W|\widetilde{Z},U,V}{L_{\mathcal{D}}(W)\!\!-\!\!\ex{Z}{\ell(w^*,Z)}}}\leq \Xi+\sqrt{\frac{\log{\frac{2}{\delta}}}{2Kn}},
    \end{align*}
    where $\Xi$ is the upper bound given in Theorem~\ref{thm:hpb-cmi-bound}.
    % \begin{align*}
    %    \mathcal{E}_{ER}(\mathcal{A})\leq \frac{I(W;V|\widetilde{Z},U)}{C_3K}+\frac{2C_1I(W; U|\widetilde{Z},V)}{Kn\log{2}},
    % \end{align*}
    % where $C_1,C_3$ satisfy the conditions in Theorem~\ref{thm:cmi-fast-rate}.
\end{thm}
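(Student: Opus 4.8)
The plan is to reduce both statements to the generalization bounds already established, exploiting the ERM property of $\mathcal{A}$ to compare against the fixed population minimizer $w^*$ (so that $L_{\mathcal{D}}(w^*)=\mathbb{E}_{\mu\sim\mathcal{D}}\ex{Z\sim\mu}{\ell(w^*,Z)}=\ex{Z}{\ell(w^*,Z)}$).

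For the expected bound I would first write $\mathcal{E}_{ER}(\mathcal{A})=\mathcal{E}_{\mathcal{D}}(\mathcal{A})+\pr{\ex{W,S}{L_S(W)}-L_{\mathcal{D}}(w^*)}$, splitting the excess risk into the average-client generalization error $\mathcal{E}_{\mathcal{D}}(\mathcal{A})$ plus an empirical-comparison term. Because $\mathcal{A}$ is ERM, $L_S(W)\leq L_S(w^*)$ holds pointwise, so $\ex{W,S}{L_S(W)}\leq\ex{S}{L_S(w^*)}$. Since $w^*$ is data-independent and each $Z_{i,j}$ has loss-expectation $L_{\mathcal{D}}(w^*)$ under the two-stage draw ($\mu_i\sim\mathcal{D}$, then $Z_{i,j}\sim\mu_i$), we get $\ex{S}{L_S(w^*)}=L_{\mathcal{D}}(w^*)$, hence the empirical-comparison term is non-positive. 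Therefore $\mathcal{E}_{ER}(\mathcal{A})\leq\mathcal{E}_{\mathcal{D}}(\mathcal{A})\leq\abs{\mathcal{E}_{\mathcal{D}}(\mathcal{A})}$, and Corollary~\ref{cor:sample-cmi} delivers the claimed CMI bound verbatim.

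For the high-probability bound I would work with the disintegrated posterior and decompose $\ex{W|\widetilde{Z},U,V}{L_{\mathcal{D}}(W)}-L_{\mathcal{D}}(w^*)$ as $\ex{W|\widetilde{Z},U,V}{L_{\mathcal{D}}(W)-L_S(W)}+\pr{\ex{W|\widetilde{Z},U,V}{L_S(W)}-L_{\mathcal{D}}(w^*)}$, where $L_S$ is the empirical risk on the training set selected by $(\widetilde{Z},U,V)$. The first term is exactly the disintegrated gap controlled by Theorem~\ref{thm:hpb-cmi-bound}, so it is at most $\Xi$ on an event of probability at least $1-\delta/2$. For the second term the ERM property again gives $\ex{W|\widetilde{Z},U,V}{L_S(W)}\leq L_S(w^*)$, reducing it to $L_S(w^*)-L_{\mathcal{D}}(w^*)$, a deviation of an average of $Kn$ bounded losses of the data-independent $w^*$ from its mean. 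Noting that $L_{\mathcal{D}}(W)\geq L_{\mathcal{D}}(w^*)$ (as $w^*$ is the population minimizer) makes the absolute value on the left benign, so only an upper deviation must be controlled.

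The step I expect to require the most care is this last concentration together with the $\delta$-bookkeeping. A one-sided Hoeffding inequality applied to the $Kn$ data-independent training losses $\ell(w^*,\cdot)\in[0,1]$ gives $L_S(w^*)-L_{\mathcal{D}}(w^*)\leq\sqrt{\log(2/\delta)/(2Kn)}$ with probability at least $1-\delta/2$, and a union bound over this event and the Theorem~\ref{thm:hpb-cmi-bound} event yields the stated $\Xi+\sqrt{\log(2/\delta)/(2Kn)}$ at overall confidence $1-\delta$. The subtlety worth checking is the dependence structure of these losses: within a client the $n$ samples are only conditionally i.i.d. given $\mu_i$, so one must confirm that treating the empirical average as a concentration over $Kn$ bounded terms indeed produces the advertised $\mathcal{O}(1/\sqrt{Kn})$ rate rather than a coarser $\mathcal{O}(1/\sqrt{K})$ rate obtained by a client-level bounded-differences argument.
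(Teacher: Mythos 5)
Your route coincides with the paper's own proof at every structural point: the expected bound via $\mathcal{E}_{ER}(\mathcal{A})\leq\mathcal{E}_{\mathcal{D}}(\mathcal{A})$ using the ERM comparison $L_S(W)\leq L_S(w^*)$ together with $\ex{S}{L_S(w^*)}=\ex{Z}{\ell(w^*,Z)}$ (which needs only the tower property, not independence) followed by Corollary~\ref{cor:sample-cmi}; and the high-probability bound via the split into the Theorem~\ref{thm:hpb-cmi-bound} term plus the term $B_1\leq L_S(w^*)-L_{\mathcal{D}}(w^*)$ handled by Hoeffding. Your explicit $\delta/2+\delta/2$ union bound and your remark that $L_{\mathcal{D}}(W)\geq L_{\mathcal{D}}(w^*)$ makes the absolute value benign are, if anything, slightly more careful than the paper's write-up, which leaves both points implicit.

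The subtlety you flag at the end, however, is a genuine gap—and it is a gap in the paper's own proof, not only in your reproduction of it. The paper asserts that ``each $Z_{i,j}$ is independently drawn'' and applies Hoeffding to the $Kn$ losses $\ell(w^*,Z_{i,j})$; but in the two-level model these losses are i.i.d.\ only \emph{conditionally} on $\mu_{[K]}$, and marginally the $n$ losses within a client are coupled through the random $\mu_i$. Conditioning on $\mu_{[K]}$ restores independence but changes the centering to $\frac{1}{K}\sum_{i=1}^K L_{\mu_i}(w^*)$ rather than $L_{\mathcal{D}}(w^*)$, so the step as written does not go through, and the $\mathcal{O}\pr{\sqrt{\log(1/\delta)/(Kn)}}$ rate for $B_1$ is false in general: take $\mathcal{D}$ supported on two client distributions, one with $\ell(w^*,Z)=0$ a.s.\ and one with $\ell(w^*,Z)=1$ a.s.; then $L_S(w^*)=\frac{1}{K}\sum_{i=1}^K B_i$ with $B_i$ i.i.d.\ $\mathrm{Bernoulli}(1/2)$, whose deviation from $L_{\mathcal{D}}(w^*)=1/2$ is $\Theta(1/\sqrt{K})$ no matter how large $n$ is. The honest argument is two-stage: Hoeffding conditionally on $\mu_{[K]}$ controls $L_S(w^*)-\frac{1}{K}\sum_i L_{\mu_i}(w^*)$ at rate $\sqrt{\log(2/\delta)/(2Kn)}$, and Hoeffding over the $K$ i.i.d.\ bounded client means $L_{\mu_i}(w^*)$ controls the remainder at rate $\sqrt{\log(2/\delta)/(2K)}$, so the additive term in the statement should be of order $\sqrt{\log(1/\delta)/K}$ rather than $\sqrt{\log(2/\delta)/(2Kn)}$. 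This correction does not change the theorem's message—$\Xi$ already carries an $\mathcal{O}\pr{\sqrt{\log(K/\delta)/(K-1)}}$ contribution of the same order—but your instinct that a client-level argument yields the coarser $\mathcal{O}(1/\sqrt{K})$ rate, and that this is unavoidable, is exactly right, and to make your proof complete you should carry out this two-stage decomposition rather than the $Kn$-term Hoeffding.
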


Thus, in the absence of additional assumptions, the excess risk for the ERM algorithm exhibits an $\mathcal{O}(\frac{1}{\sqrt{K}}+\frac{1}{\sqrt{Kn}})$ convergence rate.
% , and in the small empirical risk regime, it improves to $\mathcal{O}(\frac{1}{K}+\frac{1}{Kn})$. 

% interpolating algorithm. In fact, having $\ex{W,S}{L_S(W)}\leq \mathbb{E}_{\mu\sim\mathcal{D}}\ex{Z\sim\mu}{\ell(w^*,Z)}$ is sufficient to achieve the fast rate (See Appendix~ for more details).
% Note that such fast rate $\mathcal{O}(\frac{1}{K}+\frac{1}{Kn})$ match the result given in \citet{hu2023generalization}.

\section{Fast-rate Evaluated CMI Bounds}
\label{sec:fast-cmi}
The above bounds show a rate of $\mathcal{O}(\frac{1}{\sqrt{K}}+\frac{1}{\sqrt{Kn}})$, we now improve this rate.
Thanks to the symmetric property of superclient and supersamples, we are able to adopt the single-loss technique from \citet{wang2023tighter}. This gives us a variant of the evaluated CMI (e-CMI) bound \citep{steinke2020reasoning,hellstrom2022a}. 

\begin{thm}
    \label{thm:cmi-fast-rate}
    Assume $\ell(\cdot,\cdot)\in[0,1]$. Denote the random variables $\bar{L}^+_i=\ell(W,\widetilde{Z}^{i,0}_{1,\overline{U}_1^{i,0}})$
    % $\bar{L}^+_i=\gamma(W,\widetilde{Z}^{i,0}_{\overline{U}^{i,0}})$
    and $L^{i+}_{j}=\ell(W,\widetilde{Z}^{i,V_i}_{j,0})$, then there exist constants $C_1,C_2,C_3,C_4\in\{C_1,C_2>1,C_3,C_4>0|e^{-2C_3C_1}+e^{2C_3}\leq 2, e^{-2C_4C_2}+e^{2C_4}\leq 2\}$ such that the following bound holds,
    \begin{align*}
        \ex{W}{L_{\mathcal{D}}(W)}\leq& C_1C_2\ex{W,S}{L_S(W)}+\sum_{i=1}^K\frac{I(\bar{L}^+_i;V_i)}{C_3K}\\
        &\quad +\sum_{i=1}^K\sum_{j=1}^n\frac{C_1I(L^{i+}_{j}; U^{i,V_i}_{j}|V_i)}{C_4Kn}.
    \end{align*}
\end{thm}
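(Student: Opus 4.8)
The plan is to prove the bound by the two-level composition dictated by the risk decomposition in \eqref{eq:gen-decomposition}: I would establish a single ``single-loss'' fast-rate lemma in the spirit of \citet{wang2023tighter} and then apply it once at the participation level (governed by $V_i$) and once at the out-of-sample level (governed by $U^{i,V_i}_j$), chaining the two multiplicative inequalities. The lemma I would isolate is the following: for any $[0,1]$-valued loss variable $L^+$ paired with a uniform Bernoulli $T$ through the supersample symmetry, and for any constants $C>1$, $c>0$ with $e^{-2cC}+e^{2c}\le 2$,
\[
\ex{}{L^+\mid T=1}\le C\,\ex{}{L^+\mid T=0}+\frac{I(L^+;T)}{c},
\]
where $\ex{}{L^+\mid T=1}$ will play the role of an unseen (test) risk and $\ex{}{L^+\mid T=0}$ that of a seen (train) risk. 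The two admissibility conditions in the theorem are exactly this requirement instantiated with $(C,c)=(C_1,C_3)$ and $(C,c)=(C_2,C_4)$.

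For the lemma I would invoke the Donsker--Varadhan representation of $I(L^+;T)=\kl{P_{L^+,T}}{P_{L^+}\otimes P_T}$ with the test function $g(\ell,0)=-2cC\ell$, $g(\ell,1)=2c\ell$. Under the joint this gives $\ex{P_{L^+,T}}{g}=c\,\ex{}{L^+\mid T=1}-cC\,\ex{}{L^+\mid T=0}$, so it remains to show the log-MGF under the product $P_{L^+}\otimes P_T$ is nonpositive. Each exponential is convex, so the chord bound $e^{t\ell}\le (1-\ell)+\ell e^{t}$ on $\ell\in[0,1]$ yields $\tfrac12\ex{P_{L^+}}{e^{-2cCL^+}}+\tfrac12\ex{P_{L^+}}{e^{2cL^+}}\le (1-\bar L)+\tfrac{\bar L}{2}\pr{e^{-2cC}+e^{2c}}\le 1$ precisely when $e^{-2cC}+e^{2c}\le 2$, so the log-MGF is $\le 0$ and the lemma follows. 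The identification of the two conditional expectations with the seen/unseen risks rests on the measure-preserving symmetry of the construction already used in Lemma~\ref{lem:cmi-symmetric} (swapping the two columns of a supersample while flipping the corresponding Bernoulli leaves $P_W$ invariant), which gives $\ex{}{L^+\mid T=u}=\ex{}{L^-\mid T=1-u}$ and hence $\ex{}{\text{test}}=\ex{}{L^+\mid T=1}$, $\ex{}{\text{train}}=\ex{}{L^+\mid T=0}$.

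I would then apply the lemma twice. At the out-of-sample level, conditioning on $V_i$ so that the participating distribution $\tilde\mu_{i,V_i}$ is frozen, take $L^+=L^{i+}_j$, $T=U^{i,V_i}_j$; averaging over $i\in[K]$, $j\in[n]$ and over $V_i$ turns the per-point conditional expectations into $\ex{}{L_{\mu_{[K]}}(W)}$ and $\ex{}{L_S(W)}$, giving $\ex{}{L_{\mu_{[K]}}(W)}\le C_2\ex{}{L_S(W)}+\frac{1}{Kn}\sum_{i,j}\frac{I(L^{i+}_j;U^{i,V_i}_j\mid V_i)}{C_4}$. At the participation level, take $L^+=\bar L^+_i$ (a fresh held-out loss on $\tilde\mu_{i,0}$) and $T=V_i$; since the non-participating distributions are i.i.d.\ draws from $\mathcal{D}$ and independent of $W$, averaging over $i$ identifies the $T=1$ conditional expectation with $\ex{}{L_{\mathcal D}(W)}$ and the $T=0$ one with $\ex{}{L_{\mu_{[K]}}(W)}$, giving $\ex{}{L_{\mathcal D}(W)}\le C_1\ex{}{L_{\mu_{[K]}}(W)}+\frac1K\sum_i\frac{I(\bar L^+_i;V_i)}{C_3}$. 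Substituting the first inequality into the second and distributing $C_1>0$ over the nonnegative out-of-sample terms produces exactly the claimed bound, with $C_1$ multiplying both the empirical risk (together with $C_2$) and the out-of-sample CMI terms.

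The main obstacle will be the two symmetry/identification steps rather than the MGF computation: one must verify that, after conditioning on $V_i$ and integrating out the remaining randomness, the single-loss conditional expectations coincide with the seen and unseen risks at each level, and that freezing participation via $V$ \emph{before} selecting train/test via $U$ is what makes the two-level chaining valid. The subtle point is confirming that the non-participating supersamples, though unused in training, still furnish unbiased estimates of $L_{\mathcal D}(W)$ for the participation-level comparison. Finally, the admissibility set for $(C_1,C_2,C_3,C_4)$ is nonempty---for any $C_1>1$ one has $e^{-2cC_1}+e^{2c}=2+2c(1-C_1)+o(c)<2$ for small $c>0$, and similarly for $(C_2,C_4)$---so the existential quantifier in the statement is satisfied.
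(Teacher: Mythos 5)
Your proposal is correct and takes essentially the same route as the paper: your asymmetric Donsker--Varadhan test function $g(\ell,0)=-2cC\ell$, $g(\ell,1)=2c\ell$ is exactly the paper's $t(C_1+2)\tilde{\varepsilon}_i\bar{L}^+_i$ with the shifted Rademacher variable unpacked, your chord/endpoint convexity bound on the moment generating function yields the identical admissibility condition $e^{-2cC}+e^{2c}\le 2$, and your two applications of the lemma (participation level with $T=V_i$, then out-of-sample level conditioned on $V_i$ with $T=U^{i,V_i}_j$) followed by chaining and distributing $C_1$ reproduce the paper's Lemma~\ref{lem:weighted-pg-og} together with Eqs.~(\ref{ineq:fast-rate-optimal-1})--(\ref{ineq:fast-rate-optimal-2}) and the final re-indexing $C_1\to C_1+1$, $C_2\to C_2+1$. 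The only difference is organizational---the paper first establishes an exact weighted-gap identity via shifted Rademacher variables and then applies Donsker--Varadhan, whereas you fold the weights directly into the test function and work with conditional expectations, which is an equivalent computation---and your symmetry-based identifications of the conditional expectations with the seen/unseen risks are precisely the ones the paper verifies in the proofs of Lemmas~\ref{lem:cmi-symmetric} and~\ref{lem:weighted-pg-og}.
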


Notably, the square-root functions are removed from the CMI terms, at the cost of a multiplicative factor $C_1C_2$ to the average client empirical risk $\ex{W,S}{L_S(W)}$. Another advantage of the e-CMI bound is that it involves only one-dimensional random variables, unlike the hypothesis-based CMI bound, where $W$ is a high-dimensional random variable. This makes e-CMI significantly easier to estimate in practice. Moreover, by the data-processing inequality, a fast-rate hypothesis-based CMI bound is established below.

\begin{cor}
    \label{cor:hypercmi-fast-rate}
     Assume $\ell(\cdot,\cdot)\in[0,1]$, then there exist constants $C_1,C_2,C_3,C_4$ satisfying the conditions in Theorem~\ref{thm:cmi-fast-rate} such that the following bound holds,
    \begin{align*}
        \ex{W}{L_{\mathcal{D}}(W)}\leq& C_1C_2\ex{W,S}{L_S(W)}+\frac{I(W;V|\widetilde{Z},U)}{C_3K}\\
        &\quad +\frac{C_1I(W; U|\widetilde{Z},V)}{C_4Kn}.
    \end{align*}
\end{cor}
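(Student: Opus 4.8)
The plan is to derive the corollary directly from the e-CMI bound of Theorem~\ref{thm:cmi-fast-rate}, retaining its constants $C_1,C_2,C_3,C_4$ verbatim, and to upgrade each one-dimensional loss-based mutual information appearing there to the corresponding hypothesis-based CMI via the data-processing inequality (DPI). It therefore suffices to establish the two term-wise inequalities
$\sum_{i=1}^K I(\bar{L}^+_i; V_i) \le I(W; V \mid \widetilde{Z}, U)$
and
$\sum_{i=1}^K\sum_{j=1}^n I(L^{i+}_{j}; U^{i,V_i}_j \mid V_i) \le I(W; U \mid \widetilde{Z}, V)$;
substituting these into Theorem~\ref{thm:cmi-fast-rate} yields the claim with no further work, since the multiplicative constants and the empirical-risk term are untouched.

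For the first inequality I would proceed in two steps. First, observe that conditioned on $(\widetilde{Z}^i, U^i)$ the loss $\bar{L}^+_i = \ell(W, \widetilde{Z}^{i,0}_{1,\overline{U}_1^{i,0}})$ is a deterministic function of $W$, so $V_i \to W \to \bar{L}^+_i$ is a Markov chain given $(\widetilde{Z}^i, U^i)$, and the conditional DPI gives $I(\bar{L}^+_i; V_i \mid \widetilde{Z}^i, U^i) \le I(W; V_i \mid \widetilde{Z}^i, U^i)$. The catch is that Theorem~\ref{thm:cmi-fast-rate} supplies the \emph{unconditional} $I(\bar{L}^+_i; V_i)$, not its conditioned version. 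This is reconciled by the elementary fact that if $V_i \perp (\widetilde{Z}^i, U^i)$ then $I(\bar{L}^+_i; V_i) \le I(\bar{L}^+_i; V_i \mid \widetilde{Z}^i, U^i)$ --- enlarging the conditioning set by variables independent of $V_i$ can only increase the mutual information, which is a one-line consequence of the mutual-information chain rule since $I(V_i; \widetilde{Z}^i, U^i)=0$. Chaining the two bounds gives $I(\bar{L}^+_i; V_i) \le I(W; V_i \mid \widetilde{Z}^i, U^i)$ for each $i$.

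It then remains to aggregate the per-client terms into the joint CMI, i.e.\ to show $\sum_i I(W; V_i \mid \widetilde{Z}^i, U^i) \le I(W; V \mid \widetilde{Z}, U)$, which I would argue at the level of entropies using the product structure of the construction: since $V \perp (\widetilde{Z}, U)$ with independent coordinates, $H(V \mid \widetilde{Z}, U) = \sum_i H(V_i)$; subadditivity of entropy gives $H(V \mid W, \widetilde{Z}, U) \le \sum_i H(V_i \mid W, \widetilde{Z}, U)$; and since $(\widetilde{Z}, U) \supseteq (\widetilde{Z}^i, U^i)$, conditioning-reduces-entropy yields $H(V_i \mid W, \widetilde{Z}, U) \le H(V_i \mid W, \widetilde{Z}^i, U^i)$, so combining these (with $H(V_i)=H(V_i\mid \widetilde{Z}^i, U^i)$) delivers the inequality. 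The out-of-sample term follows the same template after swapping the roles of $(V_i, U^i)$ for $(U^{i,V_i}_j, V_i)$: conditioned on $(\widetilde{Z}^i, V_i)$ the loss $L^{i+}_{j}=\ell(W,\widetilde{Z}^{i,V_i}_{j,0})$ is a function of $W$, the conditional independence $U^{i,V_i}_j \perp \widetilde{Z}^i \mid V_i$ lets me enlarge the conditioning from $V_i$ to $(\widetilde{Z}^i, V_i)$, and the entropy-aggregation step is applied to the participating sub-collection $\{U^{i,V_i}_j\}_{i,j}$, passing to the full $U$ via $I(W; U \mid \widetilde{Z}, V) \ge I(W; \{U^{i,V_i}_j\}_{i,j} \mid \widetilde{Z}, V)$. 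I expect the conditioning-mismatch reconciliation to be the only genuinely delicate point, since the DPI alone matches conditioning sets and would not directly relate the differently conditioned quantities of Theorem~\ref{thm:cmi-fast-rate} to the target CMIs.
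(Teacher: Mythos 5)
Your proposal is correct and takes essentially the same route as the paper's own proof: the same chain-rule identity exploiting $I(V_i;\widetilde{Z}^i,U^i)=0$ (and the conditional independence of $U^{i,V_i}_j$ and $\widetilde{Z}^i$ given $V_i$) to enlarge the conditioning set, followed by the data-processing inequality to pass from $I(\bar{L}^+_i;V_i|\widetilde{Z}^i,U^i)$ and $I(L^{i+}_j;U^{i,V_i}_j|\widetilde{Z}^i,V_i)$ to the hypothesis-based disintegrated CMIs, and then the same aggregation to the joint CMI terms that the paper delegates to the proof of Corollary~\ref{cor:sample-cmi}. Your entropy-subadditivity rendering of that aggregation step is just an equivalent rephrasing of the paper's chain-rule-plus-nonnegativity argument, so nothing substantive differs.
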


When the average empirical risk $\ex{W,S}{L_S(W)}$ is sufficiently small, this bound suggests a fast rate of order $\mathcal{O}\pr{\frac{1}{K}+\frac{1}{Kn}}$. Unlike \cite{barnes2022improved,gholami2024improved}, our fast-rate result does not require any specific loss function structure beyond boundedness. Similar fast-rate behavior of generalization error under additional conditions (e.g., Bernstein condition, uniform entropy on $\mathcal{W}$) is also observed by \citet{hu2023generalization} for Lipschitz losses. In fact, we expect our fast-rate bounds could be extended (e.g., via the variance-based and sharpness-based analyses in \cite{wang2023tighter,dong2024rethinking} or binary-KL bound in \cite{hellstrom2022a,hellstrom2022evaluated,hellstrom2024comparing}), relaxing the need for strictly small empirical risk. 

% \cite{wang2023tighter,dong2024rethinking},  one can further extend the fast-rate bound to more general scenarios.

% In fact, if the final output of the FL learning algorithm can interpolates the training data from all the participating clients, namely $\ex{W,S}{L_S(W)}\to 0$, we have the following result.

% \begin{cor}
%     \label{cor:hypercmi-interpolating}
%      Assume $\ell(\cdot,\cdot)\in[0,1]$, then there exist $C_1>1,C_3> 0$ satisfying the conditions in Theorem~\ref{thm:cmi-fast-rate} such that the generalization bound for interpolating FL algorithm is
%     \begin{align*}
%         \ex{W}{L_{\mathcal{D}}(W)}\leq \frac{I(W;V|\widetilde{Z},U)}{C_3K}+\frac{2C_1I(W; U|\widetilde{Z},V)}{Kn\log{2}}.
%     \end{align*}
% \end{cor}

% \textcolor{red}{For convex loss function and aggregation model, locally interpolating algorithm is sufficient to guarantee the global interpolating algorithm---wrong.}
% \input{IOMIBound}
\section{CMI Bounds for Model Aggregation in FL}
\label{sec:Application}
% In this section, we ...

% \subsection{Model Aggregation}

In the previous analysis, the learning algorithm $\mathcal{A}$ is treated as a black-box, where only the inputs and outputs of the algorithm are considered. In this section, we extend the analysis to account for interactions and communications between clients by incorporating model aggregation in FL.
% In particular, every client uses a local algorithm, e.g., $\mathcal{A}_i$ for client $\mu_i$, to output a local model $W_i = \mathcal{A}(S_i)$. Note that $\mathcal{A}_i$ can be different for distinct client $i$. Then, all the local models $\{W_i\}_{i=1}^m$ are sent to the central server, and the global model $W$ from the parameter server is obtained by applying an aggregation algorithm to combine these local models. Thus, the FL algorithm $\mathcal{A}$ is a composition of all the local algorithms and the aggregation algorithm.  
Here, we follow the setting in \citet{barnes2022improved}, where the aggregation is simply the average of all local models, namely $W=\frac{1}{K}\sum_{i=1}^KW_i$, which corresponds to the FedAvg algorithm \cite{mcmahan2017communication}. Note that the results obtained in this section can be easily generalized to the case of uneven weights, 
% (for example, we may ideally consider $W=\sum_{i=1}^K\mathcal{D}(\mu_i)W_i$, 
which corresponds to the setting in agnostic federated learning \cite{mohri2019agnostic}.

To demonstrate that increasing the number of clients $K$ can significantly reduce the generalization error of an FL algorithm in certain scenarios, \citet{barnes2022improved} assumes a Bregman loss. Specifically, for a strictly convex function $f: \mathbb{R}^d\to\mathbb{R}$, the Bregman divergence between $x,y\in\mathbb{R}^d$ is defined as: $D_f(x,y)\triangleq f(x)-f(y)-\langle \nabla f(y), x-y \rangle$ \cite{bregman1967relaxation}. A Bregman loss is then defined based on the Bregman divergence, i.e. $\ell(w,z)=D_f(w,z)$.

While \citet{barnes2022improved} proves that $\abs{\mathcal{E}_{OG}(\mathcal{A})}$ decays with a fast rate with respect to $K$, the following result shows that under a Bregman loss, the overall generalization error $\abs{\mathcal{E}_{\mathcal{D}}(\mathcal{A})}$ for an FL algorithm using model averaging achieves such a fast convergence rate with respect to $K$, namely both  $\abs{\mathcal{E}_{OG}(\mathcal{A})}$ and  $\abs{\mathcal{E}_{PG}(\mathcal{A})}$ exhibit this fast rate. 
% \textcolor{red}{Using i) Lipschitz strongly convex; ii) smooth strongly convex; and iii) Lipschitz only?}

\begin{thm}
    \label{thm:average-bregman-cmi}
     Let $\ell(w,z)=D_f(w,z)$, and assume that
     
      (i) $(-1)^{V'_i}\pr{\ell(W_i,\tilde{z}^{i,1}_{1,\bar{u}_1^{i,1}})-\ell(W_i,\tilde{z}^{i,0}_{1,\bar{u}_1^{i,0}})}$ is $\sigma_i^2$-sub-Gaussian under $P_{V'_i}\otimes P_{W_i|\tilde{z}^i,u^i}$ for any $i$,
      % $i\in[K]$, 

     (ii) $(-1)^{{U'}^{i,v_i}_j}\pr{\ell(W_i,\tilde{z}^{i,v_i}_{j,1})-\ell(W_i,\tilde{z}^{i,v_i}_{j,0})}$ is $\tilde{\sigma}_{i,j}^2$-sub-Gaussian under $P_{{U'}^{i,v_i}_j|v_i}\otimes P_{W_i|\tilde{z}^i,v_i}$ for any $i,j$.
     % $i\in[K]$ and $j\in[n]$.
     
     % (i) $(-1)^{V'_i}\pr{\gamma(W_i,\tilde{z}^{i,1}_{\bar{u}^{i,1}})-\gamma(W_i,\tilde{z}^{i,0}_{\bar{u}^{i,0}})}$ is $\sigma_i^2$-sub-Gaussian under $P_{V'_i}\otimes P_{W_i|\tilde{z}^i,u^i}$ for any $i\in[K]$, 

     % (ii) $(-1)^{{U'}^{i,v_i}_j}\pr{\ell(W_i,\tilde{z}^{i,v_i}_{j,1})-\ell(W_i,\tilde{z}^{i,v_i}_{j,0})}$ is $\sigma_{i,j}^2$-sub-Gaussian under $P_{{U'}^{i,v_i}_j|v_i}\otimes P_{W_i|\tilde{z}^i,v_i}$ for any $i\in[K]$ and $j\in[n]$, 
     Then, for $\mathcal{A}$ using model averaging,
     % then the following bound holds for $\mathcal{A}$ using model averaging,
    \begin{align*}
        \abs{\mathcal{E}_{\mathcal{D}}(\mathcal{A})}&\leq\frac{1}{K^2}\sum_{i=1}^K\mathbb{E}_{\widetilde{Z}^i,U^i}\sqrt{2\sigma^2_iI^{\widetilde{Z}^i,U^i}(W_i;V_i)}\\
        &\, +\frac{1}{K^2n}\sum_{i=1}^K\sum_{j=1}^n\mathbb{E}_{\widetilde{Z}^i,V_i}\sqrt{2\sigma^2_{i,j}I^{\widetilde{Z}^i,V_i}(W_i;U^{i,V_i}_j)}.
    \end{align*}
\end{thm}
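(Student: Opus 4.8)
The plan is to exploit the affine structure of Bregman-loss differences so as to replace the averaged global model $W=\frac1K\sum_{i'}W_{i'}$ by the individual local models $W_i$, which is precisely what produces the extra $1/K$ factor and hence the fast rate in $K$. The starting point is Lemma~\ref{lem:cmi-symmetric}, which writes $\mathcal{E}_{PG}(\mathcal{A})$ and $\mathcal{E}_{OG}(\mathcal{A})$ as averages of signed loss differences of the form $\pr{\ell(W,z_1)-\ell(W,z_0)}$ at a fixed model but two data points. For $\ell=D_f$, the term $f(W)$ cancels in such a difference, leaving the affine identity
$$\ell(W,z_1)-\ell(W,z_0)=A(z_0,z_1)+\langle \nabla f(z_0)-\nabla f(z_1),\,W\rangle,$$
where $A(z_0,z_1)$ does not depend on $W$. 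This identity is the whole engine of the argument.

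First I would substitute $W=\frac1K\sum_{i'=1}^K W_{i'}$ into this identity, so that each signed loss difference becomes $A(z_0,z_1)+\frac1K\sum_{i'}\langle\nabla f(z_0)-\nabla f(z_1),W_{i'}\rangle$. For the participation-gap term indexed by $i$, the sign is $(-1)^{V_i}$ and the test points $z_0=\widetilde{Z}^{i,0}_{1,\overline{U}^{i,0}_1}$, $z_1=\widetilde{Z}^{i,1}_{1,\overline{U}^{i,1}_1}$ are functions of $(\widetilde{Z}^i,U^i)$ only, hence independent of $V_i$; similarly, for the out-of-sample-gap term indexed by $(i,j)$ the sign is $(-1)^{U^{i,V_i}_j}$ and the points $z_0=\widetilde{Z}^{i,V_i}_{j,0}$, $z_1=\widetilde{Z}^{i,V_i}_{j,1}$ do not depend on $U^{i,V_i}_j$. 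Because the relevant Bernoulli variable is uniform with zero mean and independent of $A(z_0,z_1)$, the expectation of the sign times $A(z_0,z_1)$ vanishes. Likewise, for every $i'\neq i$ the local model $W_{i'}$ is built from client $i'$'s data and selection variables, hence is independent of the sign variable of client $i$, so $\mathbb{E}[(\pm1)\langle\cdots,W_{i'}\rangle]=0$. Only the $i'=i$ summand survives, which turns the prefactor $\frac1K$ of Lemma~\ref{lem:cmi-symmetric} into $\frac1{K^2}$.

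Next I would reverse the affine identity, now applied at the single model $W_i$, to write $\langle\nabla f(z_0)-\nabla f(z_1),W_i\rangle = \ell(W_i,z_1)-\ell(W_i,z_0)-A(z_0,z_1)$; the residual $A$ term again drops out after multiplying by the zero-mean sign. This yields $\mathcal{E}_{PG}(\mathcal{A})=\frac1{K^2}\sum_i \mathbb{E}\br{(-1)^{V_i}\pr{\ell(W_i,z_1)-\ell(W_i,z_0)}}$ and an analogous expression for $\mathcal{E}_{OG}(\mathcal{A})$ with prefactor $\frac1{K^2 n}$, both now phrased entirely in terms of the local model $W_i$. To each single-client term I would then apply the same disintegrated Donsker--Varadhan / CMI step used in the proof of Theorem~\ref{thm:main-cmi}, invoking sub-Gaussianity assumptions (i) and (ii) in place of the Hoeffding bound that boundedness supplied there; this converts the participation term into $\mathbb{E}_{\widetilde{Z}^i,U^i}\sqrt{2\sigma_i^2 I^{\widetilde{Z}^i,U^i}(W_i;V_i)}$ and the out-of-sample term into $\mathbb{E}_{\widetilde{Z}^i,V_i}\sqrt{2\sigma_{i,j}^2 I^{\widetilde{Z}^i,V_i}(W_i;U^{i,V_i}_j)}$. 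A final triangle inequality $\abs{\mathcal{E}_{\mathcal{D}}(\mathcal{A})}\le\abs{\mathcal{E}_{PG}(\mathcal{A})}+\abs{\mathcal{E}_{OG}(\mathcal{A})}$ assembles the claimed bound.

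I expect the main obstacle to be the careful bookkeeping of the conditional-independence structure that justifies discarding the cross terms ($i'\neq i$) and the $A$ term: one must verify, for each gap separately, that the sign variable attached to client $i$ is independent of $W_{i'}$, of the two evaluated data points, and of $A(z_0,z_1)$, under the exact product measures in which the sub-Gaussianity assumptions are posed. Once this independence is pinned down the remaining estimates are routine, but it is the step where the FedAvg averaging and the symmetry of the superclient/supersample construction genuinely interact, and where an error would silently spoil the $1/K^2$ scaling.
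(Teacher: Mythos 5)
Your proposal is correct, and it reaches the paper's key intermediate identity (Lemma~\ref{lem:local-gap}, which restates $\mathcal{E}_{PG}$ and $\mathcal{E}_{OG}$ in terms of the \emph{local} models $W_i$ with the extra $1/K$ factor) by a genuinely different mechanism. The paper proves that identity with a replace-one construction: it resamples client $i$ (resp.\ datum $(i,j)$), forms the ghost aggregate $W^{(i)}=\frac{1}{K}\pr{\sum_{k\neq i}W_k+W_i'}$, uses equality of marginals $\ex{}{f(W)}=\ex{}{f(W^{(i)})}$ and $\ex{}{f(W_i)}=\ex{}{f(W_i')}$ to reduce the expected Bregman difference to $\frac{1}{K}\ex{}{\langle\nabla f(\bar Z_i'),\,W_i'-W_i\rangle}$, and then converts back to a local loss difference. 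You instead use the exact pathwise expansion $\ell(w,z_1)-\ell(w,z_0)=A(z_0,z_1)+\langle\nabla f(z_0)-\nabla f(z_1),w\rangle$ (valid since $f(w)$ cancels), substitute $W=\frac{1}{K}\sum_{i'}W_{i'}$ directly into the symmetric representation of Lemma~\ref{lem:cmi-symmetric}, and annihilate both the $A$ term and all $i'\neq i$ cross terms because the zero-mean sign ($(-1)^{V_i}$, resp.\ $(-1)^{U_j^{i,V_i}}$ conditionally on $V_i=v_i$) is independent of $(z_0,z_1,W_{i'})$ --- which holds in the single-round setting since $W_{i'}$ is a function of $(\widetilde{Z}^{i'},V_{i'},U^{i'})$ and algorithmic randomness only; reversing the same expansion at $W_i$ then recovers exactly the two displays of Lemma~\ref{lem:local-gap}. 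Your endgame coincides with the paper's: the disintegrated Donsker--Varadhan step of Theorem~\ref{thm:main-cmi} with the sub-Gaussian CGF bounds from assumptions (i)--(ii) replacing Hoeffding's lemma, followed by the triangle inequality. Comparatively, your cancellation is exact rather than in expectation, so you avoid the equality-of-marginals bookkeeping (and any integrability condition on $f(W)$), and your derivation makes explicit precisely which independences the $1/K^2$ scaling consumes --- hence where multi-round communication would break it; the paper's ghost-model formulation, by contrast, is the one reused essentially verbatim in the strongly convex/smooth analysis of Theorem~\ref{thm:average-Lipshitz-cmi}, where the loss difference is no longer affine in $w$ and your linear-expansion trick would not apply. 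The one delicate point you flagged --- that for the out-of-sample term the zero-mean/independence argument must be run conditionally on $V_i=v_i$, matching the product measure $P_{{U'}_j^{i,v_i}|v_i}\otimes P_{W_i|\tilde z^i,v_i}$ in assumption (ii) --- is indeed the right thing to check, and your resolution of it is sound.
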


\begin{rem}
    The assumptions given in (i-ii) are weaker than the boundedness assumption for the loss function, as if $\ell$ is bounded, then the difference between two losses is also bounded, which is guaranteed to be sub-Gaussian. In the case of heavy-tailed losses, additional techniques such as truncation \cite{dong2024rethinking} would be required to ensure the results hold. Furthermore, Theorem~\ref{thm:average-bregman-cmi} can be interpreted as an end-to-end generalization bound for a single round of FedAvg. 
    Extending it to a multi-round setting is straightforward by following the same approach used in \citet[Corollary~3]{barnes2022improved}.
\end{rem}
Compared to the results in Section~\ref{sec:cmi-bound}, Theorem~\ref{thm:average-bregman-cmi} introduces two notable distinctions. First, the global CMI terms (i.e., CMI based on $W$) are replaced by local CMI terms (i.e., CMI based on $W_i$). Second, an additional $\frac{1}{K}$ factor appears in the bound, resulting in a faster convergence rate. While the data-processing inequality indicates that global CMI terms are smaller than local CMI terms, the specific model aggregation strategy and loss function allow us to explicitly observe this fast-rate behavior.

Particularly, in this setting, we rely solely on local privacy constraints to guarantee generalization. For example, if each $\mathcal{A}_i$ is $\epsilon$-differentially private for all $i$, then we obtain $\abs{\mathcal{E}_{\mathcal{D}}(\mathcal{A})}\leq\mathcal{O}\pr{\frac{1}{K}\sqrt{\frac{\epsilon}{K}} +\frac{1}{K}\sqrt{\frac{\epsilon}{n}}}$, where the derivation is nearly the same to Lemma~\ref{lem:CMI-DP}, and the maximum value of the sub-Gaussian variance proxies is considered. This result clearly improves upon the bounds in Section~\ref{sec:cmi-bound} (cf.~Remark~\ref{rem:cmi-privacy}). Additionally, \citet{barnes2022improved} also presents a {\it communication constraint}-based result, which can also be applied to the overall FL generalization error. Specifically, if each client $i$ can only transmit $B$ bits of information to the central server, then each $W_i$ can take at most $2^B$ distinct values after quantization. In this case, $I^{\widetilde{Z}^i,U^i}(W_i;V_i)\leq H(W_i)\leq B\log{2}$ and $I(W_i;U^{i,V_i}|\widetilde{Z}^i,V_i)\leq B\log{2}$. As a result, the generalization error is bounded by $\abs{\mathcal{E}_{\mathcal{D}}(\mathcal{A})}\leq \mathcal{O}\pr{\frac{\sqrt{B}}{K}+\frac{1}{K}\sqrt{\frac{B}{n}}}$.

% and we let $\sigma_{\max}=\max\{\sigma_i\}_i$ and $\sigma'_{\max}=\max\{\sigma_{i,j}\}_{i,j}$,
% Notably, OGOGOG our Theorem~\ref{thm:average-bregman-cmi} is better than \citet[Theorem~4]{barnes2022improved} because our CMI terms are bounded by constants, while their mutual information terms are unbounded. Specifically, our bound is upper bounded by $\frac{\sigma\sqrt{\log(2)}}{K}+\frac{\sigma'\sqrt{\log(2)}}{K}$.

Although many popular loss functions, such as the squared loss and KL divergence, can be regarded as special cases of Bregman divergence \cite{yamane2023mediated}, the Bregman loss $D_f(w, z)$ used above requires $w$ and $z$ to have the same dimension. This requirement may limit its applicability beyond nonparametric algorithms.
% for example, letting $f(x)=||x||^2$ then $D_f(x,y)$ becomes $||x-y||^2$, which is the squared $L_2$ loss, and \textcolor{red}{letting $f(x)$ be the negative entropy function (is this function strictly convex?),  $D_f(x,y)$ would be the binary cross-entropy loss}. 
Recently, \citet{gholami2024improved} achieves an even faster decay rate for $\abs{\mathcal{E}_{OG}(\mathcal{A})}$ by using a smooth and strongly convex loss function. Building on their assumptions and techniques, we extend this analysis in an information-theoretic framework. Additionally, we assume that each participating client's local algorithm is an interpolating algorithm (i.e., achieving zero empirical risk). This leads to the following result.

% In the following result, we relax this Bregman loss restriction, and show that the generalization error for FL algorithms using model averaging has a fast convergence rate with respect to $m$.

\begin{thm}
    \label{thm:average-Lipshitz-cmi}
    Let $\ell(\cdot,\cdot)\in[0,1]$ be $L$-smooth and $\alpha$-strongly convex in $\mathcal{W}$, and let $\mathcal{A}_i$ be an interpolating algorithm for each client $i$. Let $\gamma=\frac{2L}{\alpha\log{2}}$,
     then 
    \begin{align*}
    &\abs{\mathcal{E}_{OG}(\mathcal{A})}\leq\frac{\gamma}{ K^3n}\sum_{i=1}^K\sum_{j=1}^nI(W_i;U_j^{i,V_i}|\widetilde{Z}^i,V_i)\\
    &+ \frac{2\sqrt{\gamma}}{K^2n}\sum_{i=1}^K\sum_{j=1}^n\sqrt{\ex{}{\ell(W,Z_{i,j})}I(W_i;U_j^{i,V_i}|\widetilde{Z}^i,V_i)}.
\end{align*}
% where $\gamma=\frac{L}{\alpha\log{2}}$.
\end{thm}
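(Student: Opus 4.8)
The plan is to bound $\mathcal{E}_{OG}(\mathcal{A})$ starting from its symmetric form in Lemma~\ref{lem:cmi-symmetric}, namely $\frac{1}{Kn}\sum_{i,j}\ex{}{(-1)^{U^{i,V_i}_j}\pr{\ell(W,\widetilde{Z}^{i,V_i}_{j,1})-\ell(W,\widetilde{Z}^{i,V_i}_{j,0})}}$, and to exploit the averaging structure $W=\frac{1}{K}\sum_{i=1}^K W_i$ together with smoothness and strong convexity, following the optimization-stability techniques of \citet{gholami2024improved}. For a fixed $(i,j)$, I would freeze $\widetilde{Z}^i$, $V_i$, the other clients' models, and the remaining sampling bits $U^{i,V_i}_{k}$ ($k\neq j$), and introduce the two local models $W_i^0,W_i^1$ obtained by setting $U^{i,V_i}_j=0$ or $1$ (training on column $0$ versus column $1$ of the $j$-th supersample pair). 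The corresponding global models $W^0,W^1$ differ only in the $i$-th summand, so $W^0-W^1=\frac{1}{K}(W_i^0-W_i^1)$; this identity is what produces the extra $1/K$ factors relative to Section~\ref{sec:cmi-bound}. Averaging the per-sample term over $U^{i,V_i}_j$ rewrites it, up to sign, as a sum of loss differences of the form $\ell(W^0,z)-\ell(W^1,z)$ at the two columns.

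Next I would expand each such difference by $L$-smoothness in $w$, using $\abs{\ell(W^0,z)-\ell(W^1,z)}\le \norm{\nabla_w\ell(W^1,z)}\,\norm{W^0-W^1}+\frac{L}{2}\norm{W^0-W^1}^2$, and control the gradient by the self-bounding inequality for non-negative $L$-smooth functions, $\norm{\nabla_w\ell(w,z)}\le \sqrt{2L\,\ell(w,z)}$. Evaluated at the training column this makes $\ell$ equal to the global training loss $\ell(W,Z_{i,j})$, which is exactly the $\ex{}{\ell(W,Z_{i,j})}$ factor in the statement. Substituting $\norm{W^0-W^1}=\frac{1}{K}\norm{W_i^0-W_i^1}$, each per-sample term is then controlled by a cross term $\frac{1}{K}\sqrt{\ell(W,Z_{i,j})}\,\norm{W_i^0-W_i^1}$ and a quadratic term $\frac{1}{K^2}\norm{W_i^0-W_i^1}^2$.

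The model sensitivity $\norm{W_i^0-W_i^1}$ is then bounded via $\alpha$-strong convexity together with interpolation. Since $\mathcal{A}_i$ is interpolating, $W_i^0$ attains zero loss on its training column $z_0:=\widetilde{Z}^{i,V_i}_{j,0}$, so $W_i^0$ is the unique minimizer of the $\alpha$-strongly convex map $w\mapsto\ell(w,z_0)$ and $\nabla_w\ell(W_i^0,z_0)=0$. Expanding $\ell(\cdot,z_0)$ by strong convexity at $W_i^0$ and evaluating at $W_i^1$ yields $\frac{\alpha}{2}\norm{W_i^0-W_i^1}^2\le \ell(W_i^1,z_0)$, whose right-hand side is the local held-out (test) loss of client $i$ at sample $j$; crucially this per-sample argument carries no spurious $1/n$ factor. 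Both the cross and quadratic terms are thereby reduced to this local held-out loss. Because interpolation forces the matching local training loss to vanish, I would convert the expected held-out loss into conditional mutual information by the single-loss (e-CMI) fast-rate inequality of \citet{wang2023tighter} underlying Theorem~\ref{thm:cmi-fast-rate}: in the zero-training-loss regime it gives $\ex{}{\ell(W_i,\widetilde{Z}^{i,V_i}_{j,\overline{U}^{i,V_i}_j})}\lesssim \frac{1}{\log 2}\,I(W_i;U^{i,V_i}_j\mid \widetilde{Z}^i,V_i)$ after a data-processing step that upgrades the evaluated CMI to the hypothesis-based term, with $\log 2=H(U^{i,V_i}_j)$ being the source of $\gamma=\frac{2L}{\alpha\log 2}$.

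Finally I would assemble the pieces: the quadratic term contributes $\frac{L}{\alpha}\cdot\frac{1}{K^3 n}\sum_{i,j} I(W_i;U^{i,V_i}_j\mid\widetilde{Z}^i,V_i)$, matching the first term, while applying Cauchy--Schwarz $\ex{}{\sqrt{ab}}\le\sqrt{\ex{}{a}\ex{}{b}}$ to the cross term produces $\frac{2\sqrt{\gamma}}{K^2 n}\sum_{i,j}\sqrt{\ex{}{\ell(W,Z_{i,j})}\,I(W_i;U^{i,V_i}_j\mid\widetilde{Z}^i,V_i)}$, matching the second; symmetrizing over the two columns supplies the absolute value. The main obstacle I anticipate is the clean interface between the optimization-side stability bound and the information-side CMI bound: one must verify that the per-sample leave-one-out argument (relying on interpolation making $W_i$ minimize each individual sample loss, so that no $1/n$ appears) remains compatible with marginalizing over the other sampling bits in $I(W_i;U^{i,V_i}_j\mid\widetilde{Z}^i,V_i)$, and that the fast-rate single-loss inequality truly collapses to a term linear in CMI with the constant reproducing $\gamma$. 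Tracking these constants and the local-versus-global distinction---gradients are taken at the global $W$ while stability is measured on the local $W_i$---is where the bookkeeping is most delicate.
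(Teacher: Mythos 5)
Your proposal is correct and takes essentially the same route as the paper: a replace-one-sample stability argument exploiting $W^0-W^1=\frac{1}{K}(W_i^0-W_i^1)$, smoothness to split the loss difference into a Cauchy--Schwarz cross term and a quadratic term, the gradient bound $\norm{\nabla\ell(w,z)}^2\leq 2L\,\ell(w,z)$ (the paper writes it as $2L(\ell(w,z)-\ell(w^*,z))$ with $\nabla\ell(W'_{i,j},Z'_{i,j})=0$, identical under interpolation), strong convexity plus interpolation to bound $\norm{W_i^0-W_i^1}^2$ by the per-sample held-out loss, and the fast-rate single-loss inequality in the $C_2\to\infty$ limit giving the constant $C_4=\frac{\log 2}{2}$ and hence $\gamma=\frac{2L}{\alpha\log 2}$. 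The only cosmetic difference is that you symmetrize by flipping the bit $U^{i,V_i}_j$ within the supersample, whereas the paper introduces a ghost resample $Z'_{i,j}$ and uses the exchangeability identity $\ex{}{\ell(W,Z'_{i,j})}=\ex{}{\ell(W^{(i,j)},Z'_{i,j})}$; these devices are equivalent here, and the compatibility concern you flag (per-sample coupling versus marginalizing the other bits inside $I(W_i;U^{i,V_i}_j\mid\widetilde{Z}^i,V_i)$) resolves exactly as you anticipate.
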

Clearly, the decay rate with respect to $K$ in this bound is even faster than that for  $\abs{\mathcal{E}_{OG}(\mathcal{A})}$ in Theorem~\ref{thm:average-bregman-cmi}. Note that the boundedness assumption can be relax to sub-Gaussianity (i.e., an ``almost bounded'' case).  The assumption of an interpolating algorithm is also practical due to the widespread use of overparameterized deep neural networks in modern deep learning scenarios.

Regarding the local CMI term $I(W_i;U_j^{i,V_i}|\widetilde{Z}^i,V_i)$ in Theorem~\ref{thm:average-Lipshitz-cmi}, note that in our single-round setting, the local model is trained independently and does not communicate with other clients. As a result, the CMI term associated with client $i$ is, by construction, independent of the CMI terms of other clients. It reflects only the relationship between a single client's data and its corresponding model and therefore does not, in general, provide a direct or definitive measure of cross-client data heterogeneity. However, if the data distribution itself is governed by a heterogeneity parameter, as in the experimental setup of \citet{sun2024understanding}, then the value of this CMI term may indeed vary with that parameter, as it ultimately depends on the underlying data distribution.

 Moreover, the use of smooth and strongly convex loss functions, often employed to ensure the linear convergence of gradient descent (GD), has also been studied in the context of personalized FL generalization. For example, \citet{chen2021theorem} shows that if the data heterogeneity---measured as the minimum average (squared) $L_2$ distance between global model parameters and optimal local parameters---is mild, then FedAvg achieves minimax optimality. In Theorem~\ref{thm:average-Lipshitz-cmi}, data heterogeneity is captured by the factor $\ex{}{\ell(W,Z_{i,j})}$ in the second term of the bound. In fact, if $\mathcal{A}_i$ does not interpolate the training data, this factor is replaced by $\ex{}{\ell(W,Z_{i,j})}-\ex{}{\ell(W_i,Z_{i,j})}$. When this factor vanishes, i.e., when data heterogeneity is negligible, the second term (which contains the square-root function) becomes negligible, and the first term (with a faster rate, e.g., $\mathcal{O}\pr{\frac{B}{K^2}}$ for a communication constraint) dominates the bound. This observation aligns with \citet{chen2021theorem}, showing that FedAvg performs better when data heterogeneity is low.

Additionally, we remark that extending a similar analysis to bound $\abs{\mathcal{E}_{PG}(\mathcal{A})}$ is not straightforward. This would require that, for each client, there exists a minimizer that achieves a zero-gradient signal for any unseen local data, which is generally not guaranteed.

% The first term can be further upper bounded and the second can be vanishing if .... 

% \textcolor{red}{Smooth and strongly convex function cannot help in achieving fast rate for $\abs{\mathcal{E}_{PG}(\mathcal{A})}$ because zero gradient signal is needed from a technical side.}

% Can be regarded as a single round bound.

% \subsection{Multi-Round Generalization Bounds}

\section{Empirical Verification}
In this section, we empirically evaluate the effectiveness of our CMI bounds by comparing them to the expected generalization gap. Specifically, we estimate the fast-rate e-CMI bound for FL, as presented in Theorem~\ref{thm:cmi-fast-rate}, referred to as the {\it Fast-rate Bound} in our experiments. Additionally, due to the challenges associated with estimating MI when dealing with high-dimensional random variables, we compute an evaluated version of the CMI bound from Theorem~\ref{thm:main-cmi}. This e-CMI bound, derived in Theorem~\ref{thm:main-ldcmi} (in Appendix), serves as a lower bound for Theorem~\ref{thm:main-cmi} and is denoted as the {\it Square-root Bound} in our experiments. Our experimental setup, particularly the construction of supersamples,
% \footnote{We adapt the code from \hyperlink{https://github.com/hrayrhar/f-CMI}{https://github.com/hrayrhar/f-CMI} for supersample construction and CMI computation}, 
closely follows previous works \cite{harutyunyan2021informationtheoretic,hellstrom2022a,wang2023tighter,wang2024generalization,dong2024rethinking},
% Specifically, we adapt the code from \hyperlink{https://github.com/hrayrhar/f-CMI}{https://github.com/hrayrhar/f-CMI} for supersample construction and CMI computation, 
and we construct the superclient in a similar manner. For the FL learning process, we implement FedAvg following the federated training protocol
% \footnote{We adapt the FL training code from \hyperlink{https://github.com/vaseline555/Federated-Learning-in-PyTorch}{https://github.com/vaseline555/Federated-Learning-in-PyTorch}.} 
of \citet{mcmahan2017communication}, with modifications to reduce computational overhead. We conduct image classification experiments on two datasets: MNIST \cite{lecun2010mnist} and CIFAR-10 \cite{krizhevsky2009learning}. 
The details of our experimental setup and results are presented below.

% \vspace{-3mm}
% \paragraph{Experiment Setup} 
\subsection{Experiment Setup}
For the MNIST digit recognition task, we train a CNN with approximately $170$K parameters. 
% The model consists of two $5\times 5$ convolutional layers---the first with $32$ channels and the second with $64$---each followed by $2\times 2$ max pooling. These are followed by a fully connected layer with $512$ units and ReLU activation, and a final softmax output layer. 
Each local training algorithm $\mathcal{A}_i$ trains this model using GD.
% with an initial learning rate of $0.1$, which decays by a factor of 0.01 every 10 steps. 
% At each FL round, clients train locally for $5$ epochs before sending their models to the central server. 
% The entire training process spans communication $300$ rounds between clients and the central server, reducing the commonly used $1000$ rounds to lower computational costs.
For CIFAR-10, we use the same CNN model from \citet{mcmahan2017communication}, which has approximately $10^6$ parameters. 
% The architecture consists of two convolutional layers, followed by two fully connected layers and a final linear transformation layer. 
As also discussed in \citet{mcmahan2017communication}, the SOTA models, such as vision transformers (ViTs) \cite{dosovitskiy2021an}, achieve over $99\%$ accuracy on CIFAR-10. However, since our focus is on verifying generalization bounds rather than achieving SOTA performance, training a ViT-scale model would be unnecessarily costly. Therefore, a simple CNN suffices for our purpose. 
% The CIFAR-10 images are preprocessed as part of the training pipeline, including cropping to $24\times 24$, random horizontal flipping, and adjustments to contrast, brightness, and whitening. 
Each local training algorithm $\mathcal{A}_i$ trains the CNN model using mini-batch SGD. 
% with a mini-batch size of $50$. 
% and follows the same learning rate schedule as in the MNIST experiment. 
% As in the MNIST setup, clients train locally for five epochs per round before sending their models to the central server, with training spanning $300$ communication rounds. 

For both of the classification tasks, clients train locally for five epochs per round before sending their models to the central server, with training spanning $300$ communication rounds (which reduces the commonly used $1000$ rounds to lower computational costs). Additionally, we apply a pathological non-IID data partitioning scheme as in \citet{mcmahan2017communication}: data are sorted by label, split into $200$ shards of size $300$, and each client is randomly assigned $2$ shards, and we evaluate prediction error as our performance metric, i.e. we utilize the zero-one loss function to compute generalization error. During training, we use cross-entropy loss as a surrogate to enable optimization with gradient-based methods. 
% To introduce a non-IID setting, we apply a pathological non-IID data partitioning scheme as in \citet{mcmahan2017communication}. Specifically, all images are first sorted by their labels, divided into shards, and then assigned to clients such that each client receives two shards, resulting in most clients having examples from only two digit classes. 
Since we pre-define the non-participating clients from the superclient, all participating clients are included in every round of training.

The complete experiment details including 
other hyperparameter settings and 
CMI estimation can be found in Appendix~\ref{sec:exp-details}.

% For supersample and superclient construction, when analyzing generalization behavior concerning the sample size $n$, we fix the superclient size at $100$, leading $50$ participating clients and $50$ non-participating clients randomly selected by $V$. The sample size per client varies within $n\in\{10,50,100, 250\}$. When analyzing generalization behavior with respect to the number of participating clients $K$, we set $n=100$ for MNIST and $n=50$ for CIFAR-10, varying the number of clients as $K\in\{10,20,30,50\}$. To estimate the CMI terms, for both tasks, we draw three samples of $\widetilde{Z}$ and $V$, and $15$ samples of $U$ for each given $\tilde{z}$ and $v$. In each experiment, for a fixed sample size and client number, the individual CMI term $I(L_j^{i+};U_j^{i,V_i}|V_i=v_i)$ is estimated using $15$ samples, while $I(\bar{L}_i^+;V_i)$ is estimated using $45$ samples, utilizing a simple mutual information estimator for discrete random variables as in \citet{harutyunyan2021informationtheoretic}. In total, we conduct $720$ runs of FedAvg for these classification tasks. All experiments are performed using NVIDIA A100 GPUs with $40$ GB of memory.

\begin{figure}[!ht]
% \vspace{-4.2mm}
    \centering
    \begin{subfigure}[b]{0.238\textwidth}
    \begin{center}
\includegraphics[scale=0.27]{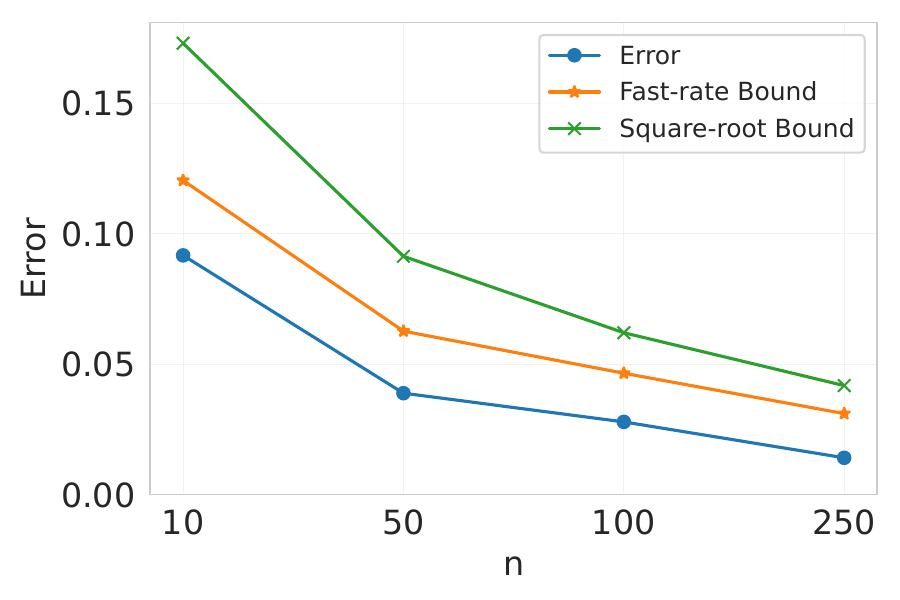}         
\caption{MNIST ($n$)}  
\end{center}
\label{fig:mnist-n}
    \end{subfigure}
    \begin{subfigure}[b]{0.238\textwidth}
    \begin{center}
        \includegraphics[scale=0.27]{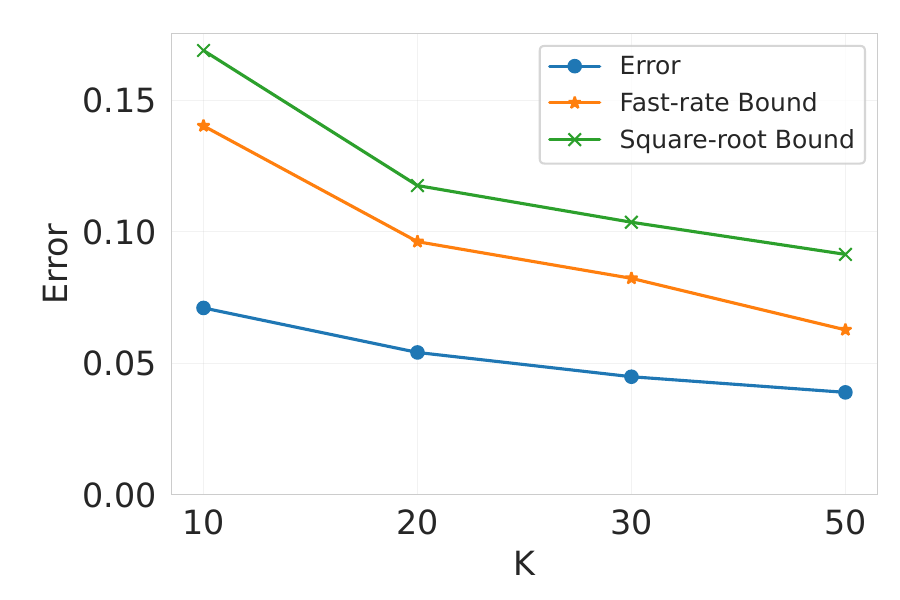}  
\caption{MNIST ($K$)}  
    \end{center}      
\label{fig:mnist-k}
    \end{subfigure}
    
    \begin{subfigure}[b]{0.238\textwidth}
    \centering
\includegraphics[scale=0.27]{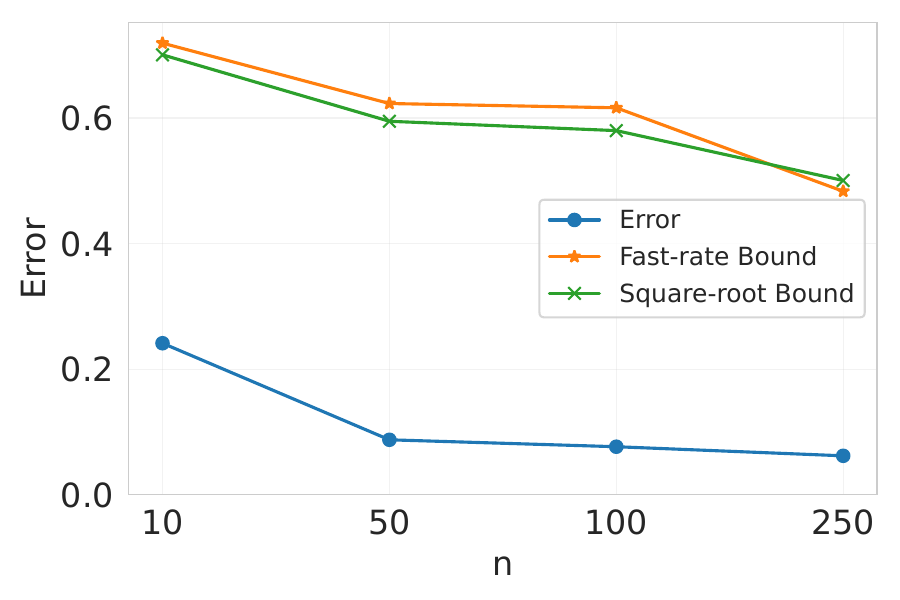}          
\caption{CIFAR-10 ($n$)}            
\label{fig:cifar-n}
    \end{subfigure}
\begin{subfigure}[b]{0.238\textwidth}
\centering
\includegraphics[scale=0.27]{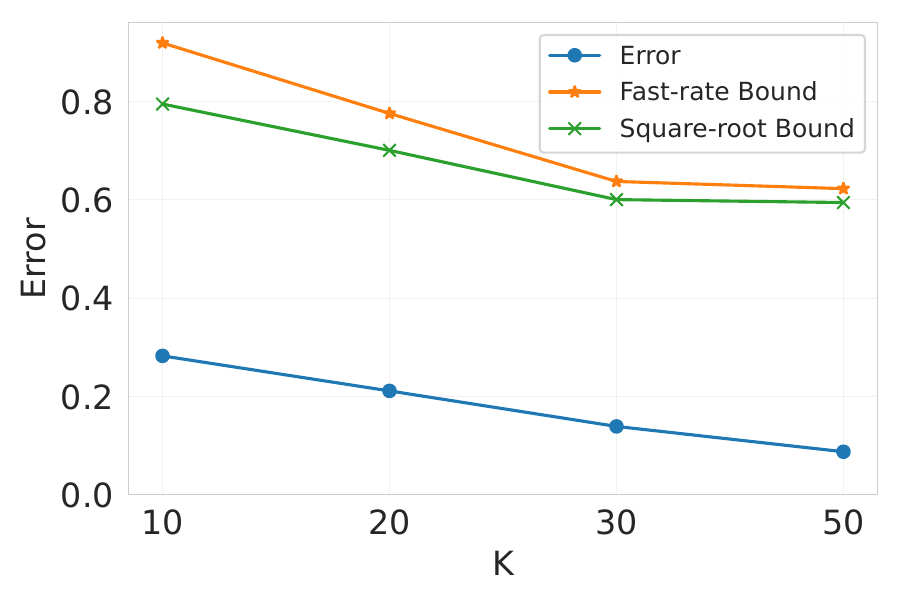}
\caption{CIFAR-10 ($K$)}
    \label{fig:cifar-k}
\end{subfigure}
% \vspace{-2mm}
\caption{Verification of bounds on MNIST and CIFAR-10. (a-b) Results on MNIST with different $n$ and $K$. (c-d) Results on CIFAR-10 with different $n$ and $K$.}
\label{fig:results}
% \vspace{-2mm}
\end{figure}

% \vspace{-1mm}
% \paragraph{Results}
\subsection{Results}
Figure~\ref{fig:results} presents the estimated expected generalization gap alongside our CMI bounds. Notably, both CMI bounds are non-vacuous, meaning they remain below the upper bound of the loss function (i.e., $1$). More importantly, they effectively capture the generalization behavior of the FL learning algorithm and are numerically very close to the expected generalization gap. For MNIST, where the algorithm achieves a small training error, the fast-rate bound from Theorem~\ref{thm:cmi-fast-rate} performs significantly better than the square-root CMI bound in Theorem~\ref{thm:main-cmi} in this low empirical risk regime. On CIFAR-10, since we use a simple CNN model and limit the number of communication rounds, the empirical risk remains non-negligible. In this case, we observe that the evaluated version of Theorem~\ref{thm:main-cmi} outperforms Theorem~\ref{thm:cmi-fast-rate} in most settings.
While the tightness of e-CMI bounds in standard centralized i.i.d. learning has been observed in previous works \cite{harutyunyan2021informationtheoretic,hellstrom2022a,wang2023tighter,dong2024rethinking}, 
our results confirm that CMI bounds remain numerically tight in FL.

% we undertake two distinct prediction tasks as follows: 1) \textit{Linear Classifier on Synthetic Gaussian Dataset}, where we train a simple linear classifier using a synthetic Gaussian dataset; 2) \textit{CNN and ResNet-50 on Real-World Datasets}. The second task follows the same deep learning training protocol as in \cite{harutyunyan2021informationtheoretic}, involves training a $4$-layer CNN on a binary MNIST dataset (``$4$ vs $9$'') \cite{lecun2010mnist} and fine-tuning a ResNet-50 model \cite{he2016deep}, pretrained on ImageNet \cite{deng2009imagenet}, on CIFAR10 \cite{krizhevsky2009learning}. In both tasks, we assess prediction error as our performance metric. That is, we utilize the zero-one loss function to compute generalization error. Note that during training, we use the cross-entropy loss as a surrogate to enable optimization with gradient-based methods.
\section{Other Related Literature}

\paragraph{Federated Learning Methods}
FedAvg \cite{mcmahan2017communication} is one of the earliest and most widely used FL algorithms. However, its performance degrades significantly in non-i.i.d. settings. To mitigate this issue, FedProx \cite{li2020federated} introduces a regularization term,  $||w-w_i||$, which helps stabilize updates and reduce divergence among local models. In fact, the squared form of this regularization term naturally arises in the derivation of our Theorem~\ref{thm:average-Lipshitz-cmi} (see Appendix~\ref{sec:proof-convex-smooth}), and when the CMI term in Theorem~\ref{thm:average-Lipshitz-cmi} is small, this regularization term also tends to be small.
Another approach, 
SCAFFOLD \cite{karimireddy2020scaffold}, tackles data heterogeneity using variance reduction techniques to improve convergence. 
FedOpt \cite{reddi2021adaptive} extends adaptive optimization methods to FL, enabling more efficient learning dynamics, 
while FedNova \cite{wang2020tackling} introduces a normalized averaging scheme to mitigate objective inconsistencies while preserving fast training convergence. More recently, leveraging information-theoretic analysis, FedMDMI \cite{zhang2024improving} incorporates a global model-data MI term as a regularizer to improve generalization performance. Beyond these methods, numerous other FL algorithms have been proposed, we refer readers to \cite{huang2024federated,solans2024non} for recent advances.

\paragraph{Generalization Theory in Federated Learning}
Recent research has increasingly focused on understanding generalization in FL. In addition to works in \cite{chen2021theorem,hu2023generalization,yagli2020information,barnes2022improved,gholami2024improved,zhang2024improving}, which we have discussed throughout the paper, several other studies provide new insights. \citet{sefidgaran2022rate, sefidgaran2024lessons} introduce rate-distortion-based generalization bounds for FL, with \citet{sefidgaran2024lessons} demonstrating that increasing the frequency of communication rounds does not always reduce generalization error and, in some cases, may even degrade generalization performance.
Stability-based generalization analysis has also been explored in FL. For instance, \citet{sun2024understanding} analyze the impact of data heterogeneity, concluding that greater heterogeneity leads to higher generalization error. It is also worth noting that generalization theory in meta-learning is closely related to FL. Particularly, personalized FL can be viewed as a variant of meta-learning \cite{fallah2020personalized}. 
In fact, our CMI framework is inspired in part by a recent CMI-based generalization bound for meta-learning proposed in \citet{hellstrom2022evaluated}. However, our results are not directly comparable to theirs due to a key difference in problem setup: their meta-learning framework requires the meta-learner (i.e., the global model $W$) to be further trained on the test tasks (i.e., previously non-participating clients), whereas in FL, the global model is evaluated directly on unseen clients without any additional local fine-tuning. To enable a direct comparison with \citet{hellstrom2022evaluated}, our CMI framework would need to be extended to the personalized FL setting, where local adaptation of the global model is permitted.

% Moreover, \citet{hu2023generalization} provided theoretical insights into the generalization bounds for clients that did not participate in training, offering bounds for unparticipating clients under certain conditions. 

% These studies provide a solid theoretical foundation for understanding the behavior of FL under various data distributions, communication frequencies, and algorithmic settings, particularly concerning generalization performance.

% With Meta Learning theory,... 

\section{Concluding Remarks}
In this paper, we present novel CMI bounds for FL in the two-level generalization setting. Empirical results show that our e-CMI bounds are numerically non-vacuous. Future research directions include exploring the relationship between model aggregation frequency and local updates through CMI analysis, and extending our framework to accommodate unbounded loss functions beyond the sub-Gaussianity.

% \textcolor{red}{
% \begin{itemize}
%     \item e-CMI+fast-rate bound
%     \item High-probability bound, Sample complexity, IOMI bound
%     \item Application: 1. multi-run, e.g., Favg:1) Vincent Poor's convex loss; 2)you may communicate less| 2. Experiments: 1) Gaussian Mean Example 2) e-CMI bound track real-word error
%     \item Further extension: unbounded loss, sub-exponential loss; LOO-CMI bound; Excess risk bound; Personalized CMI
%     \item Evidence for ERM FL algorithm
% \end{itemize}
% }

% Acknowledgements should only appear in the accepted version.
\section*{Acknowledgements}
 The authors would like to thank the anonymous AC and reviewers for their careful reading and valuable suggestions. The authors acknowledge the use of GPT-4o to assist in polishing the language of this work. All AI-generated text was carefully reviewed and edited by the authors, who take full responsibility for the final content of the paper.

% \textbf{Do not} include acknowledgements in the initial version of
% the paper submitted for blind review.

% If a paper is accepted, the final camera-ready version can (and
% usually should) include acknowledgements.  Such acknowledgements
% should be placed at the end of the section, in an unnumbered section
% that does not count towards the paper page limit. Typically, this will 
% include thanks to reviewers who gave useful comments, to colleagues 
% who contributed to the ideas, and to funding agencies and corporate 
% sponsors that provided financial support.

\section*{Impact Statement}

The primary focus of our work is on advancing the theoretical foundations of federated learning, aiming to deepen our understanding of this field. As with most theoretical work, it inherently presents challenges in explicitly highlighting potential societal consequences. Meanwhile, the proposed theoretical framework, particularly in privacy constraint based results, may find application in related research domains like trustworthy AI. While the theoretical results in such domains could have positive societal impacts, it's crucial to acknowledge the potential for misuse. The research community plays a vital role in delivering guidance on responsible and ethical applications to navigate the evolving landscape of technology responsibly. 

% Authors are \textbf{required} to include a statement of the potential 
% broader impact of their work, including its ethical aspects and future 
% societal consequences. This statement should be in an unnumbered 
% section at the end of the paper (co-located with Acknowledgements -- 
% the two may appear in either order, but both must be before References), 
% and does not count toward the paper page limit. In many cases, where 
% the ethical impacts and expected societal implications are those that 
% are well established when advancing the field of Machine Learning, 
% substantial discussion is not required, and a simple statement such 
% as the following will suffice:

% ``This paper presents work whose goal is to advance the field of 
% Machine Learning. There are many potential societal consequences 
% of our work, none which we feel must be specifically highlighted here.''

% The above statement can be used verbatim in such cases, but we 
% encourage authors to think about whether there is content which does 
% warrant further discussion, as this statement will be apparent if the 
% paper is later flagged for ethics review.

% In the unusual situation where you want a paper to appear in the
% references without citing it in the main text, use \nocite
% \nocite{langley00}

\bibliography{ref}
\bibliographystyle{icml2025}

%%%%%%%%%%%%%%%%%%%%%%%%%%%%%%%%%%%%%%%%%%%%%%%%%%%%%%%%%%%%%%%%%%%%%%%%%%%%%%%
%%%%%%%%%%%%%%%%%%%%%%%%%%%%%%%%%%%%%%%%%%%%%%%%%%%%%%%%%%%%%%%%%%%%%%%%%%%%%%%
% APPENDIX
%%%%%%%%%%%%%%%%%%%%%%%%%%%%%%%%%%%%%%%%%%%%%%%%%%%%%%%%%%%%%%%%%%%%%%%%%%%%%%%
%%%%%%%%%%%%%%%%%%%%%%%%%%%%%%%%%%%%%%%%%%%%%%%%%%%%%%%%%%%%%%%%%%%%%%%%%%%%%%%
\newpage
% \appendix
\onecolumn

% \newpage
\begin{appendices}

The structure of the Appendix is outlined as follows. The appendix begins with begins with a figure illustrating the superclient and supersample construction, followed by a table summarizing the key notations used throughout the paper. Section~\ref{sec:useful-lemmas} presents a collection of technical lemmas essential for our analysis. In Section~\ref{sec:iomi}, we provide IOMI generalization bounds for FL. Sections~\ref{sec:proofs-cmi}, \ref{sec:proof-ecmi}, and \ref{sec:proof-aggregation} contain detailed proofs of our theoretical results. Additionally, an extra eCMI result is presented in Section~\ref{sec:ldcmi-main}. For further details regarding the experimental setup, refer to Section~\ref{sec:exp-details}.
% \textcolor{red}{Organization of Appendix:}

\section{Visualization of CMI Framework in FL and Notation Summary}

\label{sec:notation-sum}
Figure~\ref{fig:superclient} illustrates the superclient and supersample construction used in our CMI framework for FL. For clarity and ease of reference, Table~\ref{tab:notations} compiles the key notations used throughout the paper.
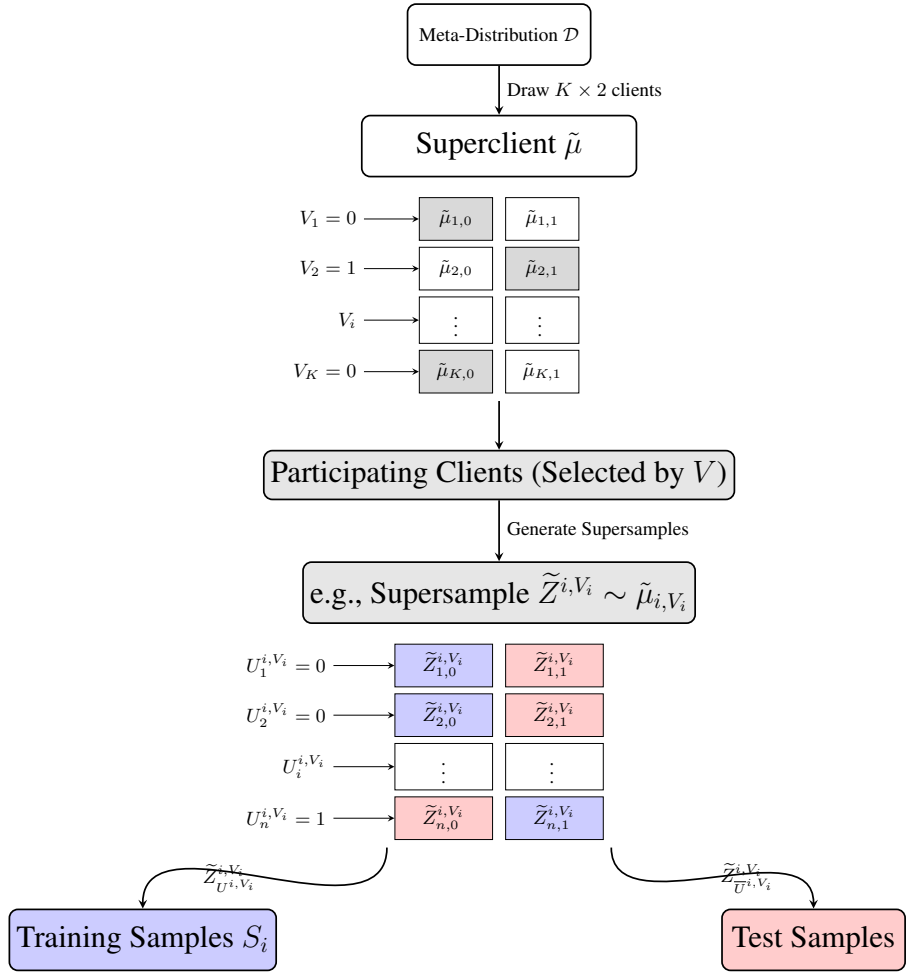
\begin{figure}[htbp]
    \centering
    \resizebox{0.7\textwidth}{!}{\begin{tikzpicture}[
    font=\small,
    every node/.style={align=center},
    node distance=0.8cm,
    >=stealth
]

% 1) Meta-distribution at the top
\node[draw, rounded corners, thick, minimum width=3cm, minimum height=1cm] 
    (meta) {Meta-Distribution $\mathcal{D}$};

% 2) Superclient box
\node[draw, thick, rounded corners, minimum width=4.5cm, minimum height=1cm, below=of meta] 
    (superclient) {\Large Superclient $\tilde{\mu}$};

\draw[->, thick] (meta.south) -- (superclient.north)
    node[midway, right] {Draw $K \times 2$ clients};

% 3) Matrix of superclient with explicit names
\matrix (mumus) [
    matrix of nodes,
    below=0.2cm of superclient,
    nodes in empty cells,
    nodes={draw, minimum width=1.2cm, minimum height=0.7cm},
    column sep=0.2cm,
    row sep=0.1cm
]
{
  \node[draw, fill=gray!30] (mu-1-0) {$\tilde{\mu}_{1,0}$}; &  \node (mu-1-1) {$\tilde{\mu}_{1,1}$}; \\
  \node (mu-2-0) {$\tilde{\mu}_{2,0}$}; & \node[draw, fill=gray!30] (mu-2-1) {$\tilde{\mu}_{2,1}$}; \\
  \node (mu-3-0) {$\vdots$};  & $\vdots$   \\
  \node[draw, fill=gray!30] (mu-K-0) {$\tilde{\mu}_{K,0}$}; & \node (mu-K-1) {$\tilde{\mu}_{K,1}$}; \\
};

% 4) Bernoulli variables V_i on the left of each row
% \foreach \i in {1,2}
% {
%     \node[left=0.9cm of mu-\i-0] (V\i) {$V_\i$};
%     \draw[->] (V\i.east) -- (mu-\i-0.west);
% }
\node[left=0.9cm of mu-1-0] (Vk) {$V_1=0$};
\draw[->] (Vk.east) -- (mu-1-0.west);
\node[left=0.9cm of mu-2-0] (Vk) {$V_2=1$};
\draw[->] (Vk.east) -- (mu-2-0.west);
\node[left=0.9cm of mu-3-0] (Vk) {$V_i$};
\draw[->] (Vk.east) -- (mu-3-0.west);
\node[left=0.9cm of mu-K-0] (Vk) {$V_K=0$};
\draw[->] (Vk.east) -- (mu-K-0.west);

% 5) Participating client box
\node[draw, fill=gray!20, thick, rounded corners,
      minimum width=5.5cm, minimum height=0.8cm,
      below=0.8cm of mumus, 
] (participating) {\Large Participating Clients (Selected by $V$)};

\draw[->, thick] (mumus.south) -- (participating.north);

% 6) Supersample Matrices 
\node[below=1cm of participating] (arrowSupersample) {};
\draw[->, thick] (participating.south) -- (arrowSupersample.north)
    node[midway, right] {Generate Supersamples};

\node[draw, fill=gray!20, thick, rounded corners, minimum width=4.5cm, minimum height=1cm, below=1cm of participating] 
    (supersample) {\Large e.g., Supersample $\widetilde{Z}^{i,V_i}\sim\tilde{\mu}_{i,V_i}$};

\matrix (Zsupers) [
    matrix of nodes,
    below=0.2cm of supersample,
    nodes in empty cells,
    nodes={draw, minimum width=1.6cm, minimum height=0.7cm},
    column sep=0.2cm,
    row sep=0.1cm
]
{
  \node[draw, fill=blue!20] (Z-1-0) {$\widetilde{Z}_{1,0}^{i,V_i}$}; &  \node (Z-1-1)[draw, fill=red!20] {$\widetilde{Z}_{1,1}^{i,V_i}$}; \\
  \node[draw, fill=blue!20]  (Z-2-0) {$\widetilde{Z}_{2,0}^{i,V_i}$}; & \node[draw, fill=red!20] (Z-2-1) {$\widetilde{Z}_{2,1}^{i,V_i}$}; \\
  \node (Z-i-0) {$\vdots$};   & $\vdots$   \\
  \node[draw, fill=red!20] (Z-n-0) {$\widetilde{Z}_{n,0}^{i,V_i}$}; & \node[draw, fill=blue!20]  (Z-n-1) {$\widetilde{Z}_{n,1}^{i,V_i}$}; \\
};

% 7) U for each row
% \foreach \i in {1,2}
% {
%     \node[left=1.0cm of Z-\i-0] (U\i) {$U_{\i}^{i,V_i}$};
%     \draw[->] (U\i.east) -- (Z-\i-0.west);
% }
\node[left=1.0cm of Z-1-0] (Un) {$U_{1}^{i,V_i}=0$};
\draw[->] (Un.east) -- (Z-1-0.west);
\node[left=1.0cm of Z-2-0] (Un) {$U_{2}^{i,V_i}=0$};
\draw[->] (Un.east) -- (Z-2-0.west);
\node[left=1.0cm of Z-i-0] (Un) {$U_{i}^{i,V_i}$};
\draw[->] (Un.east) -- (Z-i-0.west);
\node[left=1.0cm of Z-n-0] (Un) {$U_{n}^{i,V_i}=1$};
\draw[->] (Un.east) -- (Z-n-0.west);

% 8) Training and testing samples
\node[draw, thick, fill=blue!20, rounded corners, 
      minimum width=3cm, minimum height=1cm,
      below left=1.0cm and 1.8cm of Zsupers] (train) 
      {\Large Training Samples $S_i$};
\node[draw, thick, fill=red!20, rounded corners, 
      minimum width=3cm, minimum height=1cm,
      below right=1.0cm and 1.8cm of Zsupers] (test) 
      {\Large Test Samples};

\draw[->, thick] (Zsupers.south west) to[out=-90,in=90] 
    node[midway, left] {$\widetilde{Z}^{i,V_i}_{U^{i,V_i}}$}
    (train.north);
\draw[->, thick] (Zsupers.south east) to[out=-90,in=90] 
    node[midway, right] {$\widetilde{Z}^{i,V_i}_{\overline{U}^{i,V_i}}$}
    (test.north);

\end{tikzpicture}}
    \caption{Illustration of the Superclient and Supersample Construction in FL}
    \label{fig:superclient}
\end{figure}

\begin{table}[ht]
    \centering
    \caption{Summary of Notations}
    \begin{tabular}{c l}
        \toprule
        \textbf{Notation} & \textbf{Description} \\
        \midrule
        $\mathcal{Z}, \mathcal{W}, \mathcal{C}$ & Instance space, hypothesis space, and set of all possible clients \\
        $\mathcal{D}$ & Meta-distribution over clients \\
        $\mu_i$ & Local data distribution for client $i$ \\
        $K, n$ & Number of participating clients, dataset size per client \\
        $S_i, S$ & Training dataset of client $i$, union of all training datasets \\
        $\mathcal{A}_i, \mathcal{A}$ & Local learning algorithm, global FL algorithm \\
        $P_{W_i|S_i}, P_{W|S}$ & Conditional distributions of local and global models given training data \\
        $L_{\mathcal{D}}(W)$ & Global true risk (population risk) \\
        $L_{\mu_i}(W)$ & True risk of participating client $i$ \\
        $L_{S_i}(W), L_S(W)$ & Empirical risk of client $i$, average empirical risk for all participating clients\\
        $\mathcal{E}_{\mathcal{D}}(\mathcal{A})$ & Expected generalization error \\
        $\mathcal{E}_{PG}(\mathcal{A})$ & Participation gap \\
        $\mathcal{E}_{OG}(\mathcal{A})$ & Out-of-sample gap \\
         $\mathcal{E}_{ER}(\mathcal{A})$ & Expected excess risk \\
        $\tilde{\mu}$ & Superclient matrix drawn from meta-distribution $\mathcal{D}$ \\
        $\widetilde{Z}^{i,b}$ & Supersample for client distribution $\tilde{\mu}_{i,b}$ where $b\in\{0,1\}$\\
        $\widetilde{Z}^i$ & ($\widetilde{Z}^{i,0},\widetilde{Z}^{i,1}$)\\
        $\widetilde{Z}$ & Collection of all supersamples $\{\widetilde{Z}^i\}_{i=1}^K$ \\
        $V_i$ & Bernoulli variable determining client participation in $i$-th row of $\tilde{\mu}$\\
        $U^{i,b}_j$ & Bernoulli variable determining training data in $j$-th row of supersample $\widetilde{Z}^{i,b}$\\
        $V = \{V_i\}_{i=1}^K$ & Bernoulli random variables indicating client participation \\
        $U^i$ & $(U^{i,0},U^{i,1})$\\
        $U = \{U^{i}\}_{i=1}^K$ & Bernoulli random variables determining sample usage in supersamples \\
        $\overline{V}_i, \overline{U}^{i,b}_j$ & Complementary Bernoulli variables for participation and sample selection \\
        $\widetilde{Z}^{i,b}_{U^{i,b}}, \widetilde{Z}^{i,b}_{\overline{U}^{i,b}}$ & Training and test samples from supersample drawn from $i$-th row, $b$-th column of $\tilde{\mu}$\\
        $\widetilde{Z}^{i,b}_{j,U_j^{i,b}}, \widetilde{Z}^{i,b}_{j,\overline{U}_j^{i,b}}$ & The j-th elements in training and test samples from $i$-th row, $b$-th column of $\tilde{\mu}$\\
        $W$ & Global model aggregated from local models $\{W_i\}_{i=1}^K$ \\
        $W_i$ & Local model produced by client $i$ \\
        $w^*$ & Global risk minimizer under $\mathcal{D}$ \\
        \bottomrule
    \end{tabular}
    \label{tab:notations}
\end{table}

\section{Technical Lemmas and Additional Definitions}
\label{sec:useful-lemmas}
The following lemmas are widely used in our paper.
\begin{lem}[Donsker-Varadhan variational representation of KL divergence \citep{donsker1983asymptotic}]
    \label{lem:DV representation}
    Let $Q$, $P$ be probability measures on $\Theta$, for any bounded measurable function $f:\Theta\rightarrow \mathbb{R}$, we have
$
\mathrm{D_{KL}}(Q||P) = \sup_{f} \ex{\theta\sim Q}{f(\theta)}-\ln\ex{\theta\sim P}{\exp{f(\theta)}}.
$
\end{lem}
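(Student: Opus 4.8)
The plan is to prove the identity by two matching inequalities: for the ``$\le$'' part of the supremum it suffices to show $\ex{\theta\sim Q}{f(\theta)} - \ln\ex{\theta\sim P}{e^{f(\theta)}} \le \kl{Q}{P}$ for every bounded $f$, and for the ``$\ge$'' part it suffices to exhibit a sequence of bounded functions whose objective value converges to $\kl{Q}{P}$. Throughout I may assume $Q\ll P$, since otherwise $\kl{Q}{P}=+\infty$ and one can already drive the objective to $+\infty$ over bounded functions: picking a set $A$ with $P(A)=0<Q(A)$ and taking $f=M\mathbf{1}_A$ gives objective value $MQ(A)\to\infty$.

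For the upper bound I would tilt $P$ by $f$. Given a bounded $f$, the normalizing constant $\ex{\theta\sim P}{e^{f(\theta)}}$ is finite and positive, so the Gibbs measure $P_f$ defined by $\frac{dP_f}{dP} = \frac{e^{f}}{\ex{\theta\sim P}{e^{f(\theta)}}}$ is a genuine probability measure. A one-line chain-rule computation then gives
\begin{align*}
\kl{Q}{P} - \kl{Q}{P_f} = \ex{\theta\sim Q}{\log\frac{dP_f}{dP}} = \ex{\theta\sim Q}{f(\theta)} - \ln\ex{\theta\sim P}{e^{f(\theta)}}.
\end{align*}
Because $\kl{Q}{P_f}\ge 0$ (Gibbs' inequality), rearranging yields the claimed upper bound, with equality exactly when $Q=P_f$, i.e. when $f$ equals the log-likelihood ratio up to an additive constant.

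For the lower bound, the natural candidate maximizer is $f^\star = \log\frac{dQ}{dP}$: substituting it formally gives $\ex{\theta\sim Q}{f^\star} = \kl{Q}{P}$ and $\ex{\theta\sim P}{e^{f^\star}} = \int \frac{dQ}{dP}\,dP = 1$, so the objective equals $\kl{Q}{P}$ exactly. The main obstacle is that $f^\star$ need not be bounded, whereas the supremum ranges only over bounded $f$. I would resolve this by truncation, taking $f_M = \max(-M,\min(M,f^\star))$ and showing $\ex{\theta\sim Q}{f_M} - \ln\ex{\theta\sim P}{e^{f_M}} \to \kl{Q}{P}$ as $M\to\infty$. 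Here $\ex{\theta\sim P}{e^{f_M}}\to 1$ by dominated convergence, the integrand being dominated by $1 + e^{f^\star} = 1 + \frac{dQ}{dP}$, which is $P$-integrable, while $\ex{\theta\sim Q}{f_M}\to \kl{Q}{P}$ follows by splitting $f^\star$ into its positive and negative parts and applying monotone convergence to each. This limiting step, ensuring integrability is preserved in the passage $M\to\infty$, is the delicate part of the argument; combining it with the upper bound establishes the claimed variational formula.
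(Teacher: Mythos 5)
Your proof is correct, and it is the standard argument for the Donsker--Varadhan formula: the Gibbs-tilting computation $\kl{Q}{P}-\kl{Q}{P_f}=\ex{\theta\sim Q}{f(\theta)}-\ln\ex{\theta\sim P}{e^{f(\theta)}}$ together with nonnegativity of KL gives the upper bound, and truncating the log-likelihood ratio gives the matching lower bound. Note that the paper itself offers no proof to compare against---it states the lemma as a classical result with a citation to Donsker and Varadhan---so your write-up supplies exactly the argument the paper imports as a black box. Two small points worth making explicit in the truncation step: first, the subtraction $\ex{\theta\sim Q}{f_M}=\ex{\theta\sim Q}{\min(M,(f^\star)^+)}-\ex{\theta\sim Q}{\min(M,(f^\star)^-)}$ is only legitimate because $\ex{\theta\sim Q}{(f^\star)^-}$ is automatically finite, which follows from $\int \bigl(\log\tfrac{dQ}{dP}\bigr)^- dQ=\int_{\{r<1\}} r\log(1/r)\,dP\le e^{-1}$ with $r=\tfrac{dQ}{dP}$, so no $\infty-\infty$ ambiguity can arise even when $\kl{Q}{P}=\infty$; second, in the case $\kl{Q}{P}=\infty$ with $Q\ll P$ your dominated-convergence bound still gives $\ln\ex{\theta\sim P}{e^{f_M}}\le\ln 2$, so the objective indeed diverges and the supremum is $+\infty$ as required. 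With these observations spelled out, the proof is complete and fully rigorous.
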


% We also invoke some other lemmas as given below.

\begin{lem}[Hoeffding's Lemma {\cite{hoeffding1963probability}}]
    \label{lem:hoeffding}
     Let $X \in [a, b]$ be a bounded random variable with mean $\mu$. Then, for all $t\in\mathbb{R}$, we have $\ex{}{e^{tX}}\leq e^{t\mu+\frac{t^2(b-a)^2}{8}}$.
\end{lem}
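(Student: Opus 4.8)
The plan is to prove this classical moment-generating-function bound using the convexity of $x\mapsto e^{tx}$ together with a second-order Taylor estimate on the log-moment generating function, without invoking any of the information-theoretic machinery developed above. First I would exploit that any $x\in[a,b]$ can be written as the convex combination $x=\frac{b-x}{b-a}\,a+\frac{x-a}{b-a}\,b$. Convexity of the exponential then yields the pointwise inequality $e^{tx}\le \frac{b-x}{b-a}e^{ta}+\frac{x-a}{b-a}e^{tb}$ for every $t$. Taking expectations and using $\mathbb{E}[X]=\mu$ gives $\mathbb{E}[e^{tX}]\le (1-p)e^{ta}+p\,e^{tb}$, where I set $p=\frac{\mu-a}{b-a}\in[0,1]$ so that $\mu=(1-p)a+pb$.

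Next I would reduce the claim to a clean scalar inequality. Writing $h=t(b-a)$ and factoring out $e^{t\mu}=e^{ta+ph}$, it suffices to show $(1-p)+p\,e^{h}\le e^{ph+h^2/8}$, i.e. $L(h)\triangleq \log\!\pr{(1-p)+p\,e^{h}}-ph\le \tfrac{h^2}{8}$. A direct differentiation gives $L(0)=0$ and $L'(0)=\frac{p}{(1-p)+p}-p=0$, while $L''(h)=u(h)\pr{1-u(h)}$ with $u(h)=\frac{p\,e^{h}}{(1-p)+p\,e^{h}}\in[0,1]$. Since $u(1-u)\le \tfrac14$ for all $u\in[0,1]$, Taylor's theorem with Lagrange remainder gives $L(h)=\tfrac12 h^2 L''(\xi)\le \tfrac{h^2}{8}$ for some $\xi$ between $0$ and $h$, which is exactly the desired bound. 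Substituting back $h=t(b-a)$ recovers the stated estimate $\mathbb{E}[e^{tX}]\le e^{t\mu+t^2(b-a)^2/8}$.

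The only genuinely substantive step, and thus the main obstacle if there is one, is the uniform bound $L''(h)\le \tfrac14$; everything else is bookkeeping around $L(0)=L'(0)=0$. I would highlight that this bound is the analytic incarnation of the fact that a random variable supported in an interval of length $b-a$ has variance at most $(b-a)^2/4$. Indeed, an alternative route I could take instead is to define the cumulant function $\psi(t)=\log\mathbb{E}[e^{t(X-\mu)}]$ directly, observe $\psi(0)=\psi'(0)=0$, and identify $\psi''(t)$ with the variance of $X$ under the exponentially tilted measure $dP_t\propto e^{tx}\,dP$, which is again at most $(b-a)^2/4$ because the support is unchanged; the same Taylor argument then finishes the proof. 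I would present the convexity version as the primary argument since it avoids differentiating under the expectation and keeps the remainder estimate entirely elementary.
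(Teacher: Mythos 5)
Your proof is correct and complete: the convex-combination bound $e^{tx}\le \frac{b-x}{b-a}e^{ta}+\frac{x-a}{b-a}e^{tb}$, the reduction to $L(h)=\log\left((1-p)+pe^{h}\right)-ph$ with $L(0)=L'(0)=0$ and $L''=u(1-u)\le \tfrac14$, and the Lagrange-remainder step all check out (the degenerate case $a=b$ is trivially fine, and the argument is valid for every sign of $t$). Note that the paper itself offers no proof to compare against --- it states the lemma as a classical technical result, citing Hoeffding (1963) --- and what you have written is precisely the standard textbook argument for that result; your tilted-measure alternative via $\psi''(t)=\mathrm{Var}_{P_t}(X)\le (b-a)^2/4$ is equally valid.
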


\begin{lem}[Hoeffding's inequality {\cite{hoeffding1963probability}}]
\label{lem:hoeff-ineq}
    Let $X_1,\dots,X_m$ be independent random variables with $X_i$ taking values in $[a_i,b_i]$ for all $i\in[m]$. Then, for any $\epsilon>0$, the following inequalities hold for $S_m =\sum_{i=1}^m X_i$:
    \[
    \mathrm{Pr}\pr{\abs{S_m-\ex{}{S_m}}\geq \epsilon}\leq 2e^{-\frac{-2\epsilon^2}{\sum_{i=1}^m(b_i-a_i)^2}}.
    \]
\end{lem}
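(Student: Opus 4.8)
The plan is to prove the upper-tail bound $\mathrm{Pr}\pr{S_m - \ex{}{S_m} \geq \epsilon} \leq e^{-2\epsilon^2/\sum_{i=1}^m(b_i-a_i)^2}$ via the standard Chernoff (exponential Markov) argument, and then recover the stated two-sided inequality—together with its factor of $2$—by symmetry and a union bound over the two tails. The entire substantive content is supplied by Hoeffding's Lemma (Lemma~\ref{lem:hoeffding}), which is already available, so the remaining work is essentially a one-parameter optimization.

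First I would fix an arbitrary $t > 0$ and apply Markov's inequality to the nonnegative random variable $e^{t(S_m - \ex{}{S_m})}$, giving
\[
\mathrm{Pr}\pr{S_m - \ex{}{S_m} \geq \epsilon} \leq e^{-t\epsilon}\,\ex{}{e^{t(S_m - \ex{}{S_m})}}.
\]
Next, invoking independence of $X_1,\dots,X_m$ (and hence of the centered variables $X_i - \ex{}{X_i}$), I would factor the moment generating function as $\ex{}{e^{t(S_m-\ex{}{S_m})}} = \prod_{i=1}^m \ex{}{e^{t(X_i - \ex{}{X_i})}}$. Then Hoeffding's Lemma, applied to each centered bounded variable $X_i - \ex{}{X_i}$—which takes values in an interval of length $b_i - a_i$ and has mean zero—yields $\ex{}{e^{t(X_i - \ex{}{X_i})}} \leq e^{t^2(b_i-a_i)^2/8}$. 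Multiplying these factors produces
\[
\mathrm{Pr}\pr{S_m - \ex{}{S_m} \geq \epsilon} \leq \exp\pr{-t\epsilon + \frac{t^2}{8}\sum_{i=1}^m (b_i-a_i)^2}.
\]

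The final step—the only place any computation occurs—is to optimize over the free parameter $t$. Minimizing the quadratic exponent $-t\epsilon + \frac{t^2}{8}\sum_i (b_i-a_i)^2$ over $t > 0$ gives the optimal choice $t^\star = 4\epsilon/\sum_i(b_i-a_i)^2$, at which the exponent evaluates to $-2\epsilon^2/\sum_i(b_i-a_i)^2$, establishing the one-sided bound. Running the identical argument on the variables $-X_i$ controls the lower tail $\mathrm{Pr}\pr{S_m - \ex{}{S_m} \leq -\epsilon}$ by the same quantity, and a union bound over the two events yields the factor of $2$ in the statement. I do not anticipate any genuine obstacle here: the only care required is that the centered intervals retain the same length $b_i - a_i$ so that Lemma~\ref{lem:hoeffding} applies with the correct variance proxy, after which the optimization is routine.
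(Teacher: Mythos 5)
Your proof is correct and is exactly the canonical argument behind this lemma: the paper states it as a classical result cited to Hoeffding (1963) without giving a proof, and your route---Markov's inequality applied to $e^{t(S_m-\mathbb{E}[S_m])}$, factorization of the moment generating function by independence, Lemma~\ref{lem:hoeffding} applied to the centered variables (correctly observing that centering preserves the interval length $b_i-a_i$, so the variance proxy is unchanged), optimization at $t^\star = 4\epsilon/\sum_{i=1}^m(b_i-a_i)^2$ yielding the exponent $-2\epsilon^2/\sum_{i=1}^m(b_i-a_i)^2$, and a union bound over the two tails for the factor of $2$---is precisely the standard one, with all computations checking out. One remark worth recording: as printed, the paper's statement contains a sign typo in the exponent ($e^{-\frac{-2\epsilon^2}{\sum_{i=1}^m(b_i-a_i)^2}}$, whose double negative would make the bound vacuous); the inequality you prove, with exponent $-\frac{2\epsilon^2}{\sum_{i=1}^m(b_i-a_i)^2}$, is the intended and correct form, and it is this form that the paper actually uses in the proof of Theorem~\ref{thm:excess-cmi-bound}.
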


% \begin{lem}[Popoviciu's inequality \cite{popoviciu1935equations}]
% \label{lem:popoviciu}
%     Let $M$ and $m$ be upper and lower bounds on the values of any random variable $X$, then $\mathrm{Var}(X)\leq\frac{(M-m)^2}{4}$.
% \end{lem}

% The following result is known in the previous work \cite{xu2017information}.
% \begin{lem}[{\citet[Lemma~1]{xu2017information}}]
%     \label{lem:sub-Gaussian-dv}
%     If $g(X', Y')$ is $\sigma$-sub-Gaussian\footnote{A random variable $X$ is $\sigma$-sub-Gaussian if for any $t$, $\log {\mathbb E} \exp\left( \lambda \left(
% X- {\mathbb E}X
% \right) \right) \le t^2\sigma^2/2$.} under $P_{X',Y'} = P_XP_Y$, then
%     \[
%     \abs{\ex{X,Y}{g(X, Y)}-\ex{X',Y'}{g(X', Y')}}\leq\sqrt{2\sigma^2I(X;Y)}.
%     \]
% \end{lem}

% The following lemma is from \citet[Lemma~2.8]{McDiarmid1998}, we provide a self-contained proof.
% \begin{lem}
% \label{lem:Bernstein inequality}
%     Let $h(x)=\frac{e^x-x-1}{x^2}$ be the Bernstein function. If a random variable $X$ satisfies $\ex{}{X}=0$ and $X\leq b$, then $\ex{}{e^{X}}\leq e^{h(b)\ex{}{X^2}}$.
% \end{lem}

\paragraph{Information-Theoretic Generalization Bound} The original information-theoretic bound in \citet{xu2017information} is for centralized learning, or a single data distribution equivalently, and the key component in the bound is the mutual information between the output $W$ and the entire input sample $S$. This result is given as follows:
\begin{lem}[{\citet[Theorem~1.]{xu2017information}}]
\label{lem:xu's-bound}
Let the size of training dataset $S$ be $n$. Assume the loss $\ell(w,Z)$ is $\sigma$-sub-Gaussian\footnote{A random variable $X$ is $\sigma$-sub-Gaussian if for any $\lambda\in \mathbb{R}$, $\log {\mathbb E} \exp\left( \lambda \left(X- {\mathbb E}X\right) \right) \le \lambda^2\sigma^2/2$. Note that a bounded loss is guaranteed to be sub-Gaussian.}
for any $w\in\mathcal{W}$, then the generalization error for the learning algorithm $\mathcal{A}$ is upper bounded by
% \vspace{-5pt}
\[
|\mathcal{E}(\mathcal{A})|\leq \sqrt{\frac{2\sigma^2}{n}I(W;S)}.
\]
\end{lem}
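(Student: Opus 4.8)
The plan is to reproduce the classical decoupling argument of \citet{xu2017information}, combining the Donsker--Varadhan variational representation (Lemma~\ref{lem:DV representation}) with the sub-Gaussianity of the empirical risk. First I would rewrite the generalization gap as a difference of expectations under two different measures. Let $(\bar{W},\bar{S})\sim P_W\otimes P_S$ be an independent copy of $(W,S)\sim P_{W,S}$, where $L_\mathcal{D}(w)=\ex{Z}{\ell(w,Z)}$ denotes the population risk. Since $\bar{W}$ is independent of $\bar{S}$, the fresh inner average reproduces the population risk for each fixed hypothesis, so $\ex{\bar{W},\bar{S}}{L_{\bar{S}}(\bar{W})}=\ex{W}{L_\mathcal{D}(W)}$. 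Consequently
\[
\mathcal{E}(\mathcal{A})=\ex{\bar{W},\bar{S}}{L_{\bar{S}}(\bar{W})}-\ex{W,S}{L_S(W)},
\]
i.e. the gap is exactly the difference between evaluating the empirical risk under the product measure and under the joint measure.

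Next I would invoke Lemma~\ref{lem:DV representation} with the test function $f(w,s)=\lambda L_s(w)$ for a free parameter $\lambda\in\mathbb{R}$, giving
\[
I(W;S)=\kl{P_{W,S}}{P_W\otimes P_S}\geq \lambda\,\ex{W,S}{L_S(W)}-\log\ex{\bar{W},\bar{S}}{e^{\lambda L_{\bar{S}}(\bar{W})}}.
\]
The crux is controlling the cumulant term. For a fixed $\bar{w}$, the quantity $L_{\bar{S}}(\bar{w})=\frac{1}{n}\sum_{j}\ell(\bar{w},\bar{Z}_j)$ is an average of $n$ i.i.d. $\sigma$-sub-Gaussian summands, hence itself $\tfrac{\sigma}{\sqrt{n}}$-sub-Gaussian, so that $\log\ex{\bar{S}}{e^{\lambda(L_{\bar{S}}(\bar{w})-L_\mathcal{D}(\bar{w}))}}\leq \frac{\lambda^2\sigma^2}{2n}$. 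Taking the outer expectation over $\bar{W}\sim P_W$ yields
\[
\log\ex{\bar{W},\bar{S}}{e^{\lambda L_{\bar{S}}(\bar{W})}}\leq \lambda\,\ex{W}{L_\mathcal{D}(W)}+\frac{\lambda^2\sigma^2}{2n}.
\]

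Substituting this estimate back and using the rewriting from the first step, I obtain $\lambda\,\mathcal{E}(\mathcal{A})\leq I(W;S)+\frac{\lambda^2\sigma^2}{2n}$ for every $\lambda$. Dividing by $\lambda>0$ and minimizing $\frac{I(W;S)}{\lambda}+\frac{\lambda\sigma^2}{2n}$ over $\lambda$ (the minimizer being $\lambda^\star=\sqrt{2nI(W;S)/\sigma^2}$) gives the one-sided bound $\mathcal{E}(\mathcal{A})\leq\sqrt{2\sigma^2 I(W;S)/n}$; repeating with $\lambda<0$ (equivalently, invoking the lower-tail sub-Gaussian estimate) produces the matching bound for $-\mathcal{E}(\mathcal{A})$, and the two combine to the stated absolute-value inequality. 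The only delicate point is the cumulant estimate: one must correctly track that averaging $n$ i.i.d. $\sigma$-sub-Gaussian losses shrinks the variance proxy from $\sigma^2$ to $\sigma^2/n$, which is exactly what produces the $1/n$ factor inside the square root. Everything else—the decoupling identity and the scalar optimization over $\lambda$—is routine.
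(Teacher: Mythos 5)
This lemma is imported verbatim from \citet{xu2017information}; the paper itself gives no proof, so the only question is whether your reconstruction of the classical argument is sound. Your overall route---Donsker--Varadhan (Lemma~\ref{lem:DV representation}) plus the observation that averaging $n$ i.i.d.\ $\sigma$-sub-Gaussian losses shrinks the variance proxy to $\sigma^2/n$---is the right one, and the decoupling identity and the scalar optimization over $\lambda$ are fine. However, there is a genuine flaw in your cumulant step. From the per-hypothesis bound $\ex{\bar{S}}{e^{\lambda(L_{\bar{S}}(\bar{w})-L_\mathcal{D}(\bar{w}))}}\leq e^{\lambda^2\sigma^2/(2n)}$, taking the expectation over $\bar{W}$ gives
\[
\log\ex{\bar{W},\bar{S}}{e^{\lambda L_{\bar{S}}(\bar{W})}}\leq \log\ex{\bar{W}}{e^{\lambda L_\mathcal{D}(\bar{W})}}+\frac{\lambda^2\sigma^2}{2n},
\]
and by Jensen's inequality $\log\ex{\bar{W}}{e^{\lambda L_\mathcal{D}(\bar{W})}}\geq\lambda\,\ex{W}{L_\mathcal{D}(W)}$---the \emph{opposite} direction of what you assert. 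Your claimed estimate $\log\ex{\bar{W},\bar{S}}{e^{\lambda L_{\bar{S}}(\bar{W})}}\leq\lambda\,\ex{W}{L_\mathcal{D}(W)}+\frac{\lambda^2\sigma^2}{2n}$ is false whenever $L_\mathcal{D}(\bar{W})$ is non-degenerate (a mixture of sub-Gaussians with different means is not sub-Gaussian about the overall mean with the same proxy), so the chain $\lambda\,\mathcal{E}(\mathcal{A})\leq I(W;S)+\frac{\lambda^2\sigma^2}{2n}$ does not follow as written.

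The repair is standard and short: apply Lemma~\ref{lem:DV representation} to the \emph{centered} test function $f(w,s)=\lambda\pr{L_s(w)-L_\mathcal{D}(w)}$ rather than to $\lambda L_s(w)$. Then the linear term of the variational bound is directly $\lambda\pr{\ex{W,S}{L_S(W)}-\ex{W}{L_\mathcal{D}(W)}}$, while the cumulant satisfies $\log\ex{\bar{W},\bar{S}}{e^{\lambda(L_{\bar{S}}(\bar{W})-L_\mathcal{D}(\bar{W}))}}\leq\frac{\lambda^2\sigma^2}{2n}$ by conditioning on $\bar{W}=\bar{w}$, since the exponent is zero-mean and $\sigma/\sqrt{n}$-sub-Gaussian for every fixed $\bar{w}$; no means ever need to be pulled out of a logarithm. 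Optimizing over $\lambda$ of both signs then yields the stated two-sided bound. This centering is exactly how the paper handles the analogous step in its own CMI proofs---the functions $g$ and $h$ in the proof of Theorem~\ref{thm:main-cmi} are constructed to have zero mean under the decoupled distribution before Hoeffding's lemma is invoked---so your argument becomes correct with that one-line change.
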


\begin{defn}[Differential Privacy \cite{dwork2006calibrating,dwork2006our}]. An algorithm $\mathcal{A}:\mathcal{Z}^n \to \mathcal{W}$ is $(\epsilon,\delta)$-differentially private (DP) if, for any two data sets $s, s'\in \mathcal{Z}^n$ that differ in a single element
and for any measurable set $W\subset \mathcal{W}$,
\[
\mathrm{Pr}[A(s)\in W] \leq e^\epsilon \cdot \mathrm{Pr}[A(s')\in W]+\delta.
\]
If $\delta=0$, $\mathcal{A}$ is $\epsilon$-differentially private.
\label{def:DP}
\end{defn}

\section{Input-Output Mutual Information (IOMI) Generalization Bound}
\label{sec:iomi}
In fact, the IOMI bound in Lemma~\ref{lem:xu's-bound} can be directly applied to the FL setting, leading to the following bound on $\mathcal{E}_{OG}(\mathcal{A})$.
\begin{thm}
\label{thm:ineq:xu-fl-bound}
    For each $i\in[K]$,  assume the loss $\ell(w,Z)$ is $\sigma_i$-subGaussian with respect to $Z\sim\mu_{i}$
for any $w\in\mathcal{W}$. Let $\sigma_{\max}=\max\{\sigma_i\}_{i=1}^K$, then the out-of-sample gap of FL learning algorithm $\mathcal{A}$ is upper bounded by
\begin{align}
\label{ineq:xu-fl-bound}
    |\mathcal{E}_{OG}(\mathcal{A})|\leq \sqrt{\frac{2\sigma_{\max}^2}{nK}I(W;S_{1:K})}.
\end{align}
\end{thm}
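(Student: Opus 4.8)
The plan is to reduce the federated out-of-sample gap to a standard centralized information-theoretic bound by treating the concatenation of all participating clients' training data as a single training set drawn from a product distribution. First I would recognize that $S_{1:K} = (S_1,\ldots,S_K)$ consists of $nK$ samples total, where within each client $i$ the samples $Z_{i,j}$ are i.i.d.\ from $\mu_i$, but across clients the distributions differ. Since each test instance $Z'_i \sim \mu_i$ is drawn from the corresponding client distribution, the key object to control is the average local generalization gap, which by definition is
\[
\mathcal{E}_{OG}(\mathcal{A}) = \frac{1}{K}\sum_{i=1}^K\pr{\ex{W,\mu_i}{L_{\mu_i}(W)} - \ex{W,S_i}{L_{S_i}(W)}}.
\]

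The central step is a Donsker--Varadhan / sub-Gaussian argument applied jointly over all $nK$ samples. I would introduce the generalization gap as a function $g(W,S_{1:K}) = L_{\mu_{[K]}}(W) - L_S(W)$ and exploit that, conditioned on $W$, this is an average of $nK$ independent centered terms $\frac{1}{Kn}\sum_{i,j}\pr{L_{\mu_i}(W) - \ell(W,Z_{i,j})}$. Each term $\ell(W,Z_{i,j}) - L_{\mu_i}(W)$ is $\sigma_i$-sub-Gaussian under $Z_{i,j}\sim\mu_i$, hence $\sigma_{\max}$-sub-Gaussian, so their normalized sum is $\frac{\sigma_{\max}}{\sqrt{Kn}}$-sub-Gaussian as a function of the data with $W$ held fixed. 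Applying the standard decoupling estimate underlying Lemma~\ref{lem:xu's-bound}---namely that for a $\sigma$-sub-Gaussian gap the generalization error is bounded by $\sqrt{2\sigma^2 I(W;S)/m}$ with $m=nK$ effective samples---directly yields the claimed bound $\sqrt{\frac{2\sigma_{\max}^2}{nK}I(W;S_{1:K})}$. In effect this is Lemma~\ref{lem:xu's-bound} instantiated with the heterogeneous product distribution $\mu_1\otimes\cdots\otimes\mu_K$ playing the role of the single data distribution, and with the worst-case variance proxy $\sigma_{\max}$.

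The main obstacle, and the step requiring care, is justifying that the sub-Gaussianity of individual loss increments under their own distribution $\mu_i$ composes correctly into a $\frac{\sigma_{\max}}{\sqrt{Kn}}$-sub-Gaussian bound for the full averaged gap, despite the clients being heterogeneous. The sub-Gaussian parameters $\sigma_i$ differ per client, so I must verify that summing $nK$ independent terms with mixed proxies and normalizing by $\frac{1}{Kn}$ gives an aggregate proxy dominated by $\sigma_{\max}$; this follows because the cumulant generating function of a sum of independent sub-Gaussians adds, and bounding each $\sigma_i \le \sigma_{\max}$ yields total proxy $\frac{1}{(Kn)^2}\sum_{i,j}\sigma_i^2 \le \frac{\sigma_{\max}^2}{Kn}$. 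Once this variance bookkeeping is settled, the remainder is a verbatim application of the Donsker--Varadhan representation (Lemma~\ref{lem:DV representation}) exactly as in the proof of \citet{xu2017information}, with no further federated-specific complication, since the participation-gap component is entirely absent from $\mathcal{E}_{OG}(\mathcal{A})$.
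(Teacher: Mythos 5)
Your proposal is correct and matches the paper's own (essentially one-line) argument: the paper states that Theorem~\ref{thm:ineq:xu-fl-bound} follows by directly applying the Xu--Raginsky bound (Lemma~\ref{lem:xu's-bound}) to the FL setting, which is exactly your reduction---treat $S_{1:K}$ as $nK$ independent samples from the heterogeneous product distribution, verify that the averaged gap is sub-Gaussian with variance proxy $\frac{1}{(Kn)^2}\sum_{i,j}\sigma_i^2 \le \frac{\sigma_{\max}^2}{Kn}$, and invoke Donsker--Varadhan. Your variance bookkeeping, which is the only step needing care, is done correctly.
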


Clearly, increasing the amount of data per client improves generalization performance. More importantly, the key term $I(W;S_{1:K})$ in the bound suggests that if the hypothesis $W$ contains minimal information about the training data $S_{1:K}$ of the clients (i.e. $I(W;S_{1:K})$ is small), then the FL algorithm $\mathcal{A}$ will exhibit a small out-of-sample gap. This aligns with a desirable property of federated learning: good client privacy implies good generalization. This relationship is also discussed in Lemma~\ref{lem:CMI-DP}.

To further tighten the generalization bound for $\mathcal{E}_{OG}(\mathcal{A})$, we adopt techniques introduced by \citet{bu2020tightening}, 
% to derive a tighter bound for $\mathcal{E}_{OG}(\mathcal{A})$, 
along with similar developments from \citet{wang2023informationtheoretic} for bounding $\mathcal{E}_{PG}(\mathcal{A})$.

\begin{thm}
\label{thm:ineq:bu-wang-fl-bound}
    For each $i\in[K]$,  assume the loss $\ell(w,Z)$ is $\sigma_i$-subGaussian under any $\mu\sim\mathcal{D}$
for all $w\in\mathcal{W}$, then the generalization error of FL learning algorithm $\mathcal{A}$ is upper bounded by
\begin{align}
\label{ineq:bu-fl-bound}
    |\mathcal{E}_{\mathcal{D}}(\mathcal{A})|\leq \frac{1}{K}\sum_{i=1}^K\sqrt{2\sigma^2\mathrm{D_{KL}}(P_Z||\mu_i)}+\frac{1}{nK}\sum_{i=1}^K\sum_{j=1}^n\sqrt{2\sigma^2I(W;Z_{i,j})},
\end{align}
where $P_Z$ is the data distribution induced by the meta-distribution, e.g., $P_Z(Z=z)=\int_\mu \mu(Z=z)d\mathcal{D}(\mu)$.
\end{thm}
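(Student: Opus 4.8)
The plan is to bound the two pieces of $\mathcal{E}_{\mathcal{D}}(\mathcal{A})$ separately using the split $\mathcal{E}_{\mathcal{D}}(\mathcal{A})=\mathcal{E}_{PG}(\mathcal{A})+\mathcal{E}_{OG}(\mathcal{A})$ from \eqref{eq:gen-decomposition}, matching the first (KL) summand to the participation gap and the second (MI) summand to the out-of-sample gap. Everything hinges on one change-of-measure estimate obtained from the Donsker--Varadhan representation (Lemma~\ref{lem:DV representation}): if $f$ is $\sigma$-sub-Gaussian under a reference measure $Q$, then for any $P$ one has $\abs{\ex{P}{f}-\ex{Q}{f}}\leq\sqrt{2\sigma^2\kl{P}{Q}}$, which follows by substituting the sub-Gaussian moment-generating-function bound into Donsker--Varadhan and optimizing over the dual parameter $\lambda$ (using both signs of $\lambda$ to produce the absolute value).

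For the participation gap I would apply this estimate \emph{pointwise in} $w$ with $P=P_Z$ and $Q=\mu_i$. Since the loss is assumed $\sigma$-sub-Gaussian under every client distribution, for each fixed $w$ this gives $\abs{L_{\mathcal{D}}(w)-L_{\mu_i}(w)}\leq\sqrt{2\sigma^2\kl{P_Z}{\mu_i}}$, whose right-hand side is free of $w$. Writing $\ex{W}{L_{\mathcal{D}}(W)}=\ex{\mu_i}{\ex{W|\mu_i}{L_{\mathcal{D}}(W)}}$ and $\ex{W,\mu_i}{L_{\mu_i}(W)}=\ex{\mu_i}{\ex{W|\mu_i}{L_{\mu_i}(W)}}$ by the tower rule, so that both the population risk and the participating-client risk are integrated against the \emph{same} conditional law $P_{W|\mu_i}$, the inner $W$-expectation collapses and $\abs{\mathcal{E}_{PG}(\mathcal{A})}\leq\frac{1}{K}\sum_i\sqrt{2\sigma^2\kl{P_Z}{\mu_i}}$. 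This step only invokes sub-Gaussianity with respect to the source measure $\mu_i$ and is morally the domain-adaptation-style bound of \citet{wang2023informationtheoretic}.

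For the out-of-sample gap I would use the individual-sample technique of \citet{bu2020tightening}. Writing $\mathcal{E}_{OG}(\mathcal{A})=\frac{1}{Kn}\sum_{i,j}\pr{\ex{W,\mu_i}{L_{\mu_i}(W)}-\ex{W,Z_{i,j}}{\ell(W,Z_{i,j})}}$, observe that $\ex{W,\mu_i}{L_{\mu_i}(W)}=\ex{}{\ell(W,Z'_{i,j})}$ for a ghost sample $Z'_{i,j}$ drawn from $\mu_i$ independently of $W$ given $\mu_i$. Conditioning on $\mu_i=\mu$, the pair $(W,Z_{i,j})$ follows the true joint law while $(W,Z'_{i,j})$ follows the conditionally decoupled law $P_{W|\mu}\otimes\mu$; the per-sample decoupling estimate (with $\ell(w,\cdot)$ sub-Gaussian under $\mu$) then yields the disintegrated bound $\sqrt{2\sigma^2\,I^{\mu_i}(W;Z_{i,j})}$ for each $(i,j)$. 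Averaging over $\mu_i\sim\mathcal{D}$ and pushing the expectation inside the square root by Jensen's inequality gives the summand $\sqrt{2\sigma^2\,I(W;Z_{i,j}|\mu_i)}$, the conditional mutual information.

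The main obstacle is exactly the choice of decoupling measure in the out-of-sample step, and with it the conditional-versus-unconditional mutual-information question. Because sub-Gaussianity is assumed only under each individual client distribution $\mu_i$ and \emph{not} under the mixture $P_Z$, the decoupling cannot be run against the product of marginals $P_W\otimes P_Z$; it must be performed conditionally on $\mu_i$, so the quantity that genuinely appears is $I^{\mu_i}(W;Z_{i,j})$ (equivalently $I(W;Z_{i,j}\mid\mu_i)$ after averaging), which the stated bound records as $I(W;Z_{i,j})$. The clean split I rely on is what makes the two assumptions suffice: all of the distribution shift is routed through the participation-gap KL term, while the within-client generalization (measured against the fresh $\mu_i$-sample) stays in the mutual-information term. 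A secondary bookkeeping point is that the $\mu_i$ are themselves random, so the stated $\mu_i$-dependent quantities should be read as either conditioned on the realized client distributions or averaged under $\mathcal{D}$; for $n=1$ the Markov chain $\mu_i\to Z_{i,j}\to W$ makes the conditional and unconditional versions coincide up to $I(W;\mu_i)$, whereas for $n>1$ the two differ and the conditional form is the one actually produced by the argument.
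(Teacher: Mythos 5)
Your proposal is correct and takes essentially the same route as the paper, whose entire proof is the one-line remark that the result ``follows directly from the techniques in Bu et al.\ (Proposition~1) and Wang \& Mao (Theorem~4.1)'' --- i.e., precisely your decomposition into $\mathcal{E}_{PG}$ bounded by a Donsker--Varadhan/KL change of measure against each $\mu_i$ and $\mathcal{E}_{OG}$ bounded by the individual-sample mutual-information decoupling. Your closing observation that the decoupling argument genuinely produces the disintegrated term $I(W;Z_{i,j}\mid\mu_i)$ when the client distributions are random (with the stated unconditional $I(W;Z_{i,j})$ justified only via the $n=1$ Markov-chain collapse or by reading the $\mu_i$ as fixed/conditioned-on) is a finer point than the paper records, so your bookkeeping is, if anything, more careful than the source.
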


The proof of this theorem follows directly from the techniques in \citet[Proposition~1]{bu2020tightening} and \citet[Theorem~4.1]{wang2023informationtheoretic}.

Compared to the CMI bounds, this bound has its own distinctive advantage: the first term explicitly characterizes how the divergence between client distributions affects the participation gap. This provides a clearer understanding of how heterogeneity in client distributions impacts the generalization performance of FL algorithms. In addition, this result will ultimately lead to a gradient-norm-based regularizer when SGD or SGLD is used \cite{Wang2022a} and hints at a potential feature alignment mechanism in the KL sense for clients. 

\section{Omitted Proofs in Section~\ref{sec:cmi-bound}}
\label{sec:proofs-cmi}
\subsection{Proof of Lemma~\ref{lem:cmi-symmetric}}
\begin{proof}
    First, due to the symmetric property of superclient construction, we have
    \begin{align}
        \mathcal{E}_{PG}(\mathcal{A})=&\ex{W}{L_{\mathcal{D}}(W)}-\ex{W,\mu_{[K]}}{L_{\mu_{[K]}}(W)}\notag\\
        =&\ex{W,\mu,Z}{\ell(W,Z)}-\frac{1}{K}\sum_{i=1}^K\ex{W,\mu_i,Z'_i}{\ell(W,Z_i')}\notag\\
        =&\frac{1}{K}\sum_{i=1}^K\ex{\tilde{\mu},W,V_i}{\pr{L_{\tilde{\mu}_{i,\overline{V}_i}}(W)-L_{\tilde{\mu}_{i,V_i}}(W)}}\notag\\
        =&\frac{1}{K}\sum_{i=1}^K\ex{\tilde{\mu},W,V_i}{(-1)^{V_i}\pr{L_{\tilde{\mu}_{i,1}}(W)-L_{\tilde{\mu}_{i,0}}(W)}}.\label{eq:pg-symmetric}
    \end{align}

    For example, if $V_i=0$, then $L_{\tilde{\mu}_{i,1}}(W)$ is the testing error of non-participating client $\tilde{\mu}_{i,1}$, and $L_{\tilde{\mu}_{i,0}}(W)$ is the local testing error of participating client $\tilde{\mu}_{i,0}$. In addition, $V_i=1$ will switch the roles of clients $\tilde{\mu}_{i,1}$ and $\tilde{\mu}_{i,0}$, and still reflect the gap of testing errors  between non-participating client and participating client. In other words, $\frac{1}{K}\sum_{i=1}^K(-1)^{V_i}\pr{L_{\tilde{\mu}_{i,1}}(W)-L_{\tilde{\mu}_{i,0}}(W)}$ in Eq.~(\ref{eq:pg-symmetric}) is an unbiased estimator of $\mathcal{E}_{PG}(\mathcal{A})$.

    Furthermore, notice that, by definition, we have
    \begin{align}
        \ex{\tilde{\mu},W,V_i}{(-1)^{V_i}L_{\tilde{\mu}_{i,1}}(W)}=&\ex{\tilde{\mu},W,V_i,Z\sim\tilde{\mu}_{i,1}}{(-1)^{V_i}\ell(W,Z)}\notag\\
        =&\ex{W,V_i,\widetilde{Z}^{i,1},{U}_j^{i,1}}{(-1)^{V_i}\ell(W,\widetilde{Z}^{i,1}_{j,\overline{U}_j^{i,1}})},\label{eq:pos-individual}
        % =&\ex{W,V_i,\widetilde{Z}^{i,1},{U}_j^{i,1}}{(-1)^{V_i}\frac{1}{n}\sum_{j=1}^n\ell(W,\widetilde{Z}^{i,1}_{j,\overline{U}_j^{i,1}})},\label{eq:pos-individual}
    \end{align}
    and 
    \begin{align}
        \ex{\tilde{\mu},W,V_i}{(-1)^{V_i}L_{\tilde{\mu}_{i,0}}(W)}=&\ex{\tilde{\mu},W,V_i,Z\sim\tilde{\mu}_{i,0}}{(-1)^{V_i}\ell(W,Z)}\notag\\
        =&\ex{W,V_i,\widetilde{Z}^{i,0},{U}_j^{i,0}}{(-1)^{V_i}\ell(W,\widetilde{Z}^{i,0}_{j,\overline{U}_j^{i,0}})},\label{eq:neg-individual}
        % =&\ex{W,V_i,\widetilde{Z}^{i,0},{U}_j^{i,0}}{(-1)^{V_i}\frac{1}{n}\sum_{j=1}^n\ell(W,\widetilde{Z}^{i,0}_{j,\overline{U}_j^{i,0}})}.\label{eq:neg-individual}
    \end{align}
    where $j$ in both equations can be arbitrary index from $[n]$. We will simply let $j=1$ from now on.

    Note that the expectation over $\tilde{\mu}$ does not need to explicitly appear in Eq.~(\ref{eq:pos-individual}-\ref{eq:neg-individual}) as evaluating function $\ell$ does not rely on the random variable $\tilde{\mu}$. 

    Consequently, plugging Eq.~(\ref{eq:pos-individual}-\ref{eq:neg-individual}) into Eq.~(\ref{eq:pg-symmetric}) will give us the desired result:
    \[
    \mathcal{E}_{PG}(\mathcal{A})=\frac{1}{K}\sum_{i=1}^K\ex{\widetilde{Z}^i,U^i,W,V_i}{(-1)^{V_i}\pr{\ell(W,\widetilde{Z}_{1,\overline{U}_1^{i,1}}^{i,1})-\ell(W,\widetilde{Z}_{1,\overline{U}_1^{i,0}}^{i,0}})}.
    \]
    % \[
    % \mathcal{E}_{PG}(\mathcal{A})=\frac{1}{K}\sum_{i=1}^K\ex{\widetilde{Z}^i,U^i,W,V_i}{(-1)^{V_i}\pr{\gamma(W,U^{i,1},\widetilde{Z}^{i,1})-\gamma(W,U^{i,0},\widetilde{Z}^{i,0})}}.
    % \]

    Then similarly, according to the symmetric property of supersample construction, we have
    \begin{align}
        \mathcal{E}_{OG}(\mathcal{A})=&\ex{W,\mu_{[K]}}{L_{\mu_{[K]}}(W)}-\ex{W,S}{L_S(W)}\notag\\
        =&\frac{1}{K}\sum_{i=1}^K\ex{W,\mu_i,Z'_i}{\ell(W,Z_i')}-\frac{1}{Kn}\sum_{i=1}^K\sum_{j=1}^n\ex{W,\mu_i,Z_{i,j}}{\ell(W,Z_{i,j})}\notag\\
        =&\frac{1}{Kn}\sum_{i=1}^K\sum_{j=1}^n\ex{\widetilde{Z}^i,W,V_i,U^{i,V_i}_j}{(-1)^{U^{i,V_i}_j}\pr{\ell(W,\widetilde{Z}^{i,V_i}_{j,1})-\ell(W,\widetilde{Z}^{i,V_i}_{j,0})}}.
    \end{align}
    This completes the proof.
\end{proof}

\subsection{Proof of Theorem~\ref{thm:main-cmi}}
\label{sec:proof-main-cmi}
\begin{proof}
    We start from bounding the participation gap.

    % Let $g(\tilde{z}^i,u^i,w,v_i)=(-1)^{v_i}\pr{\gamma(w,u^{i,1},\tilde{z}^{i,1})-\gamma(w,u^{i,0},\tilde{z}^{i,0})}$. 
    Let $g(\tilde{z}^i,u^i,w,v_i)=(-1)^{v_i}\pr{\ell(w,\tilde{z}_{1,u_1^{i,1}}^{i,1})-\ell(w,\tilde{z}_{1,u_1^{i,0}}^{i,0})}$. To apply Lemma~\ref{lem:DV representation}, we let $f=t\cdot g$ for $t>0$ in Lemma~\ref{lem:DV representation}, and let $V'_i$ be an independent copy of $V_i$ (i.e. $P_{V'_i}=P_{V_i}$ and $V'_i\indp W | \tilde{z}^i,u^i$), then
\begin{align}
    \ex{W,V_i|\tilde{z}^i,u^i}{g(\tilde{z}^i,u^i,W,V_i)}
    \leq&\inf_{t>0}\frac{I(W;V_i|\widetilde{Z}^i=\tilde{z}^i, U^i=u^i)+\log\ex{W,V'_i|\tilde{z}^i,u^i}{e^{tg(\tilde{z}^i,u^i,W,V'_i)}}}{t}.
    \label{ineq:dv-indi-cmi}
\end{align}

Since $V'_i$ is independent of $W,\widetilde{Z}^i$ and $U^i$, we have 
% \[
% \ex{V'_i}{g(\tilde{z}^i,u^i,w,V'_i)}=\ex{V'_i}{(-1)^{V'_i}\pr{\gamma(w,u^{i,1},\tilde{z}^{i,1})-\gamma(w,u^{i,0},\tilde{z}^{i,0})}}=0
% \]
\[
\ex{V'_i}{g(\tilde{z}^i,u^i,w,V'_i)}=\ex{V'_i}{(-1)^{V'_i}\pr{\ell(w,\tilde{z}_{1,u_1^{i,1}}^{i,1})-\ell(w,\tilde{z}_{1,u_1^{i,0}}^{i,0})}}=0
\]
for any $w,\tilde{z}^i$ and $u^i$.
Ergo,
\[
\ex{W|\tilde{z}^i,u^i}{\ex{V'_i}{g(\tilde{z}^i,u^i,W,V'_i)}}=0.
\]

In addition, as the loss is bounded in $[0,1]$, we know that
% $\gamma(\cdot,\cdot,\cdot)\in[0,1]$ and 
$g(\cdot,\cdot,\cdot,\cdot)\in[-1,1]$.
% By the definition of $\Delta_1(\tilde{w}_i)$,
% \[
% \abs{g(\tilde{w}_i,\widehat{Z}_i,U'_i)}=\abs{\ell(\tilde{w}^{-}_i,{\widehat{Z}}_{i})-\ell(\tilde{w}^{+}_i,{\widehat{Z}}_{i})}\leq\sup_{ z_i\in\mathcal{Z}_{\tilde{w}_i}}\abs{\ell(\tilde{w}^{-}_i,{z}_{i})-\ell(\tilde{w}^{+}_i,{z}_{i})}\leq\Delta_1(\tilde{w}_i).
% \]

Thus, $g(\tilde{z}^i,u^i,W,V'_i)$ is a zero-mean random variable bounded in $[-1,1]$ for a fixed pair of $(\tilde{z}^i,u^i)$. By Lemma~\ref{lem:hoeffding}, we have
\begin{align}
\label{ineq:main-hoeff-1}
    \ex{W,V'_i|\tilde{z}^i,u^i}{e^{tg(\tilde{z}^i,u^i,W,V'_i)}}\leq e^{\frac{t^2}{2}}.
\end{align}

Plugging the above into Eq.~(\ref{ineq:dv-indi-cmi}), 
\begin{align}
\ex{W,V_i|\tilde{z}^i,u^i}{g(\tilde{z}^i,u^i,W,V_i)}
    \leq&\inf_{t>0}\frac{I(W;V_i|\widetilde{Z}^i=\tilde{z}^i, U^i=u^i)+\frac{t^2}{2}}{t}\label{ineq:order-step-1}\\
    =&\sqrt{2I(W;V_i|\widetilde{Z}^i=\tilde{z}^i, U^i=u^i)},
    \notag
\end{align}
where we choose $t=\sqrt{2I(W;V_i|\widetilde{Z}^i=\tilde{z}^i, U^i=u^i)}$ in the last equation.

Recall the formulation of $\mathcal{E}_{PG}(\mathcal{A})$ in Lemma~\ref{lem:cmi-symmetric} and by Jensen's inequality for the absolute value function, the upper bound for participation gap is obtained:
\begin{align}
 \abs{\mathcal{E}_{PG}(\mathcal{A})}\leq& \frac{1}{K}\sum_{i=1}^K\mathbb{E}_{\widetilde{Z}^i,U^i}\abs{\ex{W,V_i|\widetilde{Z}^i,U^i}{(-1)^{V_i}\pr{\ell(W,\widetilde{Z}_{1,U_1^{i,1}}^{i,1})-\gamma(W,\widetilde{Z}_{1,U_1^{i,0}}^{i,0})}}}\notag\\
    \leq&\frac{1}{K}\sum_{i=1}^K\mathbb{E}_{\widetilde{Z}^i,U^i}\sqrt{2I^{\widetilde{Z}^i,U^i}(W;V_i)}.
    \label{ineq:pg-bound-1}
    % \abs{\mathcal{E}_{PG}(\mathcal{A})}\leq& \frac{1}{K}\sum_{i=1}^K\mathbb{E}_{\widetilde{Z}^i,U^i}\abs{\ex{W,V_i|\widetilde{Z}^i,U^i}{(-1)^{V_i}\pr{\gamma(W,U^{i,1},\widetilde{Z}^{i,1})-\gamma(W,U^{i,0},\widetilde{Z}^{i,0})}}}\notag\\
    % \leq&\frac{1}{K}\sum_{i=1}^K\mathbb{E}_{\widetilde{Z}^i,U^i}\sqrt{2I^{\widetilde{Z}^i,U^i}(W;V_i)}.
    % \label{ineq:pg-bound-1}
\end{align}

We then process similar procedure for the out-of-sample gap.

Let $h(\tilde{z}^i,v_i,w,u^{i,v_i}_j)=(-1)^{u^{i,v_i}_j}\pr{\ell(w,\tilde{z}^{i,v_i}_{j,1})-\ell(w,\tilde{z}^{i,v_i}_{j,0})}$. To apply Lemma~\ref{lem:DV representation}, we let $f=t\cdot h$ for $t>0$ in Lemma~\ref{lem:DV representation}, and let $U$ be an independent copy of $U$ (i.e. $P_{U}=P_{U'}$ and $U'\indp W | \tilde{z}^i,v_i$), then
\begin{align}
    \ex{W,U^{i,v_i}_j|\tilde{z}^i,v_i}{h(\tilde{z}^i,v_i,W,U^{i,v_i}_j)}
    \leq&\inf_{t>0}\frac{I(W,U^{i,v_i}_j|\widetilde{Z}^i=\tilde{z}^i, V_i=v_i)+\log\ex{W,{U'}^{i,v_i}_j|\tilde{z}^i,v_i}{e^{th(\tilde{z}^i,v_i,W,{U'}^{i,v_i}_j)}}}{t}.
    \label{ineq:dv-indi-cmi-og}
\end{align}

Since $U'$ is independent of $W,\widetilde{Z}^i$ and $V_i$, we have 
\[
\ex{{U'}^{i,v_i}_j}{h(\tilde{z}^i,v_i,w,{U'}^{i,v_i}_j)}=\ex{{U'}^{i,v_i}_j}{(-1)^{{U'}^{i,v_i}_j}\pr{\ell(w,\tilde{z}^{i,v_i}_{j,1})-\ell(w,\tilde{z}^{i,v_i}_{j,0})}}=0
\]
for any $w,\tilde{z}^i$ and $v_i$.
Hence,
\[
\ex{W|\tilde{z}^i,v_i}{\ex{{U'}^{i,v_i}_j}{h(\tilde{z}^i,v_i,W,{U'}^{i,v_i}_j)}}=0.
\]

Again, since the loss is bounded in $[0,1]$, we know that $h(\cdot,\cdot,\cdot,\cdot)\in[-1,1]$.

Thus, $h(\tilde{z}^i,v_i,W,{U'}^{i,v_i}_j)$ is a zero-mean random variable bounded in $[-1,1]$ for a fixed pair of $(\tilde{z}^i,v_i)$. By Lemma~\ref{lem:hoeffding}, we have
\begin{align}
    \label{ineq:main-hoeff-2}
\ex{W,{U'}^{i,v_i}_j|\tilde{z}^i,v_i}{e^{th(\tilde{z}^i,v_i,W,{U'}^{i,v_i}_j)}}\leq e^{\frac{t^2}{2}}.
\end{align}

Plugging the above into Eq.~(\ref{ineq:dv-indi-cmi-og}), 
\begin{align}
 \ex{W,U^{i,v_i}_j|\tilde{z}^i,v_i}{h(\tilde{z}^i,v_i,W,U^{i,v_i}_j)}
    \leq&\inf_{t>0}\frac{I(W,U^{i,v_i}_j|\widetilde{Z}^i=\tilde{z}^i, V_i=v_i)+\frac{t^2}{2}}{t}\label{ineq:order-step-2}\\
    =&\sqrt{2I(W;U^{i,v_i}_j|\widetilde{Z}^i=\tilde{z}^i, V_i=v_i)},
    \notag
\end{align}
where we choose $t=\sqrt{2I(W;U^{i,v_i}_j|\widetilde{Z}^i=\tilde{z}^i, V_i=v_i)}$ in the last equation.

Recall the formulation of $\mathcal{E}_{OG}(\mathcal{A})$ in Lemma~\ref{lem:cmi-symmetric} and by Jensen's inequality for the absolute value function, the upper bound for out-of-sample gap is obtained:
\begin{align}
    \abs{\mathcal{E}_{OG}(\mathcal{A})}\leq& \frac{1}{Kn}\sum_{i=1}^K\sum_{j=1}^n\mathbb{E}_{\widetilde{Z}^i,V_i}\abs{\ex{W,U^{i,V_i}_j|\widetilde{Z}^i,V_i}{(-1)^{U^{i,V_i}_j}\pr{\ell(W,\widetilde{Z}^{i,V_i}_{j,1})-\ell(W,\widetilde{Z}^{i,V_i}_{j,0})}}}\notag\\
    \leq&\frac{1}{Kn}\sum_{i=1}^K\sum_{j=1}^n\mathbb{E}_{\widetilde{Z}^i,V_i}\sqrt{2I^{\widetilde{Z}^i,V_i}(W;U^{i,V_i}_j)}.
    \label{ineq:og-bound-1}
\end{align}

Combining Eq.~(\ref{ineq:pg-bound-1}) and Eq.~(\ref{ineq:og-bound-1}), we have
\begin{align*}
    \abs{\mathcal{E}_{\mathcal{D}}(\mathcal{A})}\leq&\abs{\mathcal{E}_{PG}(\mathcal{A})}+\abs{\mathcal{E}_{OG}(\mathcal{A})}\\
    \leq&\frac{1}{K}\sum_{i=1}^K\mathbb{E}_{\widetilde{Z}^i,U^i}\sqrt{2I^{\widetilde{Z}^i,U^i}(W;V_i)}+\frac{1}{Kn}\sum_{i=1}^K\sum_{j=1}^n\mathbb{E}_{\widetilde{Z}^i,V_i}\sqrt{2I^{\widetilde{Z}^i,V_i}(W;U^{i,V_i}_j)}.
\end{align*}
This completes the proof.
\end{proof}

\subsection{Proof of Lemma~\ref{lem:CMI-DP}}
\begin{proof}
    First, by  the Jensen's inequality and the chain rule of MI, we have
    \begin{align}
    \frac{1}{K}\sum_{i=1}^K\mathbb{E}_{\widetilde{Z}^i,U^i}\sqrt{2I^{\widetilde{Z}^i,U^i}(W;V_i)}\leq&\sqrt{\frac{2}{K}I(W;V|\widetilde{Z},U)},\label{ineq:icmi-cmi-1}\\
        \frac{1}{n}\sum_{j=1}^n\mathbb{E}_{\widetilde{Z}^i,V_i}\sqrt{2I^{\widetilde{Z}^i,V_i}(W;U^{i,V_i}_j)}\leq& \sqrt{\frac{2}{n}I(W;U^{i,V_i}|\widetilde{Z}^i,V_i)}.\label{ineq:icmi-cmi-2}
        % \mathbb{E}_{\widetilde{Z}^i,V_i}\sqrt{\frac{2}{n}I^{\widetilde{Z}^i,V_i}(W;U^{i,V_i})}.
    \end{align}
    The detailed application of Jensen's inequality and the chain rule of MI for obtaining the above inequalities are deferred to the proof for Corollary~\ref{cor:sample-cmi}.

    In FL, given $(\widetilde{Z}^i,V_i)$, $U^{i,V_i}-W_i-W$ forms a Markov chain. Thus, by the data-processing inequality, we have
    \begin{align}
        % I^{\tilde{z}^i,v_i}(W;U^{i,v_i})\leq& I^{\tilde{z}^i,v_i}(W_i;U^{i,v_i}).
        I(W;U^{i,V_i}|\widetilde{Z}^i,V_i)\leq& I(W_i;U^{i,V_i}|\widetilde{Z}^i,V_i).
        \label{ineq:dpi-oos-cmi}
    \end{align}

    In addition, by the chain rule, we notice that
    \begin{align*}
        I(W;V,\widetilde{Z},U)= I(W;V|\widetilde{Z},U) + I(W;\widetilde{Z},U).
    \end{align*}

    Since the training dataset $S$ is determined by $(V,\widetilde{Z},U)$, i.e. $S=\cup_{i=1}^K\widetilde{Z}^{i,V}_{U^{i,V}}$, we have
    $I(W;V,\widetilde{Z},U)=I(W;S,V,\widetilde{Z},U)=I(W;S)$. Then, due to the non-negativity of MI, $I(W;\widetilde{Z},U)\geq 0$, we have
    \begin{align}
        I(W;V|\widetilde{Z},U)\leq I(W;S).
        \label{ineq:CMI-mi-1}
    \end{align}

    Similarly, notice that
    \[    I(W_i;U^{i,V_i},\widetilde{Z}^i,V_i)=I(W_i;U^{i,V_i}|\widetilde{Z}^i,V_i)+I(W_i;\widetilde{Z}^i,V_i),
    \]
     Since the training dataset $S_i$ is determined by $(V_i,\widetilde{Z}^i,U^{i,V_i})$, i.e. $S_i=\widetilde{Z}^{i,V}_{U^{i,V}}$, we have
    $I(W_i;V_i,\widetilde{Z}^i,U^{i,V_i})=I(W;S_i,V_i,\widetilde{Z}^i,U^{i,V_i})=I(W;S_i)$. Then, due to the non-negativity of MI, $I(W_i;\widetilde{Z}^i,V_i)\geq 0$, we have
     \begin{align}
        I(W_i;U^{i,V_i}|\widetilde{Z}^i,V_i)\leq I(W_i;S_i).
        \label{ineq:CMI-mi-2}
    \end{align}
    
    % \begin{align*}
    %     I(W;V|\tilde{z},u)=I(W;\cup_{i=1}^K\tilde{z}^{i,V}_{u^{i,V}})=&I(\mathcal{A}(S);S|\tilde{z},u)\\
    %     I^{\tilde{z}^i,v_i}(W;U^{i,v_i})= I(W_i;\tilde{z}^{i,v_i}_{U^{i,v_i}})=& I(\mathcal{A}_{i}(\tilde{z}^{i,v_i}_{U^{i,v_i}});\tilde{z}^{i,v_i}_{U^{i,v_i}}).
    % \end{align*}

    Recall that each local algorithm $\mathcal{A}_i$ is $\epsilon_i$-differentially private, 
    and the overall FL algorithm $\mathcal{A}$ is $\epsilon'$-differentially private,. By the definition of differential privacy (see Definition~\ref{def:DP}), for any two local datasets $s_i$ and $s{\prime}_i$ differing by a single data point, and for any two aggregated datasets $s$ and $s{\prime}$ differing by a single data point, we have:
    % (in the sense that $\frac{\mathrm{Pr}(w|s)}{\mathrm{Pr}(w|s^i)}\leq e^{\epsilon'}$, with $s^i$ denoting the aggregated dataset where the $i$-th client’s dataset is replaced),
    \begin{align}
        P(w_i|s_i)\leq& e^{\epsilon_i}P(w_i|s'_i),\label{ineq:local-dp}\\
        P(w|s)\leq& e^{\epsilon'}P(w|s')\label{ineq:global-dp}.
    \end{align}

    The following development is similar to \citet[Corollary~1]{barnes2022improved}.
    \begin{align*}
        I(W;S)=&\int_{w,s} dP(w,s)\log{\frac{dP(w|s)}{dP(w)}}\\
        =&\int_{w,s} dP(w,s)\log{\frac{dP(w|s)}{\ex{S'}{dP(w|S')}}}\\
        \leq&\int_{w,s} dP(w,s)\log{\frac{dP(w|s)}{\inf_{s'}dP(w|s')}}\\
        \leq&\int_{w,s} dP(w,s)\epsilon'=\epsilon'.
    \end{align*}
    where the last inequality is by Eq.~(\ref{ineq:global-dp}).
    On the other hand,
    \begin{align*}
        I(W;S)\leq&\kl{P_{W,S}}{P_WP_S}+\kl{P_WP_S}{P_{W,S}}\\
        =&\int_{w,s} dP(w)dP(s)\pr{\frac{dP(w|s)}{dP(w)}-1}\log{\frac{dP(w|s)}{{dP(w)}}}\\
        \leq&\int_{w,s} dP(w)dP(s)\pr{\frac{dP(w|s)}{\inf_{s'}dP(w|s')}-1}\log{\frac{dP(w|s)}{{\inf_{s'}dP(w|s')}}}\\
        \leq&\int_{w,s} dP(w)dP(s)\pr{e^{\epsilon'}-1}\epsilon'\\
        =&\pr{e^{\epsilon'}-1}\epsilon'.
    \end{align*}

    Hence, 
    \begin{align}
        \label{ineq:gmi-privacy}
        I(W;S)\leq\min\{\epsilon',(e^{\epsilon'}-1)\epsilon'\}.
    \end{align}

    The similar procedures can be applied to $I(W_i;S_i)$, which will give us
    \begin{align}
        \label{ineq:lmi-privacy}
        I(W_i;S_i)\leq\min\{\epsilon_i,\pr{e^{\epsilon_i}-1}\epsilon_i\}.
    \end{align}

    Putting Eq.(\ref{ineq:icmi-cmi-1}-\ref{ineq:CMI-mi-2}) and Eq.~(\ref{ineq:gmi-privacy}-\ref{ineq:lmi-privacy}) together, we have
    \begin{align*}
        \frac{1}{K}\sum_{i=1}^K\mathbb{E}_{\widetilde{Z}^i,U^i}\sqrt{I^{\widetilde{Z}^i,U^i}(W;V_i)}\leq& \sqrt{\frac{\min\{\epsilon',(e^{\epsilon'}-1)\epsilon'\}}{K}},\\
        % \min\left\{\epsilon',\sum_{i=1}^K\epsilon_i\right\},\\
       % \text{ and } \;
       \frac{1}{n}\sum_{j=1}^n\mathbb{E}_{\widetilde{Z}^i,V_i}\sqrt{I^{\widetilde{Z}^i,V_i}(W;U^{i,V_i}_j)}\leq& \sqrt{\frac{\min\{\epsilon_i,(e^{\epsilon_i}-1)\epsilon_i\}}{n}}.
    \end{align*}

    % Furthermore, \citet{feldman2018calibrating} shows that if an algorithm is $\epsilon$-DP, which is a notion based on Re\'nyi differential privacy, then it implies also $\frac{\epsilon^2}{2}$-Average Leave-one-out KL (ALKL) stability, which further implies $\frac{\epsilon^2}{2}$-MI stability. Formally, by \citet[Proposition~1.4]{feldman2018calibrating}, we have
    % \begin{align*}
    %     \sup_{\tilde{z},u}I(\mathcal{A}(S);S)\leq& \frac{\epsilon^2K}{2},\\
    %     \sup_{\tilde{z}^i,v_i}I(\mathcal{A}_{i}(\tilde{z}^{i,v_i}_{U^{i,v_i}});\tilde{z}^{i,v_i}_{U^{i,v_i}})\leq& \frac{\epsilon_i^2n}{2}.
    % \end{align*}

    % Therefore,
    % \begin{align*}
    % \frac{1}{K}\sum_{i=1}^K\mathbb{E}_{\widetilde{Z}^i,U^i}\sqrt{2I^{\widetilde{Z}^i,U^i}(W;V_i)}\leq \sqrt{2\sup_{\tilde{z},u}I(\mathcal{A}(S);S)}\leq& \epsilon, \\
    % \frac{1}{n}\sum_{j=1}^n\mathbb{E}_{\widetilde{Z}^i,V_i}\sqrt{2I^{\widetilde{Z}^i,V_i}(W;U^{i,V_i}_j)}\leq\sqrt{\frac{2}{n}\sup_{\tilde{z}^i,v_i}I(\mathcal{A}_{i}(\tilde{z}^{i,v_i}_{U^{i,v_i}});\tilde{z}^{i,v_i}_{U^{i,v_i}})}\leq& \epsilon_i.
    % \end{align*}
    This completes the proof.
\end{proof}

\subsection{Proof of Corollary~\ref{cor:sample-cmi}}
\begin{proof}
We first apply Jensen's inequality to the bound in Theorem~\ref{thm:main-cmi}, 
\begin{align}
    \label{ineq:main-cmi-jensen}
    \mathcal{E}_{\mathcal{D}}(\mathcal{A})\leq\sqrt{\frac{2}{K}\sum_{i=1}^KI(W;V_i|\widetilde{Z}^i,U^i)}+\sqrt{\frac{2}{Kn}\sum_{i=1}^K\sum_{j=1}^nI(W;U^{i,V_i}_j|\widetilde{Z}^i,V_i)}.
\end{align}
    For the first term in Eq.~(\ref{ineq:main-cmi-jensen}), notice that
    \begin{align*}
        I(W;V_i|\widetilde{Z}^i,U^i)=&H(V_i)-H(V_i|W,\widetilde{Z}^i,U^i),\\
        I(W;V_i|\widetilde{Z},U)=&H(V_i)-H(V_i|W,\widetilde{Z},U).
    \end{align*}
    Since conditioning reduces entropy, $H(V_i|W,\widetilde{Z},U)\leq H(V_i|W,\widetilde{Z}^i,U^i)$, we have $I(W;V_i|\widetilde{Z}^i,U^i)\leq I(W;V_i|\widetilde{Z},U)$.

    Then, by the chain-rule of mutual information, we have
    \begin{align*}
        I(W;V|\widetilde{Z},U)=\sum_{i=1}^KI(W;V_i|\widetilde{Z},U,V_{[i-1]}).
    \end{align*}

    Given that $V_i$ is i.i.d. sampled and is independent of $\widetilde{Z}$ and $U$, we know that $I(V_{[i-1]};V_i|\widetilde{Z},U)=0$. Hence,
    \begin{align*}
        I(W;V|\widetilde{Z},U)=&\sum_{i=1}^KI(W;V_i|\widetilde{Z},U,V_{[i-1]})+I(V_{[i-1]};V_i|\widetilde{Z},U)\\
        =&\sum_{i=1}^KI(W,V_{[i-1]};V_i|\widetilde{Z},U)\\
        =&\sum_{i=1}^KI(W;V_i|\widetilde{Z},U)+I(V_{[i-1]};V_i|\widetilde{Z},U,W)\\
        \geq& \sum_{i=1}^KI(W;V_i|\widetilde{Z},U),
    \end{align*}
    where the last inequality is by the non-negativity of mutual information. 

    Consequently,
    \begin{align}
        \label{ineq:first-term-cmi}
        \sqrt{\frac{2}{K}\sum_{i=1}^KI(W;V_i|\widetilde{Z}^i,U^i)}\leq \sqrt{\frac{2}{K}I(W;V|\widetilde{Z},U)}.
    \end{align}

    We then focus on the second term in Eq.~(\ref{ineq:main-cmi-jensen}), notice that
    \begin{align*}
        I(W;U^{i,V_i}_j|\widetilde{Z}^i,V_i)=&H(U^{i,V_i}_j|V_i)-H(U^{i,V_i}_j|W,\widetilde{Z}^i,V_i),\\
        I(W;U^{i,V_i}_j|\widetilde{Z},V)=&H(U^{i,V_i}_j|V)-H(U^{i,V_i}_j|W,\widetilde{Z},V).
    \end{align*}
    Since $V_i$ is i.i.d. sampled and $U^{i,V_i}_j$ is independent of $V\setminus V_i$, i.e. the rest of $V_k$ for $k\neq i$, we have $H(U^{i,V_i}_j|V_i)=H(U^{i,V_i}_j|V)$. Moreover, according to
    conditioning reduces entropy, $H(U^{i,V_i}_j|W,\widetilde{Z},V)\leq H(U^{i,V_i}_j|W,\widetilde{Z}^i,V_i)$, we have $I(W;U^{i,V_i}_j|\widetilde{Z}^i,V_i)\leq I(W;U^{i,V_i}_j|\widetilde{Z},V)$.

    Then, by the chain-rule of mutual information, we have
    \begin{align*}
        I(W;U^{i,V_i}|\widetilde{Z},V)=\sum_{j=1}^nI(W;U^{i,V_i}_j|\widetilde{Z},V,U^{i,V_i}_{[j-1]}).
    \end{align*}

    For a given $V$, $U^{i,V_i}_j$ is i.i.d. sampled and is independent of $\widetilde{Z}$ and other $U^{i,V_i}_k$ for $k\neq i$, we know that $I(U^{i,V_i}_{[j-1]};U^{i,V_i}_j|\widetilde{Z},V)=0$. Hence,
    \begin{align*}
        I(W;U^{i,V_i}|\widetilde{Z},V)=&\sum_{j=1}^nI(W;U^{i,V_i}_j|\widetilde{Z},V,U^{i,V_i}_{[j-1]})+I(U^{i,V_i}_{[j-1]};U^{i,V_i}_j|\widetilde{Z},V)\\
        =&\sum_{j=1}^nI(W,U^{i,V_i}_{[j-1]};U^{i,V_i}_j|\widetilde{Z},V)\\
        =&\sum_{j=1}^nI(W;U^{i,V_i}_j|\widetilde{Z},V)+I(U^{i,V_i}_{[j-1]};U^{i,V_i}_j|\widetilde{Z},V,W)\\
        \geq& \sum_{j=1}^nI(W;U^{i,V_i}_j|\widetilde{Z},V),
    \end{align*}
    where the last inequality is by the non-negativity of mutual information. 

    Therefore, we have
    \begin{align}
    \label{ineq:second-term-cmi-1}
        \sqrt{\frac{2}{Kn}\sum_{i=1}^K\sum_{j=1}^nI(W;U^{i,V_i}_j|\widetilde{Z}^i,V_i)}\leq\sqrt{\frac{2}{Kn}\sum_{i=1}^KI(W;U^{i,V_i}|\widetilde{Z},V)}.
    \end{align}

    Let $U^i=(U^{i,V_i},U^{i,\overline{V}_i})$, then again by the chain-rule of mutual information, we have
    \begin{align*}
        I(W;U|\widetilde{Z},V)=\sum_{i=1}^KI(W;U^i|\widetilde{Z},V,U^{[i-1]})=\sum_{i=1}^KI(W;U^{i,V_i}|\widetilde{Z},V,U^{[i-1]}),
    \end{align*}
    where the last equation is by the independence between $W$ and $U^{i,\overline{V}_i}$, namely  $I(W;U^{i,\overline{V}_i}|\widetilde{Z},V,U^{[i-1]},U^{i,V_i})=0$.

    Similarly, for a given $V$, $U^{i,V_i}$ is i.i.d. sampled and is independent of $\widetilde{Z}$ and other $U^k$ for $k\neq i$, we know that $I(U^{[i-1]};U^{i,V_i}|\widetilde{Z},V)=0$. Hence,
    \begin{align}
        I(W;U|\widetilde{Z},V)=&\sum_{i=1}^KI(W;U^{i,V_i}|\widetilde{Z},V,U^{[i-1]})+I(U^{[i-1]};U^{i,V_i}|\widetilde{Z},V)\notag\\
        =&\sum_{i=1}^KI(W,U^{[i-1]};U^{i,V_i}|\widetilde{Z},V)\notag\\
        =&\sum_{i=1}^KI(W;U^{i,V_i}|\widetilde{Z},V)+I(U^{[i-1]};U^{i,V_i}|\widetilde{Z},V,W)\notag\\
        \geq& \sum_{i=1}^KI(W;U^{i,V_i}|\widetilde{Z},V).        \label{ineq:second-term-last}
    \end{align}

    Thus, plugging Eq.~(\ref{ineq:second-term-last}) into Eq.~(\ref{ineq:second-term-cmi-1}), we have
    \begin{align}
        \label{ineq:second-term-cmi}
        \sqrt{\frac{2}{Kn}\sum_{i=1}^K\sum_{j=1}^nI(W;U^{i,V_i}_j|\widetilde{Z}^i,V_i)}\leq \sqrt{\frac{2I(W;U|\widetilde{Z},V)}{Kn}}.
    \end{align}

    Combining Eq.~(\ref{ineq:first-term-cmi}) and Eq.~(\ref{ineq:second-term-cmi}) together will complete the proof.
\end{proof}

\subsection{Proof of Corollary~\ref{cor:iid-cmi}}
\begin{proof}
    Since all data in $\widetilde{Z}$ are i.i.d., the conditional distribution $P_{W|V_i,\widetilde{Z}^i,U^i}$ is now invariant to $V_i$, namely $P_{W|V_i=0,\widetilde{Z}^i,U^i}=P_{W|V_i=1,\widetilde{Z}^i,U^i}=\frac{1}{2}P_{W|V_i=0,\widetilde{Z}^i,U^i}+\frac{1}{2}P_{W|V_i=1,\widetilde{Z}^i,U^i}=P_{W|\widetilde{Z}^i,U^i}$. Consequently, 
    \[
    I(W;V_i|\widetilde{Z}^i,U^i)=\ex{\widetilde{Z}^i,U^i}{\int_{w,v_i} dP(w,v_i|\widetilde{Z}^i,U^i)\log{\frac{dP(w|v_i,\widetilde{Z}^i,U^i)}{dP(w|\widetilde{Z}^i,U^i)}}}=0,
    \]
    which can imply the first term in Corollary~\ref{cor:sample-cmi} vanish.
\end{proof}

\subsection{Proof of Theorem~\ref{thm:hpb-cmi-bound}}
\begin{proof}
       Again, we start from bounding the participation gap.
    We let
    % \[
    % g(\tilde{z},u,w,v)=\frac{K-1}{2}\pr{\frac{1}{K}\sum_{i=1}^K{(-1)^{v_i}\pr{\gamma(w,\tilde{z}^{i,1}_{\bar{u}^{i,1}})-\gamma(w,\tilde{z}^{i,0}_{\bar{u}^{i,0}})}}}^2.
    % \]
    \[
    g(\tilde{z},u,w,v)=\frac{K-1}{2}\pr{\frac{1}{K}\sum_{i=1}^K{(-1)^{v_i}\pr{\ell(w,\tilde{z}^{i,1}_{1,\bar{u}_1^{i,1}})-\ell(w,\tilde{z}^{i,0}_{1,\bar{u}_1^{i,0}})}}}^2.
    \]
    Then, we apply Lemma~\ref{lem:DV representation}
\begin{align}
    \ex{W|V,\widetilde{Z},U}{g(\widetilde{Z},U,W,V)}
    \leq \kl{P_{W|\widetilde{Z},U,V}}{P_{W|\widetilde{Z},U}}+\log\ex{W|\widetilde{Z},U}{e^{g(\widetilde{Z},U,W,V)}}.
    \label{ineq:dv-hpb-cmi}
\end{align}

Let $V'$ be an independent copy of $V$. By Markov's inequality, we have the following bound with the probability at least $1-\delta$ under the draw of $(\widetilde{Z},U,V')$
\begin{align}
    \ex{W|V,\widetilde{Z},U}{g(\widetilde{Z},U,W,V)}
    \leq \kl{P_{W|\widetilde{Z},U,V}}{P_{W|\widetilde{Z},U}}+\log\frac{\mathbb{E}_{W,\widetilde{Z},U}\ex{V'}{e^{g(\widetilde{Z},U,W,V')}}}{\delta}.
    \label{ineq:dv-markov}
\end{align}

Since $V'$ is independent of $W,\widetilde{Z}$ and $U$, the random variable
$
\zeta=\frac{1}{K}\sum_{i=1}^K{(-1)^{V'_i}\pr{\ell(w,\tilde{z}^{i,1}_{1,\bar{u}_1^{i,1}})-\gamma(w,\tilde{z}^{i,0}_{1,\bar{u}_1^{i,0}})}}
$
% $
% \zeta=\frac{1}{K}\sum_{i=1}^K{(-1)^{V'_i}\pr{\gamma(w,\tilde{z}^{i,1}_{\bar{u}^{i,1}})-\gamma(w,\tilde{z}^{i,0}_{\bar{u}^{i,0}})}}
% $
has zero mean for any given $(w,\tilde{z},u)$, namely $\ex{V'}{\zeta}=0$. Additionally, for the fixed $(w,\tilde{z},u)$, $\zeta$ is the the arithmetic average of $K$ independent terms, each with bounded range $[-1,1]$. Hence, $\zeta$ is a sub-Gaussian random variable with variance proxy $1/\sqrt{K}$. By \citet[Thm. 2.6.(IV)]{wainwright2019high}, we have
\[
\ex{V'}{e^{\frac{K-1}{2}\zeta^2}}\leq\sqrt{K}.
\]

Plugging the above inequality into Eq.~(\ref{ineq:dv-markov}), we have
\begin{align*}
    \ex{W|V,\widetilde{Z},U}{g(\widetilde{Z},U,W,V)}
    \leq \kl{P_{W|\widetilde{Z},U,V}}{P_{W|\widetilde{Z},U}}+\log\frac{\sqrt{K}}{\delta}.
\end{align*}

Then by Jensen's inequality, 
\[
 \frac{K-1}{2}\pr{\ex{W|V,\widetilde{Z},U}{\frac{1}{K}\sum_{i=1}^K{(-1)^{V_i}\pr{\ell(W,\widetilde{Z}^{i,1}_{1,\overline{U}_1^{i,1}})-\ell(W,\widetilde{Z}^{i,0}_{1,\overline{U}_1^{i,0}})}}}}^2
    \leq\kl{P_{W|\widetilde{Z},U,V}}{P_{W|\widetilde{Z},U}}+\log\frac{\sqrt{K}}{\delta}.
\]
% \[
%  \frac{K-1}{2}\pr{\ex{W|V,\widetilde{Z},U}{\frac{1}{K}\sum_{i=1}^K{(-1)^{V_i}\pr{\gamma(W,\widetilde{Z}^{i,1}_{\overline{U}^{i,1}})-\gamma(W,\widetilde{Z}^{i,0}_{\overline{U}^{i,0}})}}}}^2
%     \leq\kl{P_{W|\widetilde{Z},U,V}}{P_{W|\widetilde{Z},U}}+\log\frac{\sqrt{K}}{\delta}.
% \]

Consequently, with probability at least $1-\delta$,
% \begin{align}
%     \abs{\ex{W|\widetilde{Z},U,V}{L_{\mathcal{D}}(W)-\frac{1}{K}\sum_{i=1}^K\gamma(W,\widetilde{Z}^{i,V_i}_{\overline{U}^{i,V_i}})}}\leq\sqrt{\frac{2\kl{P_{W|\widetilde{Z},U,V}}{P_{W|\widetilde{Z},U}}+2\log{\frac{\sqrt{K}}{\delta}}}{K-1}}.\label{ineq:pg-hpb-bound}
% \end{align}
\begin{align}
    \abs{\ex{W|\widetilde{Z},U,V}{L_{\mathcal{D}}(W)-\frac{1}{K}\sum_{i=1}^K\ell(W,\widetilde{Z}^{i,V_i}_{1,\overline{U}_1^{i,V_i}})}}\leq\sqrt{\frac{2\kl{P_{W|\widetilde{Z},U,V}}{P_{W|\widetilde{Z},U}}+2\log{\frac{\sqrt{K}}{\delta}}}{K-1}}.\label{ineq:pg-hpb-bound}
\end{align}
% By the definition of $\Delta_1(\tilde{w}_i)$,
% \[
% \abs{g(\tilde{w}_i,\widehat{Z}_i,U'_i)}=\abs{\ell(\tilde{w}^{-}_i,{\widehat{Z}}_{i})-\ell(\tilde{w}^{+}_i,{\widehat{Z}}_{i})}\leq\sup_{ z_i\in\mathcal{Z}_{\tilde{w}_i}}\abs{\ell(\tilde{w}^{-}_i,{z}_{i})-\ell(\tilde{w}^{+}_i,{z}_{i})}\leq\Delta_1(\tilde{w}_i).
% \]

Furthermore, for out-of-sample gap, we let
    \[
    h(\tilde{z},u,w,v)=\frac{nK-1}{2}\pr{\frac{1}{Kn}\sum_{i=1}^K\sum_{j=1}^n\ex{\tilde{z}^i,v_i,u^{i,v_i}_j,w}{(-1)^{u^{i,v_i}_j}\pr{\ell(w,\tilde{z}^{i,v_i}_{j,1})-\ell(w,\tilde{z}^{i,v_i}_{j,0})}}}^2.
    \]
    Applying Lemma~\ref{lem:DV representation}
\begin{align}
    \ex{W|V,\widetilde{Z},U}{h(\widetilde{Z},U,W,V)}
    \leq&\kl{P_{W|\widetilde{Z},U,V}}{P_{W|\widetilde{Z},V}}+\log\ex{W|\widetilde{Z},V}{e^{h(\widetilde{Z},U,W,V)}}.
    \label{ineq:dv-hpb-cmi-2}
\end{align}

Let $U'$ be an independent copy of $U$. By Markov's inequality, we have the following with the probability at least $1-\delta$ under the draw of $(\widetilde{Z},U',V)$
\begin{align}
    \ex{W|V,\widetilde{Z},U}{h(\widetilde{Z},U,W,V)}
    \leq&\kl{P_{W|\widetilde{Z},U,V}}{P_{W|\widetilde{Z},V}}+\log\frac{\mathbb{E}_{W,\widetilde{Z},V}\ex{U'}{e^{h(\widetilde{Z},U',W,V)}}}{\delta}.
    \label{ineq:dv-markov-2}
\end{align}

Since $U'$ is independent of $W,\widetilde{Z}$ and $V$, the random variable
$
\zeta'=\frac{1}{Kn}\sum_{i=1}^K\sum_{j=1}^n(-1)^{U^{i,v_i}_j}\pr{\ell(w,\tilde{z}^{i,v_i}_{j,1})-\ell(w,\tilde{z}^{i,v_i}_{j,0})}
$
has zero mean for any given $(w,\tilde{z},v)$, namely $\ex{U'}{\zeta'}=0$. Additionally, for the fixed $(w,\tilde{z},v)$, $\zeta'$ is the the arithmetic average of $Kn$ independent terms, each with bounded range $[-1,1]$. Hence, $\zeta'$ is a sub-Gaussian random variable with variance proxy $1/\sqrt{Kn}$. By \citet[Thm. 2.6.(IV)]{wainwright2019high}, we have
\[
\ex{V'}{e^{\frac{Kn-1}{2}\zeta^{'2}}}\leq\sqrt{Kn}.
\]

Plugging the above inequality into Eq.~(\ref{ineq:dv-markov-2}), we have
\begin{align*}
    \ex{W|V,\widetilde{Z},U}{h(\widetilde{Z},U,W,V)}
    \leq\kl{P_{W|\widetilde{Z},U,V}}{P_{W|\widetilde{Z},V}}+\log\frac{\sqrt{Kn}}{\delta}.
\end{align*}

By Jensen's inequality, 
\begin{align*}
    &\frac{Kn-1}{2}\pr{\ex{W|V,\widetilde{Z},U}{\frac{1}{Kn}\sum_{i=1}^K\sum_{j=1}^n(-1)^{U^{i,V_i}_j}\pr{\ell(W,\widetilde{Z}^{i,V_i}_{j,1})-\ell(W,\widetilde{Z}^{i,V_i}_{j,0})}}}^2
    \leq\kl{P_{W|\widetilde{Z},U,V}}{P_{W|\widetilde{Z},V}}+\log\frac{\sqrt{Kn}}{\delta}.
\end{align*}

Therefore, with probability at least $1-\delta$,
\begin{align}
    \abs{\ex{W|V,\widetilde{Z},U}{\frac{1}{Kn}\sum_{i=1}^K\sum_{j=1}^n(-1)^{U^{i,V_i}_j}\pr{\ell(W,\widetilde{Z}^{i,V_i}_{j,1})-\ell(W,\widetilde{Z}^{i,V_i}_{j,0})}}}\leq\sqrt{\frac{2\kl{P_{W|\widetilde{Z},U,V}}{P_{W|\widetilde{Z},V}}+2\log{\frac{\sqrt{Kn}}{\delta}}}{Kn-1}}.\label{ineq:og-hpb-bound}
\end{align}

Combining Eq.~(\ref{ineq:pg-hpb-bound}) and Eq.~(\ref{ineq:og-hpb-bound}), with probability at least $1-2\delta$ under the draw of $(\widetilde{Z},U,V)$, we have

\begin{align*}
     &\abs{\ex{W|\widetilde{Z},U,V}{L_{\mathcal{D}}(W)-L_S(W)}}\\
     \leq&  \abs{\ex{W|\widetilde{Z},U,V}{L_{\mathcal{D}}(W)-\frac{1}{K}\sum_{i=1}^K\ell(W,\widetilde{Z}^{i,V_i}_{1,\overline{U}_1^{i,V_i}})}}+\abs{\ex{W|\widetilde{Z},U,V}{\frac{1}{Kn}\sum_{i=1}^K\sum_{j=1}^n\pr{\ell(W,\widetilde{Z}^{i,V_i}_{j,\overline{U}_j^{i,V_i}})-\ell(W,\widetilde{Z}^{i,V_i}_{j,U_j^{i,V_i}})}}}\\
     \leq&\sqrt{\frac{2\kl{P_{W|\widetilde{Z},U,V}}{P_{W|\widetilde{Z},U}}+2\log{\frac{\sqrt{K}}{\delta}}}{K-1}}+\sqrt{\frac{2\kl{P_{W|\widetilde{Z},U,V}}{P_{W|\widetilde{Z},V}}+2\log{\frac{\sqrt{Kn}}{\delta}}}{Kn-1}}.
\end{align*}
% \begin{align*}
%      &\abs{\ex{W|\widetilde{Z},U,V}{L_{\mathcal{D}}(W)-L_S(W))}}\\
%      \leq&  \abs{\ex{W|\widetilde{Z},U,V}{L_{\mathcal{D}}(W)-\frac{1}{K}\sum_{i=1}^K\gamma(W,\widetilde{Z}^{i,V_i}_{\overline{U}^{i,V_i}})}}+\abs{\ex{W|\widetilde{Z},U,V}{\frac{1}{K}\sum_{i=1}^K\gamma(W,\widetilde{Z}^{i,V_i}_{\overline{U}^{i,V_i}})-\gamma(W,\widetilde{Z}^{i,V_i}_{U^{i,V_i}})}}\\
%      \leq&\sqrt{\frac{2\kl{P_{W|\widetilde{Z},U,V}}{P_{W|\widetilde{Z},U}}+2\log{\frac{\sqrt{K}}{\delta}}}{K-1}}+\sqrt{\frac{2\kl{P_{W|\widetilde{Z},U,V}}{P_{W|\widetilde{Z},V}}+2\log{\frac{\sqrt{Kn}}{\delta}}}{Kn-1}}.
% \end{align*}
Finally, let $\delta\to\delta/2$ will complete the proof.
\end{proof}

\subsection{Proof of Theorem~\ref{thm:excess-cmi-bound}}
\begin{proof}
    We first notice that
    \begin{align*}
    \mathcal{E}_{ER}(\mathcal{A})=&\ex{W}{L_{\mathcal{D}}(W)}-\mathbb{E}_{\mu\sim\mathcal{D}}\ex{Z\sim\mu}{\ell(w^*,Z)}\\
    =&\ex{W}{L_{\mathcal{D}}(W)}-\ex{W,S}{L_S(W)}+\ex{W,S}{L_S(W)}-\mathbb{E}_{\mu\sim\mathcal{D}}\ex{Z\sim\mu}{\ell(w^*,Z)}\\
    \leq&\ex{W}{L_{\mathcal{D}}(W)}-\ex{W,S}{L_S(W)}+\ex{S}{L_S(w^*)}-\mathbb{E}_{\mu\sim\mathcal{D}}\ex{Z\sim\mu}{\ell(w^*,Z)}\\
    =&\mathcal{E}_{\mathcal{D}}(\mathcal{A}),
        % \mathcal{E}_{\mathcal{D}}(\mathcal{A})=&\ex{W}{L_{\mathcal{D}}(W)}-\mathbb{E}_{\mu\sim\mathcal{D}}\ex{Z\sim\mu}{\ell(w^*,Z)}+\mathbb{E}_{\mu\sim\mathcal{D}}\ex{Z\sim\mu}{\ell(w^*,Z)}-\ex{W,S}{L_S(W)}\\
        % =&\mathcal{E}_{ER}(\mathcal{A})-(\underbrace{\ex{W,S}{L_S(W)}-\mathbb{E}_{\mu\sim\mathcal{D}}\ex{Z\sim\mu}{\ell(w^*,Z)}}_{B_1}).
    \end{align*}
    where the first inequality is due to the fact that $W$ is the empirical risk minimizer of $S$, and the last equality holds because $\ex{S}{L_S(w^*)}=\frac{1}{Kn}\sum_{i=1}^K\sum_{j=1}^n\ex{Z_{i,j}}{\ell(w^*,Z_{i,j})}=\ex{Z}{\ell(w^*,Z)}$.

    Recall the generalization bound in Corollary~\ref{cor:sample-cmi},
    \[
    \mathcal{E}_{ER}(\mathcal{A})\leq\mathcal{E}_{\mathcal{D}}(\mathcal{A})\leq\abs{\mathcal{E}_{\mathcal{D}}(\mathcal{A})}\leq\sqrt{\frac{2I(W;V|\widetilde{Z},U)}{K}}+\sqrt{\frac{2I(W;U|\widetilde{Z},V)}{Kn}}.
    \]
    This gives us the expected excess risk bound.

    For the high-probability bound,
    \begin{align*}
        &\ex{W|\widetilde{Z},U,V}{L_{\mathcal{D}}(W)-L_S(W)}\\
        =&\ex{W|\widetilde{Z},U,V}{L_{\mathcal{D}}(W)-\mathbb{E}_{\mu\sim\mathcal{D}}\ex{Z\sim\mu}{\ell(w^*,Z)}+\mathbb{E}_{\mu\sim\mathcal{D}}\ex{Z\sim\mu}{\ell(w^*,Z)}-L_S(W)}\\
        =&\ex{W|\widetilde{Z},U,V}{L_{\mathcal{D}}(W)-\mathbb{E}_{\mu\sim\mathcal{D}}\ex{Z\sim\mu}{\ell(w^*,Z)}}-(\underbrace{\ex{W|\widetilde{Z},U,V}{L_S(W)-\mathbb{E}_{\mu\sim\mathcal{D}}\ex{Z\sim\mu}{\ell(w^*,Z)}}}_{B_1}).
    \end{align*}

    Hence,
    \begin{align*}
        \ex{W|\widetilde{Z},U,V}{L_{\mathcal{D}}(W)-\mathbb{E}_{\mu\sim\mathcal{D}}\ex{Z\sim\mu}{\ell(w^*,Z)}}=\ex{W|\widetilde{Z},U,V}{L_{\mathcal{D}}(W)-L_S(W)}+B_1.
    \end{align*}

    The first gap in RHS, $\ex{W|\widetilde{Z},U,V}{L_{\mathcal{D}}(W)-L_S(W)}$, can be upper bounded by Theorem~\ref{thm:hpb-cmi-bound}. 

    % We have
    % \[
    % \mathcal{E}_{ER}(\mathcal{A})\leq B_1+\sqrt{\frac{2I(W;V|\widetilde{Z},U)}{K}}+\sqrt{\frac{2I(W;U|\widetilde{Z},V)}{Kn}}.
    % \]

    We further process the $B_1$ term,
    \begin{align*}
        B_1 =&\ex{W|\widetilde{Z},U,V}{L_S(W)- \ex{S}{L_S(w^*)}+\ex{S}{L_S(w^*)}-\mathbb{E}_{\mu\sim\mathcal{D}}\ex{Z\sim\mu}{\ell(w^*,Z)}}\\
        \leq&\ex{W|\widetilde{Z},U,V}{L_S(w^*)-\mathbb{E}_{\mu\sim\mathcal{D}}\ex{Z\sim\mu}{\ell(w^*,Z)}}\\
        =&L_S(w^*)-\mathbb{E}_{\mu\sim\mathcal{D}}\ex{Z\sim\mu}{\ell(w^*,Z)}\\
        =&\frac{1}{nK}\sum_{i=1}^n\sum_{j=1}^K\ell(w^*,Z_{i,j})-\ex{Z}{\ell(w^*,Z)}.
    \end{align*}
    % \begin{align*}
    %     B_1 =& \ex{W,S}{L_S(W)}- \ex{S}{L_S(w^*)}+\ex{S}{L_S(w^*)}-\mathbb{E}_{\mu\sim\mathcal{D}}\ex{Z\sim\mu}{\ell(w^*,Z)}\\
    %     \leq&\ex{S}{L_S(w^*)}-\mathbb{E}_{\mu\sim\mathcal{D}}\ex{Z\sim\mu}{\ell(w^*,Z)},
    % \end{align*}
    where the inequality holds because $W$ is the empirical risk minimizer of $S$.

    Note that each $Z_{i,j}$ is independently drawn. Therefore,  by Hoeﬀding’s inequality (cf.~Lemma~\ref{lem:hoeff-ineq}), we have, with probability at least $1-\delta$,
    \[
    B_1\leq\sqrt{\frac{\log{\frac{2}{\delta}}}{2nK}}.
    \]

    This completes the proof.
\end{proof}

\section{Omitted Proof in Section~\ref{sec:fast-cmi}}
\label{sec:proof-ecmi}

\subsection{Proof of Theorem~\ref{thm:cmi-fast-rate}}

To prove Theorem~\ref{thm:cmi-fast-rate}, we first need the following lemma as the main ingredient.

\begin{lem}
    \label{lem:weighted-pg-og}
    Let the constants $C_1,C_2>0$. The following equations hold for the weighted participation gap and weighted out-of-sample gap,
    \begin{align}
        \ex{W}{L_{\mathcal{D}}(W)}-(1+C_1)\ex{W,\mu_{[K]}}{L_{\mu_{[K]}}(W)}=&\frac{2+C_1}{K}\sum_{i=1}^K\ex{\bar{L}^+_{i},\tilde{\varepsilon}_i}{ \tilde{\varepsilon}_i\bar{L}^+_{i}},\label{eq:weighted-pg}\\
        \ex{W,\mu_{[K]}}{L_{\mu_{[K]}}(W)}-(1+C_2)\ex{W,S}{L_{S}(W)}=&\frac{2+C_2}{Kn}\sum_{i=1}^K\sum_{j=1}^n\ex{L^{+i}_j,\tilde{\varepsilon}^{i,V_i}_j,V_i}{ \tilde{\varepsilon}^{i,V_i}_jL^{+i}_j},\label{eq:weighted-og}
    \end{align}
    respectively, where $\tilde{\varepsilon}_i=(-1)^{\overline{V}_i}
    % \varepsilon_i
    -\frac{C_1}{C_1+2}$ and $\tilde{\varepsilon}^{i,V_i}_j=(-1)^{\overline{U}^{i,V_i}_j}-\frac{C_1}{C_1+2}$ are two \emph{shifted} Rademacher variable with the same mean $-\frac{C_1}{C_1+2}$.
\end{lem}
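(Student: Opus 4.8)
The plan is to derive both identities from a single-loss reformulation of the two gaps, in exactly the spirit of the symmetric rewriting already used in Lemma~\ref{lem:cmi-symmetric}. I would first record two elementary ``building-block'' identities for $\bar{L}^+_i$ (resp.\ $L^{i+}_j$) and then obtain Eq.~(\ref{eq:weighted-pg}) and Eq.~(\ref{eq:weighted-og}) by a short linear combination. For the participation-gap identity, write $A_i\triangleq\ex{}{L_{\tilde{\mu}_{i,\overline{V}_i}}(W)}$ and $B_i\triangleq\ex{}{L_{\tilde{\mu}_{i,V_i}}(W)}$ for the expected test losses of the non-participating and participating client in row $i$. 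Exactly as in the proof of Lemma~\ref{lem:cmi-symmetric}, since each column of $\tilde{\mu}$ is an i.i.d.\ draw from $\mathcal{D}$ and the non-participating column is independent of $W$, one has $\ex{W}{L_{\mathcal{D}}(W)}=\frac{1}{K}\sum_i A_i$ and $\ex{W,\mu_{[K]}}{L_{\mu_{[K]}}(W)}=\frac{1}{K}\sum_i B_i$, so the left-hand side of Eq.~(\ref{eq:weighted-pg}) equals $\frac{1}{K}\sum_i\pr{A_i-(1+C_1)B_i}$, and it remains to express $A_i,B_i$ through $\bar{L}^+_i$.

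The crucial tool is a measure-preserving symmetry $\tau$ acting on row $i$: swap the two columns of $\tilde{\mu}_i$ together with their supersamples $\widetilde{Z}^{i,0}\leftrightarrow\widetilde{Z}^{i,1}$ and selectors $U^{i,0}\leftrightarrow U^{i,1}$, and simultaneously flip $V_i\mapsto 1-V_i$. Because $W$ is a function only of the participating data and $\tau$ merely relabels the columns while keeping the participating column's data fixed, $\tau$ leaves $W$ and the entire joint law invariant. Under $\tau$ the loss $\bar{L}^+_i=\ell(W,\widetilde{Z}^{i,0}_{1,\overline{U}^{i,0}_1})$ is interchanged with its column-$1$ counterpart and $(-1)^{V_i}$ changes sign. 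Conditioning on $V_i$ and using this invariance yields the two building blocks
\begin{align*}
\ex{}{\bar{L}^+_i}=\tfrac12\pr{A_i+B_i},\qquad
\ex{}{(-1)^{\overline{V}_i}\bar{L}^+_i}=\tfrac12\pr{A_i-B_i},
\end{align*}
the first because $\bar{L}^+_i$ is the participating test loss when $V_i=0$ and the non-participating test loss when $V_i=1$ (equidistributed with $B_i$ and $A_i$ respectively via $\tau$), and the second by additionally tracking the sign $(-1)^{\overline{V}_i}$.

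With $\tilde{\varepsilon}_i=(-1)^{\overline{V}_i}-\tfrac{C_1}{C_1+2}$, the two blocks combine linearly: a direct computation gives $(2+C_1)\ex{}{\tilde{\varepsilon}_i\bar{L}^+_i}=(2+C_1)\br{\tfrac12(A_i-B_i)-\tfrac{C_1}{2(C_1+2)}(A_i+B_i)}=A_i-(1+C_1)B_i$, and summing over $i$ gives Eq.~(\ref{eq:weighted-pg}). The out-of-sample identity is proved by the same three steps applied one level down. I would replace the row-swap $\tau$ by the column swap at coordinate $(i,j)$ of the participating supersample $\widetilde{Z}^{i,V_i}$ together with the flip $U^{i,V_i}_j\mapsto 1-U^{i,V_i}_j$; this is again measure-preserving and $W$-invariant because it fixes the training datum $\widetilde{Z}^{i,V_i}_{j,U^{i,V_i}_j}$. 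Letting $A'_{ij},B'_{ij}$ denote the expected test and train losses at that coordinate, the same symmetry argument yields $\ex{}{L^{i+}_j}=\tfrac12(A'_{ij}+B'_{ij})$ and $\ex{}{(-1)^{\overline{U}^{i,V_i}_j}L^{i+}_j}=\tfrac12(A'_{ij}-B'_{ij})$, and combining with the shifted Rademacher $\tilde{\varepsilon}^{i,V_i}_j$ (whose shift is $\tfrac{C_2}{C_2+2}$, so that the $(2+C_2)$ prefactor is consistent) reproduces $A'_{ij}-(1+C_2)B'_{ij}$ coordinatewise, hence Eq.~(\ref{eq:weighted-og}).

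I expect the main obstacle to be purely bookkeeping rather than conceptual: making the invariance of $W$ under $\tau$ (and its one-level-down analogue) fully rigorous, and correctly tracking, as $V_i$ (resp.\ $U^{i,V_i}_j$) ranges over $\{0,1\}$, which column plays the role of participating/non-participating (resp.\ train/test). This requires care precisely because the superscript in $\bar{L}^+_i$ and $L^{i+}_j$ is frozen at $0$ while the selector varies, so the interpretation of $\bar{L}^+_i$ switches between the two gap components depending on $V_i$. Once the two building-block identities are pinned down, the remaining manipulations are elementary algebra.
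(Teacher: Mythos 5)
Your proposal is correct and takes essentially the same route as the paper's own proof: the paper splits the weighted gap via the identity $a-(1+C_1)b=\frac{2+C_1}{2}(a-b)-\frac{C_1}{2}(a+b)$ and then invokes the symmetric property of the superclient/supersample construction to collapse the $\bar{L}^-_i$ term onto $\bar{L}^+_i$, which is precisely your measure-preserving swap $\tau$ and your two building-block moment identities $\mathbb{E}[\bar{L}^+_i]=\frac{1}{2}(A_i+B_i)$, $\mathbb{E}[(-1)^{\overline{V}_i}\bar{L}^+_i]=\frac{1}{2}(A_i-B_i)$ read in the opposite direction, with the identical argument one level down for the out-of-sample part. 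Your parenthetical observation that the shift of $\tilde{\varepsilon}^{i,V_i}_j$ must be $\frac{C_2}{C_2+2}$ is also right: the $\frac{C_1}{C_1+2}$ in the lemma statement is a typo, as the paper's own proof of Eq.~(\ref{eq:weighted-og}) uses $\frac{C_2}{C_2+2}$.
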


\begin{proof}[Proof of Lemma~\ref{lem:weighted-pg-og}.]
Let $\bar{L}^-_i=\ell(W,\widetilde{Z}^{i,1}_{1,\overline{U}_1^{i,1}})$, 
% $\bar{L}^-_i=\gamma(W,\widetilde{Z}^{i,1}_{\overline{U}^{i,1}})$, 
then 
\begin{align*}
    &\ex{W}{L_{\mathcal{D}}(W)}-(1+C_1)\ex{W,\mu_{[K]}}{L_{\mu_{[K]}}(W)}\\
    =&\frac{1}{K}\sum_{i=1}^K\ex{\widetilde{Z}^i,U^i,W,V_i}{\ell(W,\widetilde{Z}^{i,\overline{V}_i}_{1,\overline{U}_1^{i,\overline{V}_i}})-(1+C_1)\ell(W,\widetilde{Z}^{i,V_i}_{1,\overline{U}_1^{i,V_i}})}\\
    =&\frac{1}{K}\sum_{i=1}^K\ex{\widetilde{Z}^i,U^i,W,V_i}{\pr{1+\frac{C_1}{2}}\pr{\ell(W,\widetilde{Z}^{i,\overline{V}_i}_{1,\overline{U}_1^{i,\overline{V}_i}})-\ell(W,\widetilde{Z}^{i,V_i}_{1,\overline{U}_1^{i,V_i}})}-\frac{C_1}{2}\ell(W,\widetilde{Z}^{i,\overline{V}_i}_{1,\overline{U}_1^{i,\overline{V}_i}})-\frac{C_1}{2}\ell(W,\widetilde{Z}^{i,V_i}_{1,\overline{U}_1^{i,V_i}})}\\
    =&\frac{2+C_1}{2K}\sum_{i=1}^K\br{\ex{\bar{L}^-_{i},V_i}{(-1)^{V_i} \bar{L}^-_{i}-\frac{C_1}{C_1+2}\bar{L}^-_{i}}+\ex{\bar{L}^+_{i},V_i}{-(-1)^{V_i} \bar{L}^+_{i}-\frac{C_1}{C_1+2}\bar{L}^+_{i}}}.
\end{align*}
% \begin{align*}
%     &\ex{W}{L_{\mathcal{D}}(W)}-(1+C_1)\ex{W,\mu_{[K]}}{L_{\mu_{[K]}}(W)}\\
%     =&\frac{1}{K}\sum_{i=1}^K\ex{\widetilde{Z}^i,U^i,W,V_i}{\gamma(W,\widetilde{Z}^{i,\overline{V}_i}_{\overline{U}^{i,\overline{V}_i}})-(1+C_1)\gamma(W,\widetilde{Z}^{i,V_i}_{\overline{U}^{i,V_i}})}\\
%     =&\frac{1}{K}\sum_{i=1}^K\ex{\widetilde{Z}^i,U^i,W,V_i}{\pr{1+\frac{C_1}{2}}\pr{\gamma(W,\widetilde{Z}^{i,\overline{V}_i}_{\overline{U}^{i,\overline{V}_i}})-\gamma(W,\widetilde{Z}^{i,V_i}_{\overline{U}^{i,V_i}})}-\frac{C_1}{2}\gamma(W,\widetilde{Z}^{i,\overline{V}_i}_{\overline{U}^{i,\overline{V}_i}})-\frac{C_1}{2}\gamma(W,\widetilde{Z}^{i,V_i}_{\overline{U}^{i,V_i}})}\\
%     =&\frac{2+C_1}{2K}\sum_{i=1}^K\br{\ex{\bar{L}^-_{i},V_i}{(-1)^{V_i} \bar{L}^-_{i}-\frac{C_1}{C_1+2}\bar{L}^-_{i}}+\ex{\bar{L}^+_{i},V_i}{-(-1)^{V_i} \bar{L}^+_{i}-\frac{C_1}{C_1+2}\bar{L}^+_{i}}}.
% \end{align*}

% Then, since $\widetilde{Z}$ are i.i.d drawn from $\mu^{2n}$ and $U$ are i.i.d drawn from the Bernoulli distribution, the distribution $P_{L_{i,1},U_i}$ and the distribution $P_{L_{i,0},U_i}$ should have some symmetric property, e.g., $P_{L_{i,1}|U_i=1}=P_{L_{i,0}|U_i=0}$ and $P_{L_{i,1}|U_i=0}=P_{L_{i,0}|U_i=1}$.
% Recall that $P_{L^-_{i}}=P_{\bar{L}^+_{i}}$, we have $\ex{L^-_{i}}{\frac{C_1}{C_1+2}L^-_{i}}=\ex{\bar{L}^+_{i}}{\frac{C_1}{C_1+2}\bar{L}^+_{i}}$.
% Also, noting that $\ex{L^-_{i}|U_i=0}{ L^-_{i}}=\ex{\bar{L}^+_{i}|U_i=1}{ \bar{L}^+_{i}}$ and $\ex{L^-_{i}|U_i=1}{ -L^-_{i}}=\ex{\bar{L}^+_{i}|U_i=0}{ -\bar{L}^+_{i}}$,
By the symmetric property of superclient, it is easy to see that, no matter 
 in the homogeneous setting or the heterogeneous setting, $\ex{\bar{L}^-_{i},V_i}{(-1)^{V_i} \bar{L}^-_{i}-\frac{C_1}{C_1+2}\bar{L}^-_{i}}=\ex{\bar{L}^+_{i},V_i}{-(-1)^{V_i} \bar{L}^+_{i}-\frac{C_1}{C_1+2}\bar{L}^+_{i}}$ holds, we then have
\begin{align}
    &\ex{W}{L_{\mathcal{D}}(W)}-(1+C_1)\ex{W,\mu_{[K]}}{L_{\mu_{[K]}}(W)}\notag\\
    =&\frac{2+C_1}{K}\sum_{i=1}^K\ex{\bar{L}^+_{i},\overline{V}_i}{(-1)^{\overline{V}_i} \bar{L}^+_{i}-\frac{C_1}{C_1+2}\bar{L}^+_{i}}\notag\\
    =&\frac{2+C_1}{K}\sum_{i=1}^K\ex{\bar{L}^+_{i},\tilde{\varepsilon}_i}{ \tilde{\varepsilon}_i\bar{L}^+_{i}}.\label{eq:weighted-pg-1}
\end{align}
% where $\tilde{\varepsilon}_i=\varepsilon_i-\frac{C_1}{C_1+2}$ and $\varepsilon_i\sim \mathrm{Unif}\{-1,+1\}$ is the Rademacher variable. In this case, $\tilde{\varepsilon}_i$ is called the \emph{shifted} Rademacher variable and its mean is $-\frac{C_1}{C_1+2}$.

For the second part, the decomposition is nearly the same. In particular, let $L^{i-}_{j}=\ell(W,\widetilde{Z}^{i,V_i}_{j,1})$, then
\begin{align*}
    &\ex{W,\mu_{[K]}}{L_{\mu_{[K]}}(W)}-(1+C_2)\ex{W,S}{L_{S}(W)}\\
    =&\frac{1}{Kn}\sum_{i=1}^K\sum_{j=1}^n\ex{\widetilde{Z}^i,V_i,U^{i,V_i}_j,W}{\ell(W,\widetilde{Z}^{i,V_i}_{j,\overline{U}^{i,V_i}_j})-\ell(W,\widetilde{Z}^{i,V_i}_{j,U^{i,V_i}_j})}\\
    =&\frac{2+C_2}{2Kn}\sum_{i=1}^K\sum_{j=1}^n\br{\ex{L^{i-}_{j},V_i,U^{i,V_i}_j}{(-1)^{U^{i,V_i}_j}L^{i-}_{j}-\frac{C_2}{C_2+2}L^{i-}_{j}}+\ex{L^{i+}_{j},V_i,U^{i,V_i}_j}{-(-1)^{U^{i,V_i}_j}L^{i+}_{j}-\frac{C_2}{C_2+2}L^{i+}_{j}}}\\
    =&\frac{2+C_2}{Kn}\sum_{i=1}^K\sum_{j=1}^n\ex{L^{i+}_{j},V_i,U^{i,V_i}_j}{(-1)^{\overline{U}^{i,V_i}_j}L^{i+}_{j}-\frac{C_2}{C_2+2}L^{i+}_{j}}\\
    =&\frac{2+C_2}{Kn}\sum_{i=1}^K\sum_{j=1}^n\ex{L^{i+}_{j},V_i,\tilde{\varepsilon}^{i,V_i}_j}{\tilde{\varepsilon}^{i,V_i}_jL^{i+}_{j}}.
\end{align*}
This completes the proof.
\end{proof}

We are now ready to prove Theorem~\ref{thm:cmi-fast-rate}.

\begin{proof}[Proof of Theorem~\ref{thm:cmi-fast-rate}.]
     Recall Lemma~\ref{lem:DV representation}, let $g=t(C_1+2)\tilde{\varepsilon}_i \bar{L}_i^+$, and let $\tilde{\varepsilon}'_i$ be an independent copy of $\tilde{\varepsilon}_i$,  then
    \begin{align}
    I(\bar{L}^+_{i};V_i)=I(\bar{L}^+_{i};\tilde{\varepsilon}_i)=&\mathrm{D_{KL}}\pr{P_{\bar{L}^+_{i},\tilde{\varepsilon}_i}||P_{\bar{L}^+_{i}}P_{\tilde{\varepsilon}'_i}}\notag\\
        \geq&\sup_{t>0} \ex{\bar{L}^+_{i},\tilde{\varepsilon}_i}{t(C_1+2)\tilde{\varepsilon}_i \bar{L}^+_{i}} - \log\ex{ \bar{L}^+_{i},\tilde{\varepsilon}'_i}{e^{t(C_1+2)\tilde{\varepsilon}'_i \bar{L}^+_{i}}}
        \notag\\
        =&\sup_{t>0} \ex{\bar{L}^+_{i},\tilde{\varepsilon}_i}{t(C_1+2)\tilde{\varepsilon}_i \bar{L}^+_{i}} - \log\frac{\ex{ \bar{L}^+_{i}}{e^{-2t(C_1+1) \bar{L}^+_{i}}+e^{2t\bar{L}^+_{i}}}}{2},\label{ineq:DV-fast}
    \end{align}
% We hope to have
%    \begin{align}
%        \ex{\bar{L}^+_{i},\tilde{\varepsilon}'_i}{e^{t(C_1+2)\tilde{\varepsilon}'_i \bar{L}^+_{i}}}\leq 1.
%        \label{ineq:bound-log-moment-1}
%    \end{align}
where we use the fact that $\tilde{\varepsilon}'_i$ is independent of $\bar{L}^+_{i}$, and $P(\tilde{\varepsilon}_i=\frac{2}{C_1+2})=P(\tilde{\varepsilon}_i=\frac{-2(C_1+1)}{C_1+2})=\frac{1}{2}$.
% , then
%     \[
%     \ex{\bar{L}^+_{i},\tilde{\varepsilon}'_i}{e^{t(C_1+2)\tilde{\varepsilon}'_i \bar{L}^+_{i}}}=\frac{\ex{ \bar{L}^+_{i}}{e^{-2t(C_1+1) \bar{L}^+_{i}}+e^{2t\bar{L}^+_{i}}}}{2}.
%     \]

    Notice that $e^{-2t(C_1+1) \bar{L}^+_{i}}+e^{2t\bar{L}^+_{i}}$ is the summation of two convex function, which is still a convex function, so the maximum value of this function is achieved at the endpoints of the bounded domain. Recall that $\bar{L}^+_{i}\in [0,1]$, we now consider two cases:
    
    i) when $\bar{L}^+_{i}=0$, we have $e^{-2t(C_1+1) \bar{L}^+_{i}}+e^{2t\bar{L}^+_{i}}=2$;
    
    ii) when $\bar{L}^+_{i}=1$, we select $t$ such that
    $
    e^{-2t(C_1+1)}+e^{2t}\leq 2.
    $
    % s.t. Eq~(\ref{ineq:bound-log-moment-1}) can hold.
    Note that this inequality implies that $t\leq \frac{\log{2}}{2}$.

     Hence, let $t=C_3$, and let the values of $C_1,C_3$ be taken from the domain of $\{C_1,C_3|C_1,C_3>0, e^{-2C_3(C_1+1)}+e^{2C_3}\leq 2\}$, so the inequality
     \begin{align}
       \frac{\ex{ \bar{L}^+_{i}}{e^{-2C_3(C_1+1) \bar{L}^+_{i}}+e^{2C_3\bar{L}^+_{i}}}}{2}\leq 1
       \label{ineq:bound-log-moment-1}
   \end{align}
     will hold. Under this condition, by re-arranging the terms in Eq.~(\ref{ineq:DV-fast}), we have
    \[
    (C_1+2)\ex{\bar{L}^+_{i},\tilde{\varepsilon}_i}{\tilde{\varepsilon}_i \bar{L}^+_{i}}\leq \frac{I(\bar{L}^+_{i};V_i)}{C_3}.
    \]
    
    Plugging the  inequality above into Eq.~(\ref{eq:weighted-og}), we have
    \[
   \ex{W}{L_{\mathcal{D}}(W)}-(1+C_1)\ex{W,\mu_{[K]}}{L_{\mu_{[K]}}(W)}=\frac{2+C_1}{K}\sum_{i=1}^K\ex{\bar{L}^+_{i},\tilde{\varepsilon}_i}{ \tilde{\varepsilon}_i\bar{L}^+_{i}}\leq \sum_{i=1}^K \frac{I(\bar{L}^+_{i};V_i)}{C_3m}.
    \]

    Thus, the following inequality can be obtained,
    \begin{align}
        \ex{W}{L_{\mathcal{D}}(W)}\leq \min_{C_1,C_3>0,e^{2C_3}+e^{-2C_3(C_1+1)}\leq 2}  (1+C_1)\ex{W,\mu_{[K]}}{L_{\mu_{[K]}}(W)}+ \sum_{i=1}^K\frac{I(\bar{L}^+_{i};V_i)}{C_3m}.
        % = \frac{4\sum_{i=1}^nI(\bar{L}^+_{i};U_i)}{n\ln{\pr{2-e^{-2\frac{\sum_{i=1}^nI(L_{i,1};U_i)}{nL_n}}}}},
        \label{ineq:fast-rate-optimal-1}
    \end{align}

    Following the similar development, we can also obtain
    \begin{align*}
        &\ex{W,\mu_{[K]}}{L_{\mu_{[K]}}(W)}-(1+C_2)\ex{W,S}{L_{S}(W)}\\
        =&\frac{2+C_2}{Kn}\sum_{i=1}^K\sum_{j=1}^n\ex{L^{+i}_j,\tilde{\varepsilon}^{i,V_i}_j,V_i}{ \tilde{\varepsilon}^{i,V_i}_jL^{+i}_j}\\
        \leq& \sum_{i=1}^K\sum_{j=1}^n \frac{I(L^{+i}_j;U^{i,V_i}_j)}{C_4Kn}.
    \end{align*}

    This is equivalent to
    \begin{align}
        \ex{W,\mu_{[K]}}{L_{\mu_{[K]}}(W)}\leq \min_{C_2,C_4>0,e^{2C_4}+e^{-2C_4(C_2+1)}\leq 2}(1+C_2)\ex{W,S}{L_{S}(W)}+ \sum_{i=1}^K\sum_{j=1}^n \frac{I(L^{+i}_j;U^{i,V_i}_j)}{C_4Kn}.
        % = \frac{4\sum_{i=1}^nI(\bar{L}^+_{i};U_i)}{n\ln{\pr{2-e^{-2\frac{\sum_{i=1}^nI(L_{i,1};U_i)}{nL_n}}}}},
        \label{ineq:fast-rate-optimal-2}
    \end{align}

    Finally, substituting Eq.~(\ref{ineq:fast-rate-optimal-2}) into Eq.~(\ref{ineq:fast-rate-optimal-1}) and re-assigning  $C_1=C_1+1$ and $C_2=C_2+1$ will complete the proof.
    \end{proof}

\subsection{Proof of Corollary~\ref{cor:hypercmi-fast-rate}}
\begin{proof}
    In fact, the generalization bound in Corollary~\ref{cor:hypercmi-fast-rate} can be similarly proved as in Theorem~\ref{thm:cmi-fast-rate} with unpacking the random variables $\bar{L}^+_i=\ell(W,\widetilde{Z}^{i,0}_{1,\overline{U}_1^{i,0}})$
    and $L^{i+}_{j}=\ell(W,\widetilde{Z}^{i,V_i}_{j,0})$ during the development, which are functions of $(W,\widetilde{Z}^{i},U^i)$ and $(W,\widetilde{Z}^{i},V_i)$, respectively.

    Alternatively, notice that
    \begin{align*}
        I(\bar{L}^+_i;V_i|\widetilde{Z}^i,U^i)+I(V_i;\widetilde{Z}^i,U^i)
        =I(\bar{L}^+_i,\widetilde{Z}^i,U^i;V_i)
        =I(\bar{L}^+_i;V_i)+I(\widetilde{Z}^i,U^i;V_i|\bar{L}^+_i).
    \end{align*}
    Since $I(V_i;\widetilde{Z}^i,U^i)=0$, we have $I(\bar{L}^+_i;V_i)\leq I(\bar{L}^+_i;V_i|\widetilde{Z}^i,U^i)$. By the data-processing inequality, we further obtain $I(\bar{L}^+_i;V_i|\widetilde{Z}^i,U^i)\leq I(W;V_i|\widetilde{Z}^i,U^i)$.

    Analogously, we can also obtain $I(L^{i+}_j;U_j^{i,V_i}|V_i)\leq I(W;U_j^{i,V_i}|V_i,\widetilde{Z}^i)$. The remaining steps are the same as in the proof of Corollary~\ref{cor:sample-cmi}.
\end{proof}

\section{Omitted Proof in Section~\ref{sec:Application}}
\label{sec:proof-aggregation}

\subsection{Proof of Theorem~\ref{thm:average-bregman-cmi}}
The most important ingredient for proving Theorem~\ref{thm:average-bregman-cmi} is the following lemma.

\begin{lem}
    \label{lem:local-gap}
    Let $\ell(w,z)=D_f(w,z)$, we have
    \begin{align*}
        % \mathcal{E}_{PG}(\mathcal{A})=&\frac{1}{K^2}\sum_{i=1}^K\ex{\widetilde{Z}^i,U^i,W_i,V_i}{(-1)^{V_i}\pr{\gamma(W_i,\widetilde{Z}^{i,1}_{\overline{U}^{i,1}})-\gamma(W_i,\widetilde{Z}^{i,0}_{\overline{U}^{i,0}})}}\\
        \mathcal{E}_{PG}(\mathcal{A})=&\frac{1}{K^2}\sum_{i=1}^K\ex{\widetilde{Z}^i,U^i,W_i,V_i}{(-1)^{V_i}\pr{\ell(W_i,\widetilde{Z}^{i,1}_{1,\overline{U}_1^{i,1}})-\ell(W_i,\widetilde{Z}^{i,0}_{1,\overline{U}_1^{i,0}})}}\\
        \mathcal{E}_{OG}(\mathcal{A})=&\frac{1}{K^2n}\sum_{i=1}^K\sum_{j=1}^n\ex{\widetilde{Z}^i,V_i,U^{i,V_i}_j,W_i}{(-1)^{U^{i,V_i}_j}\pr{\ell(W_i,\widetilde{Z}^{i,V_i}_{j,1})-\ell(W_i,\widetilde{Z}^{i,V_i}_{j,0})}}.
    \end{align*}
\end{lem}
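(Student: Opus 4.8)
The plan is to begin from the symmetric reformulations of $\mathcal{E}_{PG}(\mathcal{A})$ and $\mathcal{E}_{OG}(\mathcal{A})$ established in Lemma~\ref{lem:cmi-symmetric}, and then exploit the defining structure of the Bregman loss to trade the global model $W$ for the local models $W_i$. The engine of the whole argument is the elementary observation that for $\ell(w,z)=D_f(w,z)$ the difference of two losses evaluated at the \emph{same} first argument is \emph{affine} in that argument: expanding the definition, the $f(w)$ terms cancel and one obtains
\begin{align*}
\ell(w,z)-\ell(w,z') = \langle \nabla f(z')-\nabla f(z),\, w\rangle + c(z,z'),
\end{align*}
where $c(z,z')$ gathers all terms that do not involve $w$. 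I would record this identity first, since the cancellation of $f(w)$ is precisely what linearizes the loss gap and lets the averaging $W=\frac{1}{K}\sum_{k=1}^K W_k$ distribute over the clients.

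For the participation gap I would substitute this identity into the expression from Lemma~\ref{lem:cmi-symmetric} with $w=W$ and $(z,z')=(\widetilde{Z}^{i,1}_{1,\overline{U}_1^{i,1}},\widetilde{Z}^{i,0}_{1,\overline{U}_1^{i,0}})$. Two simplifications then occur. First, $c(z,z')$ depends only on client $i$'s test data and is independent of $V_i$; since $(-1)^{V_i}$ has zero mean and is independent of $(\widetilde{Z}^i,U^i)$, the term $\ex{}{(-1)^{V_i}c(z,z')}$ vanishes. Second, writing $\langle\cdot\,,W\rangle=\frac{1}{K}\sum_k\langle\cdot\,,W_k\rangle$, I would use that in the single-round setting each $W_k$ is a function of $(\widetilde{Z}^k,V_k,U^k)$ alone, so for $k\neq i$ the sign $(-1)^{V_i}$ is independent of both the coefficient $\nabla f(\cdot)-\nabla f(\cdot)$ and $W_k$, whence every off-diagonal term factors into $\ex{}{(-1)^{V_i}}\cdot(\cdots)=0$. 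Only the diagonal $k=i$ term survives, producing the extra $1/K$ and hence the $1/K^2$ prefactor. Re-adding the (zero) constant term then reassembles the survivor into $\ex{}{(-1)^{V_i}\pr{\ell(W_i,\widetilde{Z}^{i,1}_{1,\overline{U}_1^{i,1}})-\ell(W_i,\widetilde{Z}^{i,0}_{1,\overline{U}_1^{i,0}})}}$, matching the claim.

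The out-of-sample gap is treated identically, with the sign $(-1)^{U^{i,V_i}_j}$ playing the role of $(-1)^{V_i}$ and $(z,z')=(\widetilde{Z}^{i,V_i}_{j,1},\widetilde{Z}^{i,V_i}_{j,0})$. Here the two columns $\widetilde{Z}^{i,V_i}_{j,0},\widetilde{Z}^{i,V_i}_{j,1}$ depend on $(\widetilde{Z}^i,V_i)$ but not on $U^{i,V_i}_j$, so conditioning on $(\widetilde{Z}^i,V_i)$ and on $\{W_k\}_{k\neq i}$ leaves $U^{i,V_i}_j$ a fresh uniform bit; the constant term and all $k\neq i$ cross terms then vanish by its zero conditional mean, leaving only the $W_i$ contribution with the $1/K^2$ scaling.

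I expect the main obstacle to lie in the independence bookkeeping for the off-diagonal terms: one must argue precisely that, within a single communication round, $W_k$ for $k\neq i$ is measurable with respect to client $k$'s quantities and therefore carries no information about client $i$'s supersample or its membership variables, so that the zero-mean property of $(-1)^{V_i}$ and $(-1)^{U^{i,V_i}_j}$ can be invoked after the appropriate conditioning. By contrast, the $f(w)$-cancellation and the reassembly of the diagonal term into a local loss difference are routine; the care is entirely in verifying that the data-only constant terms integrate to zero against the sign variables and that the cross-client terms decouple.
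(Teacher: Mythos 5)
Your proof is correct, and it takes a genuinely different route from the paper's. The paper proves the lemma by a replace-one (ghost-resampling) argument: for the participation gap it resamples client $i$ (giving a ghost model $W'_i$ and the perturbed aggregate $W^{(i)}=\frac{1}{K}(\sum_{k\neq i}W_k + W'_i)$), uses exchangeability so that $\mathbb{E}[\ell(\mathcal{A}(S),\bar{Z}_i)]=\mathbb{E}[\ell(\mathcal{A}(S^i),\bar{Z}'_i)]$, and then exploits equality of marginals ($\mathbb{E}[f(W)]=\mathbb{E}[f(W^{(i)})]$ and $\mathbb{E}[f(W_i)]=\mathbb{E}[f(W'_i)]$) to cancel the $f(\cdot)$ terms and localize the perturbation to $W^{(i)}-W=\frac{1}{K}(W'_i-W_i)$; the out-of-sample gap is handled the same way at the data-point level via $W^{(i,j)}$. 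You instead stay entirely inside the signed supersample representation of Lemma~\ref{lem:cmi-symmetric}: you observe that the Bregman loss difference $\ell(w,z)-\ell(w,z')$ is affine in $w$ (the $f(w)$ terms cancel), distribute $W=\frac{1}{K}\sum_k W_k$ over the inner product, and annihilate the data-only constant and all cross-client terms by the zero mean of $(-1)^{V_i}$ (resp.\ $(-1)^{U^{i,V_i}_j}$) together with cross-client independence, so that only the diagonal $k=i$ term survives with the extra $1/K$. Both proofs rest on the same structural fact---the affinity of the Bregman loss gap in $w$---but the cancellation mechanisms differ: the paper's is marginal-equality under resampling, yours is the zero-mean sign variable after conditioning. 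Your route is shorter, avoids the auxiliary ghost constructions, and makes explicit exactly where the single-round assumption enters (that $W_k$, $k\neq i$, is measurable with respect to client $k$'s variables and algorithmic randomness, which is indeed valid in the paper's setting); you correctly flag this independence bookkeeping as the delicate step. What the paper's route buys in exchange is reusability: the ghost-aggregate objects $W^{(i)}$ and $W^{(i,j)}$ are exactly what the proof of Theorem~\ref{thm:average-Lipshitz-cmi} and the multi-round extension (following \citet{barnes2022improved}) build on. The only point you should make explicit to be fully rigorous is integrability of the cross terms $\langle\nabla f(\cdot),W_k\rangle$ so that the product expectations factorize---a mild condition the paper also assumes implicitly.
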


\begin{proof}[Proof of Lemma~\ref{lem:local-gap}.]
    Considering the participation gap. The $i$-th participating client $\mu_i$ uses a training sample $S_i$ to train a local model $W_i$, then if this client is replaced by another client, $\mu'_i$, which uses its own training sample $S_i'$, then the provided local model is denoted as $W^{(i)}$. In this case, we denote the new aggregation model $W^{(i)}=\frac{1}{K}\pr{\sum_{k\neq i} W_k + W'_i}$.
    
    Then, let $\bar{Z}'_i=\widetilde{Z}^{i,\overline{V}_i}_{1,\overline{U}_1^{i,\overline{V}_i}}$ be the testing data from non-participating client $\tilde{\mu}_{i,\overline{V}_i}$ and let $\bar{Z}_i=\widetilde{Z}^{i,{V}_i}_{1,\overline{U}_1^{i,{V}_i}}$ be the testing data from participating client $\tilde{\mu}_{i,{V}_i}$. Notice that
    % \begin{align*}
    % \gamma(W,\bar{S}_i')-\gamma(W,\bar{S}_i)=&\gamma(\mathcal{A}(S),\bar{S}_i')-\gamma(\mathcal{A}(S),\bar{S}_i)\\
    %     \gamma(W,\bar{S}_i')-\gamma(W^{(i)},\bar{S}'_i)=&\gamma(\mathcal{A}(S),\bar{S}_i')-\gamma(\mathcal{A}(S^i),\bar{S}'_i),
    % \end{align*}
    \begin{align*}
    \ell(W,\bar{Z}_i')-\ell(W,\bar{Z}_i)=&\ell(\mathcal{A}(S),\bar{Z}_i')-\ell(\mathcal{A}(S),\bar{Z}_i)\\
        \ell(W,\bar{Z}_i')-\ell(W^{(i)},\bar{Z}'_i)=&\ell(\mathcal{A}(S),\bar{Z}_i')-\ell(\mathcal{A}(S^i),\bar{Z}'_i),
    \end{align*}
    where $S^i=(S\setminus S_i) \cup S'_i$.
    
    Under our assumption that the $i$-th client always uses the same algorithm $\mathcal{A}_i$, the key observation is that $\ex{\mu_{[K]},\bar{Z}_i,\mathcal{A}}{\ell(\mathcal{A}(S),\bar{Z}_i)}=\ex{\mu_{[K]\setminus i},\mu_i',\bar{Z}'_i,\mathcal{A}}{\ell(\mathcal{A}(S^i),\bar{Z}'_i)}$, where, with an abuse of the notation, we also use $\mathcal{A}$ to denote the inherent randomness of algorithm. Then, we have the following
    \begin{align*}
        \mathcal{E}_{PG}(\mathcal{A})=&\frac{1}{K}\sum_{i=1}^K\ex{\widetilde{Z}^i,U^i,W,V_i}{\ell(W,\bar{Z}_i')-\ell(W,\bar{Z}_i)}\\
        =&\frac{1}{K}\sum_{i=1}^K\ex{\widetilde{Z}^i,U^i,W,W^{(i)},V_i}{\ell(W,\bar{Z}_i')-\ell(W^{(i)},\bar{Z}'_i)}.
    \end{align*}

   Recall that $\ell(w,z)=\mathcal{D}_f(w,z)$, 
   % and $\bar{Z}'_{i}=\widetilde{Z}^{i,\overline{V}_i}_{1,\overline{U}_1^{i,\overline{V}_i}}$, 
   we have
   \begin{align*}
       &\ex{}{\ell(W,\bar{Z}_i')-\ell(W^{(i)},\bar{Z}'_i)}\\
       % =&\frac{1}{n}\sum_{j=1}^n\ex{}{\ell(W,\bar{Z}'_{i,j})-\ell(W^{(i)},\bar{Z}_{i,j}')}\\
       =&\ex{}{f(W)-f(\bar{Z}'_{i})-\langle \nabla f(\bar{Z}'_{i}), W-\bar{Z}'_{i} \rangle - \pr{f(W^{(i)})-f(\bar{Z}'_{i})-\langle \nabla f(\bar{Z}'_{i}), W^{(i)}-\bar{Z}'_{i}}}\\
       =&\ex{}{\langle \nabla f(\bar{Z}'_{i}), W^{(i)}-W \rangle},
   \end{align*}
   % \begin{align*}
   %     &\ex{}{\gamma(W,\bar{S}_i')-\gamma(W^{(i)},\bar{S}'_i)}\\
   %     =&\frac{1}{n}\sum_{j=1}^n\ex{}{\ell(W,\bar{Z}'_{i,j})-\ell(W^{(i)},\bar{Z}_{i,j}')}\\
   %     =&\frac{1}{n}\sum_{j=1}^n\ex{}{f(W)-f(\bar{Z}'_{i,j})-\langle \nabla f(\bar{Z}'_{i,j}), W-\bar{Z}'_{i,j} \rangle - \pr{f(W^{(i)})-f(\bar{Z}'_{i,j})-\langle \nabla f(\bar{Z}'_{i,j}), W^{(i)}-\bar{Z}'_{i,j}}}\\
   %     =&\frac{1}{n}\sum_{j=1}^n\ex{}{\langle \nabla f(\bar{Z}'_{i,j}), W^{(i)}-W \rangle},
   % \end{align*}
   where the last equation is due to the fact that $W$ and $W^{(i)}$ have the same marginal distribution, namely $\ex{\mathcal{D},P_\mathcal{A}}{f(W)}=\ex{\mathcal{D},P_\mathcal{A}}{f(W^{(i)})}$.

   Since $W^{(i)}=\frac{1}{K}\pr{\sum_{k\neq i} W_k + W'_i}$ and $W=\frac{1}{K}\sum_{i=1}^K W_i$, they only differ at $i$-th local model, we have $W^{(i)}-W=\frac{1}{K} \pr{W'_i-W_i}$. Consequently,
    \begin{align*}
       \ex{}{\ell(W,\bar{Z}_i')-\ell(W^{(i)},\bar{Z}'_i)}
       =&\frac{1}{K}\ex{}{\langle \nabla f(\bar{Z}'_{i}), W'_i-W_i \rangle}\\
       =&\frac{1}{K}\ex{}{\langle \nabla f(\bar{Z}'_{i}), W'_i-\bar{Z}'_{i}-(W_i-\bar{Z}'_{i}) \rangle + f(\bar{Z}'_{i})-f(\bar{Z}'_{i})+f(W_i)-f(W'_i)}\\
       % =&\frac{1}{K}\ex{}{\ell(W_i,\bar{Z}'_{i})-\ell(W_i',Z_{i}')}\\
       % =&\frac{1}{K}\ex{}{\ell(W_i,\bar{Z}_i')-\ell(W'_i,\bar{Z}'_i)}\\
       =&\frac{1}{K}\ex{}{\ell(W_i,\bar{Z}_i')-\ell(W_i,\bar{Z}_i)},
   \end{align*}
   % \begin{align*}
   %     &\ex{}{\gamma(W,\bar{S}_i')-\gamma(W^{(i)},\bar{S}'_i)}\\
   %     =&\frac{1}{Kn}\sum_{j=1}^n\ex{}{\langle \nabla f(\bar{Z}'_{i,j}), W'_i-W_i \rangle}\\
   %     =&\frac{1}{Kn}\sum_{j=1}^n\ex{}{\langle \nabla f(\bar{Z}'_{i,j}), W'_i-\bar{Z}'_{i,j}-(W_i-\bar{Z}'_{i,j}) \rangle + f(\bar{Z}'_{i,j})-f(\bar{Z}'_{i,j})+f(W_i)-f(W'_i)}\\
   %     =&\frac{1}{nK}\sum_{j=1}^n\ex{}{\ell(W_i,\bar{Z}'_{i,j})-\ell(W_i',Z_{i,j}')}\\
   %     =&\frac{1}{K}\ex{}{\gamma(W_i,\bar{S}_i')-\gamma(W'_i,\bar{S}'_i)}\\
   %     =&\frac{1}{K}\ex{}{\gamma(W_i,\bar{S}_i')-\gamma(W_i,\bar{S}_i)},
   % \end{align*}
   where we use $\ex{}{f(W_i)}=\ex{}{f(W_i')}$ in the second equality.

   Therefore,
   \begin{align*}
        \mathcal{E}_{PG}(\mathcal{A})
        =&\frac{1}{K^2}\sum_{i=1}^K\ex{\widetilde{Z}^i,U^i,W_i,V_i}{\pr{\ell(W_i,\widetilde{Z}^{i,\overline{V}_i}_{1,\overline{U}_1^{i,\overline{V}_i}})-\ell(W_i,\widetilde{Z}^{i,{V}_i}_{1,\overline{U}_1^{i,{V}_i}})}}\\
        =&\frac{1}{K^2}\sum_{i=1}^K\ex{\widetilde{Z}^i,U^i,W_i,V_i}{(-1)^{V_i}\pr{\ell(W_i,\widetilde{Z}^{i,1}_{1,\overline{U}_1^{i,1}})-\ell(W_i,\widetilde{Z}^{i,0}_{1,\overline{U}_1^{i,0}})}}.
    \end{align*}

   %  Therefore,
   % \begin{align*}
   %      \mathcal{E}_{PG}(\mathcal{A})
   %      =&\frac{1}{K^2}\sum_{i=1}^K\ex{\widetilde{Z}^i,U^i,W_i,V_i}{\pr{\gamma(W_i,\widetilde{Z}^{i,\overline{V}_i}_{\overline{U}^{i,\overline{V}_i}})-\gamma(W_i,\widetilde{Z}^{i,{V}_i}_{\overline{U}^{i,{V}_i}})}}\\
   %      =&\frac{1}{K^2}\sum_{i=1}^K\ex{\widetilde{Z}^i,U^i,W_i,V_i}{(-1)^{V_i}\pr{\gamma(W_i,\widetilde{Z}^{i,1}_{\overline{U}^{i,1}})-\gamma(W_i,\widetilde{Z}^{i,0}_{\overline{U}^{i,0}})}}.
   %  \end{align*}

      For out-of-sample gap, we now consider keeping the $i$-th participating client $\mu_i$ unchanged, but the $j$-th data in its training sample $S_i$ is replaced by another i.i.d. sampled data $Z'_{i,j}\sim\mu_i$, then the provided local model is denoted as $W'_{i,j}$. In this case, we denote the new aggregation model $W^{(i,j)}=\frac{1}{K}\pr{\sum_{k\neq i} W_{k} + W'_{i,j}}$.
    
    % Then, let $S'_i=\widetilde{Z}^{i,\overline{V}_i}_{\overline{U}^{i,\overline{V}_i}}$ be the testing sample from non-participating client $\tilde{\mu}_{i,\overline{V}_i}$, we have the following.
    % \begin{align*}
    %     \mathcal{E}_{PG}(\mathcal{A})=&\frac{1}{K}\sum_{i=1}^K\ex{\widetilde{Z}^i,U^i,W,V_i}{\gamma(W,S_i')-\gamma(W,S_i)}\\
    %     =&\frac{1}{K}\sum_{i=1}^K\ex{\widetilde{Z}^i,U^i,W,W^{(i)},V_i}{\gamma(W,\bar{S}_i')-\gamma(W^{(i)},\bar{S}'_i)}.
    % \end{align*}

   Recall that $\ell(w,z)=\mathcal{D}_f(w,z)$, for each participating client $\tilde{\mu}_{i,V_i}$, we have
   \begin{align*}
       &\frac{1}{n}\sum_{j=1}^n\ex{}{\ell(W,Z'_{i,j})-\ell(W,Z_{i,j})}\\
       =&\frac{1}{n}\sum_{j=1}^n\ex{}{\ell(W,Z'_{i,j})-\ell(W^{(i,j)},Z'_{i,j})}\\
       =&\frac{1}{n}\sum_{j=1}^n\ex{}{f(W)-f(Z'_{i,j})-\langle \nabla f(Z'_{i,j}), W-Z'_{i,j} \rangle - \pr{f(W^{(i,j)})-f(Z'_{i,j})-\langle \nabla f(Z'_{i,j}), W^{(i,j)}-Z'_{i,j}}}\\
       =&\frac{1}{n}\sum_{j=1}^n\ex{}{\langle \nabla f(Z'_{i,j}), W^{(i,j)}-W \rangle},
   \end{align*}
   where the last equation is by $\ex{}{f(W)}=\ex{}{f(W^{(i,j)})}$.

   Since $W^{(i,j)}=\frac{1}{K}\pr{\sum_{k\neq i} W_k + W'_{i,j}}$ and $W=\frac{1}{K}\sum_{i=1}^K W_i$, they still only differ at $i$-th local model, we have $W^{(i,j)}-W=\frac{1}{K} \pr{W'_{i,j}-W_i}$. Consequently,
   \begin{align*}
      &\frac{1}{n}\sum_{j=1}^n\ex{}{\ell(W,Z'_{i,j})-\ell(W,Z_{i,j})}\\
       =&\frac{1}{Kn}\sum_{j=1}^n\ex{}{\langle \nabla f(Z'_{i,j}), W'_{i,j}-W_i \rangle}\\
       =&\frac{1}{Kn}\sum_{j=1}^n\ex{}{\langle \nabla f(Z'_{i,j}), W'_{i,j}-Z'_{i,j}-(W_i-Z'_{i,j}) \rangle + f(Z'_{i,j})-f(Z'_{i,j})+f(W_i)-f(W'_{i,j})}\\
       =&\frac{1}{nK}\sum_{j=1}^n\ex{}{\ell(W_i,Z'_{i,j})-\ell(W_{i,j}',Z_{i,j}')}\\
       =&\frac{1}{nK}\sum_{j=1}^n\ex{}{\ell(W_i,Z'_{i,j})-\ell(W_i,Z_{i,j})},
       % =&\frac{1}{K}\ex{}{\gamma(W_i,S_i')-\gamma(W'_i,S'_i)}\\
       % =&\frac{1}{K}\ex{}{\gamma(W_i,S_i')-\gamma(W_i,S_i)},
   \end{align*}
   where in the last equality we use the fact that $\ex{}{\ell(W_{i,j}',Z_{i,j}')}=\ex{}{\ell(W_{i},Z_{i,j}}$.

   Therefore,
   \begin{align*}
         \mathcal{E}_{OG}(\mathcal{A})=&\frac{1}{Kn}\sum_{i=1}^K\sum_{j=1}^n\ex{W,\mu_i}{\ell(W,Z'_{i,j})-\ell(W,Z_{i,j})}\\
         =&\frac{1}{K^2n}\sum_{i=1}^K\sum_{j=1}^n\ex{\widetilde{Z}^i,V_i,U^{i,V_i}_j,W_i}{\ell(W_i,\widetilde{Z}^{i,V_i}_{j,\overline{U}^{i,V_i}_{j}})-\ell(W_i,\widetilde{Z}^{i,V_i}_{j,U^{i,V_i}_{j}})}\\
        =&\frac{1}{K^2n}\sum_{i=1}^K\sum_{j=1}^n\ex{\widetilde{Z}^i,V_i,U^{i,V_i}_j,W_i}{(-1)^{U^{i,V_i}_j}\pr{\ell(W_i,\widetilde{Z}^{i,V_i}_{j,1})-\ell(W_i,\widetilde{Z}^{i,V_i}_{j,0})}}.
    \end{align*}
    This completes the proof.
\end{proof}

\begin{proof}[Proof of Theorem~\ref{thm:average-bregman-cmi}.]
    Having Lemma~\ref{lem:local-gap} in hand, the development for proving the bound in Theorem~\ref{thm:average-bregman-cmi} nearly follows the same procedure in the proof of Theorem~\ref{thm:main-cmi}.

    The main difference lies in bounding the cumulant generating function, where Theorem~\ref{thm:main-cmi} uses Lemma~\ref{lem:hoeffding} for bounded random vairable, i.e. Eq.~(\ref{ineq:main-hoeff-1}) and Eq.~(\ref{ineq:main-hoeff-2}), here we use the definition of sub-Gaussian random variable.
    We denote 
    \begin{align*}
        A_1 = & (-1)^{V'_i}\pr{\ell(W_i,\tilde{z}^{i,1}_{1,\bar{u}_1^{i,1}})-\ell(W_i,\tilde{z}^{i,0}_{1,\bar{u}_1^{i,0}})},\\
        A_2 = &(-1)^{{U'}^{i,v_i}_j}\pr{\ell(W_i,\tilde{z}^{i,v_i}_{j,1})-\ell(W_i,\tilde{z}^{i,v_i}_{j,0})}.
    \end{align*}
    Notice that due to the conditional independence between $W_i$ and $V'_i$, and the conditional independence $W_i$ and ${U'}^{i,v_i}_j$, $A_1$ and $A_2$ both have zero mean. Therefore, by the definition of sub-Gaussian random variable,
    \begin{align}
    \ex{W_i,V'_i|\tilde{z}^i,u^i}{e^{tA_1}}\leq& e^{\frac{t^2\sigma^2_i}{2}},\label{ineq:subgaussian-1}\\
    \ex{W_i,{U'}^{i,v_i}_j|\tilde{z}^i,v_i}{e^{tA_2}}\leq& e^{\frac{t^2\sigma_{i,j}^2}{2}}.\label{ineq:subgaussian-2}
\end{align}

Eq.~(\ref{ineq:subgaussian-1}-\ref{ineq:subgaussian-2}) are used to replace Eq.~(\ref{ineq:main-hoeff-1}) and Eq.~(\ref{ineq:main-hoeff-2}), respectively. 
The remaining steps are the same with the proof of Theorem~\ref{thm:main-cmi}, see Appendix~\ref{sec:proof-main-cmi}, that is, we can obtain that
\begin{align*}
    \abs{\ex{\widetilde{Z}^i,U^i,W_i,V_i}{(-1)^{V_i}\pr{\ell(W_i,\widetilde{Z}^{i,1}_{1,\overline{U}_1^{i,1}})-\ell(W_i,\widetilde{Z}^{i,0}_{1,\overline{U}_1^{i,0}})}}}\leq&\mathbb{E}_{\widetilde{Z}^i,U^i}\sqrt{2I^{\widetilde{Z}^i,U^i}(W_i,V_i)},\\
        \abs{\ex{\widetilde{Z}^i,V_i,U^{i,V_i}_j,W_i}{(-1)^{U^{i,V_i}_j}\pr{\ell(W_i,\widetilde{Z}^{i,V_i}_{j,1})-\ell(W_i,\widetilde{Z}^{i,V_i}_{j,0})}}}\leq&\mathbb{E}_{\widetilde{Z}^i,V_i}\sqrt{2I^{\widetilde{Z}^i,V_i}(W_i,U^{i,V_i}_j)}.
\end{align*}

Finally, plugging the above bounds into the inequalities in Lemma~\ref{lem:local-gap} will complete the proof.
\end{proof}

% \textcolor{red}{Adding a result for only require loss difference to be sub-Gaussian.}

\subsection{Proof of Theorem~\ref{thm:average-Lipshitz-cmi}}
\label{sec:proof-convex-smooth}
% The most important ingredient for proving Theorem~\ref{thm:average-Lipshitz-cmi} is the following lemma.

% \begin{lem}
%     \label{lem:local-gap-lipschitz}
%     Let $\ell(w,z)$ be $L$-smooth and $\alpha$-strongly convex, we have
%     \begin{align*}
%         \mathcal{E}_{OG}(\mathcal{A})=&\frac{1}{K^2n}\sum_{i=1}^K\sum_{j=1}^n\ex{\widetilde{Z}^i,V_i,U^{i,V_i}_j,W_i}{(-1)^{U^{i,V_i}_j}\pr{\ell(W_i,\widetilde{Z}^{i,V_i}_{j,1})-\ell(W_i,\widetilde{Z}^{i,V_i}_{j,0})}}.
%     \end{align*}
% \end{lem}

\begin{proof}
% [Proof of Lemma~\ref{lem:local-gap-lipschitz}.]
    % Considering the participation gap. The $i$-th participating client $\mu_i$ uses a training sample $S_i$ to train a local model $W_i$, then if this client is replaced by another client, $\mu'_i$, which uses its own training sample $S_i'$, then the provided local model is denoted as $W^{(i)}$. In this case, we denote the new aggregation model $W^{(i)}=\frac{1}{K}\pr{\sum_{k\neq i} W_k + W'_i}$.

    %   For out-of-sample gap, we now consider keeping the $i$-th participating client $\mu_i$ unchanged, but the $j$-th data in its training sample $S_i$ is replaced by another i.i.d. sampled data $Z'_{i,j}\sim\mu_i$, then the provided local model is denoted as $W'_{i,j}$. In this case, we denote the new aggregation model $W^{(i,j)}=\frac{1}{K}\pr{\sum_{k\neq i} W_{k} + W'_{i,j}}$.
    
    % Then, let $S'_i=\widetilde{Z}^{i,\overline{V}_i}_{\overline{U}^{i,\overline{V}_i}}$ be the testing sample from non-participating client $\tilde{\mu}_{i,\overline{V}_i}$, we have the following.
    % \begin{align*}
    %     \mathcal{E}_{PG}(\mathcal{A})=&\frac{1}{K}\sum_{i=1}^K\ex{\widetilde{Z}^i,U^i,W,V_i}{\gamma(W,S_i')-\gamma(W,S_i)}\\
    %     =&\frac{1}{K}\sum_{i=1}^K\ex{\widetilde{Z}^i,U^i,W,W^{(i)},V_i}{\gamma(W,\bar{S}_i')-\gamma(W^{(i)},\bar{S}'_i)}.
    % \end{align*}

   First, the smoothness and strong convexity of loss function indicate that, for any $z$,
   \begin{align*}
       \textit{Smoothness: }\quad \ell(w_1,z)&\leq \ell(w_2,z) + \langle \nabla\ell(w_2,z), w_1-w_2 \rangle+\frac{L}{2}||w_1-w_2||^2, \qquad \forall w_1,w_2\in\mathcal{W}.\\
       \textit{Strong Convexity: }\quad \ell(w_1,z)&\geq \ell(w_2,z) + \langle \nabla\ell(w_2,z), w_1-w_2 \rangle+\frac{\alpha}{2}||w_1-w_2||^2, \qquad \forall w_1,w_2\in\mathcal{W}.
   \end{align*}

   The following development is inspired by \citet{gholami2024improved}.
   
   For each participating client $\tilde{\mu}_{i,V_i}$, we have
   \begin{align}
       \frac{1}{n}\sum_{j=1}^n\ex{}{\ell(W,Z'_{i,j})-\ell(W,Z_{i,j})}
       =&\frac{1}{n}\sum_{j=1}^n\ex{}{\ell(W,Z'_{i,j})-\ell(W^{(i,j)},Z'_{i,j})}\notag\\
       \leq&\frac{1}{n}\sum_{j=1}^n\ex{}{\langle\nabla\ell(W^{(i,j)},Z'_{i,j}),W-W^{(i,j)}\rangle+\frac{L}{2}||W-W^{(i,j)}||^2}\notag\\
       =&\frac{1}{Kn}\sum_{j=1}^n\ex{}{\langle\nabla\ell(W^{(i,j)},Z'_{i,j}),W_i-W'_{i,j}\rangle}+\frac{L}{2K^2n}\sum_{j=1}^n\ex{}{||W_i-W'_{i,j}||^2}\notag\\
       \leq&\frac{1}{Kn}\sum_{j=1}^n\sqrt{\ex{}{\norm{\nabla \ell(W^{(i,j)},Z'_{i,j})}^2}\ex{}{\norm{W_i-W'_{i,j}}^2}}+\frac{L}{2K^2n}\sum_{j=1}^n\ex{}{\norm{W_i-W'_{i,j}}^2},
       \label{ineq:smooth-convex-1}
   \end{align}
   where the first inequality is by the smoothness of $\ell(w,z)$ and the last inequality is by Cauchy-Schwarz inequality.

   Again, by the smoothness property and $\nabla \ell(W'_{i,j},Z'_{i,j})=0$, we have
   \begin{align}
       \ex{}{\norm{\nabla \ell(W^{(i,j)},Z'_{i,j})}^2}=&\ex{}{\norm{\nabla \ell(W^{(i,j)},Z'_{i,j})-\nabla \ell(W'_{i,j},Z'_{i,j})}^2}\notag\\
       \leq& 2L\ex{}{\ell(W^{(i,j)},Z'_{i,j})- \ell(W'_{i,j},Z'_{i,j})}\notag\\
       =&2L\ex{}{\ell(W,Z_{i,j}) - \ell(W_i,Z_{i,j})}.
       \label{ineq:smooth-1}
   \end{align}

   In addition, by the strong convexity and $\nabla \ell(W'_{i,j},Z'_{i,j})=0$, we have
\begin{align}
       \ex{}{\norm{W_i-W'_{i,j}}^2}\leq& \frac{2}{\alpha}\ex{}{\ell(W_i,Z'_{i,j})- \ell(W'_{i,j},Z'_{i,j})}\notag\\
       =&\frac{2}{\alpha}\ex{}{\ell(W_i,Z'_{i,j})- \ell(W_{i},Z_{i,j})}.
       \label{ineq:convex-1}
   \end{align}

   Plugging Eq.~(\ref{ineq:smooth-1}-\ref{ineq:convex-1}) into Eq.~(\ref{ineq:smooth-convex-1}), we have
   \begin{align}
       &\frac{1}{n}\sum_{j=1}^n\ex{}{\ell(W,Z'_{i,j})-\ell(W,Z_{i,j})}\notag\\
       \leq&\frac{2}{Kn}\sum_{j=1}^n\sqrt{\frac{L}{\alpha}\ex{}{\ell(W,Z_{i,j}) - \ell(W_i,Z_{i,j})}\ex{}{\ell(W_i,Z'_{i,j})- \ell(W_{i},Z_{i,j})}}+\frac{L}{\alpha K^2n}\sum_{j=1}^n\ex{}{\ell(W_i,Z'_{i,j})- \ell(W_{i},Z_{i,j})}\notag\\
       \leq&\frac{2}{Kn}\sum_{j=1}^n\sqrt{\frac{L}{\alpha}\ex{}{\ell(W,Z_{i,j}) - \ell(W_i,Z_{i,j})}\ex{}{(-1)^{U^{i,V_i}_j}\pr{\ell(W_i,\widetilde{Z}^{i,V_i}_{j,1})-\ell(W_i,\widetilde{Z}^{i,V_i}_{j,0})}}}\notag\\
       &\qquad\qquad + \frac{L}{\alpha K^2n}\sum_{j=1}^n\ex{}{(-1)^{U^{i,V_i}_j}\pr{\ell(W_i,\widetilde{Z}^{i,V_i}_{j,1})-\ell(W_i,\widetilde{Z}^{i,V_i}_{j,0})}}.
   \end{align}

The remaining task is to bound $\ex{}{(-1)^{U^{i,V_i}_j}\pr{\ell(W_i,\widetilde{Z}^{i,V_i}_{j,1})-\ell(W_i,\widetilde{Z}^{i,V_i}_{j,0})}}$. Recall that each $\mathcal{A}_i$ is an interpolating algorithm, we adapt similar techniques from Theorem~\ref{thm:cmi-fast-rate}, obtaining the following bound would be straightforward
\begin{align*}
    \ex{}{(-1)^{U^{i,V_i}_j}\pr{\ell(W_i,\widetilde{Z}^{i,V_i}_{j,1})-\ell(W_i,\widetilde{Z}^{i,V_i}_{j,0})}}\leq \frac{2I(W_i;U_j^{i,V_i}|\widetilde{Z}^i,V_i)}{\log{2}},
\end{align*}
Here, the constant $C_4 = \log{2}/2$ arises from the fact that the empirical risk is zero (due to interpolation), allowing the multiplier $C_2$ in Eq.~(\ref{ineq:fast-rate-optimal-2}) to become arbitrarily large and thus broadening the optimization range for $C_4$. By solving the inequality $e^{-2C_4C_2}+e^{2C_4}\leq 2$ in the limit $C_2\to\infty$, we have $C_4=\frac{\log{2}}{2}$.

Consequently, we have
\begin{align*}
    \mathcal{E}_{OG}(\mathcal{A})=&\frac{1}{K}\sum_{i=1}^K\pr{\frac{1}{n}\sum_{j=1}^n\ex{}{\ell(W,Z'_{i,j})-\ell(W,Z_{i,j})}}\\
    \leq&\frac{2}{K^2n}\sum_{i=1}^K\sum_{j=1}^n\sqrt{\frac{2L}{\alpha\log{2}}\ex{}{\ell(W,Z_{i,j})}I(W_i;U_j^{i,V_i}|\widetilde{Z}^i,V_i)} + \frac{2L}{\alpha K^3n\log{2}}\sum_{i=1}^K\sum_{j=1}^nI(W_i;U_j^{i,V_i}|\widetilde{Z}^i,V_i).
\end{align*}
       This completes the proof.
\end{proof}

\section{Additional Result: Loss-Difference CMI Version of Theorem~\ref{thm:main-cmi}}
\label{sec:ldcmi-main}
We apply the loss-difference (LD)-CMI technique from \citet{wang2023informationtheoretic} to derive a tighter version of Theorem~\ref{thm:main-cmi}.

Let $\bar{L}^+_i=\ell(W,\widetilde{Z}^{i,0}_{1,\overline{U}_1^{i,0}})$ and $\bar{L}^-_i=\ell(W,\widetilde{Z}^{i,1}_{1,\overline{U}_1^{i,1}})$. Let
 $L^{i+}_{j}=\ell(W,\widetilde{Z}^{i,V_i}_{j,0})$ and $L^{i-}_{j}=\ell(W,\widetilde{Z}^{i,V_i}_{j,1})$. Denote $\Delta \bar{L}_i=\bar{L}^-_i-\bar{L}^+_i$ and $\Delta L^i_i=L^{i-}_{j}-L^{i+}_{j}$. Using a similar development as in the proof of Theorem~\ref{thm:main-cmi}, we obtain the following refined bound.
\begin{thm}
\label{thm:main-ldcmi}
    Assume $\ell(\cdot,\cdot)\in[0,1]$, then we have
    \begin{align*}
    \abs{\mathcal{E}_{\mathcal{D}}(\mathcal{A})}\leq\frac{1}{K}\sum_{i=1}^K\sqrt{2I(\Delta \bar{L}_i;V_i)}
     +\frac{1}{Kn}\sum_{i=1}^K\sum_{j=1}^n\mathbb{E}_{V_i}\sqrt{2I^{V_i}(\Delta L^i_j;U^{i,V_i}_j)}.
    \end{align*}
\end{thm}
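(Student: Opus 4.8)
The plan is to replay the proof of Theorem~\ref{thm:main-cmi} with a single structural change: apply the Donsker--Varadhan inequality (Lemma~\ref{lem:DV representation}) directly to the \emph{scalar} loss differences $\Delta\bar{L}_i$ and $\Delta L^i_j$ instead of to the high-dimensional hypothesis $W$. The point is that a loss difference is already a bounded one-dimensional random variable, so the boundedness needed to control the log-moment-generating function is automatic and no longer forces us to condition on the supersamples. Concretely, Lemma~\ref{lem:cmi-symmetric} rewritten in the present notation reads
\begin{align*}
\mathcal{E}_{PG}(\mathcal{A})&=\frac{1}{K}\sum_{i=1}^K\ex{\Delta\bar{L}_i,V_i}{(-1)^{V_i}\Delta\bar{L}_i},\\
\mathcal{E}_{OG}(\mathcal{A})&=\frac{1}{Kn}\sum_{i=1}^K\sum_{j=1}^n\ex{\Delta L^i_j,U^{i,V_i}_j,V_i}{(-1)^{U^{i,V_i}_j}\Delta L^i_j},
\end{align*}
which follows immediately from $\Delta\bar{L}_i=\bar{L}^-_i-\bar{L}^+_i$ and $\Delta L^i_j=L^{i-}_j-L^{i+}_j$.

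For the participation gap I would apply Lemma~\ref{lem:DV representation} to the pair $(\Delta\bar{L}_i,V_i)$ with test function $f=t(-1)^{V_i}\Delta\bar{L}_i$ and introduce an independent copy $V'_i$ of $V_i$ (independent of $\Delta\bar{L}_i$). Because $V'_i\sim\mathrm{Unif}(\{0,1\})$, we have $\ex{V'_i}{(-1)^{V'_i}\Delta\bar{L}_i}=0$, and since $\ell\in[0,1]$ forces $\Delta\bar{L}_i\in[-1,1]$, the variable $(-1)^{V'_i}\Delta\bar{L}_i$ is zero-mean and supported in $[-1,1]$ for each fixed $\Delta\bar{L}_i$; Hoeffding's Lemma (Lemma~\ref{lem:hoeffding}) then bounds its log-moment-generating function by $t^2/2$ for every $t\in\mathbb{R}$, exactly mirroring Eq.~(\ref{ineq:main-hoeff-1}). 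Rearranging and optimizing over $t$ (and using that the Hoeffding bound is valid for both signs of $t$ to recover the absolute value) yields $\abs{\ex{}{(-1)^{V_i}\Delta\bar{L}_i}}\le\sqrt{2I(\Delta\bar{L}_i;V_i)}$; averaging over $i$ and applying the triangle inequality gives the first term of the claim. The out-of-sample gap is treated identically, with the one difference that the participating supersample $\widetilde{Z}^{i,V_i}$---and hence both $\Delta L^i_j$ and the Bernoulli $U^{i,V_i}_j$---is indexed by $V_i$; I would therefore run the Donsker--Varadhan step conditionally on $V_i=v_i$, using an independent copy ${U'}^{i,v_i}_j$ to get $\abs{\ex{\Delta L^i_j,U^{i,v_i}_j|v_i}{(-1)^{U^{i,v_i}_j}\Delta L^i_j}}\le\sqrt{2I^{v_i}(\Delta L^i_j;U^{i,v_i}_j)}$, and then take the expectation over $V_i$ and average over $i,j$. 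Combining the two parts through $\abs{\mathcal{E}_{\mathcal{D}}(\mathcal{A})}\le\abs{\mathcal{E}_{PG}(\mathcal{A})}+\abs{\mathcal{E}_{OG}(\mathcal{A})}$ closes the argument.

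The main obstacle is purely bookkeeping the conditioning asymmetry and justifying it: I must explain why the participation term carries no conditioning at all---the zero-mean and boundedness of $(-1)^{V'_i}\Delta\bar{L}_i$ hold \emph{unconditionally}, so once $W$ is collapsed to the scalar $\Delta\bar{L}_i$ the supersample-conditioning of Theorem~\ref{thm:main-cmi} becomes superfluous---whereas the out-of-sample term genuinely needs conditioning on $V_i$, because the very definition of $\Delta L^i_j$ and of the train/test split $U^{i,V_i}_j$ depends on which column of the superclient participates. I would also record the tightening relative to Theorem~\ref{thm:main-cmi} at the level of the information terms: since $\Delta\bar{L}_i$ is a deterministic function of $(W,\widetilde{Z}^i,U^i)$, data processing gives $I^{\widetilde{Z}^i,U^i}(\Delta\bar{L}_i;V_i)\le I^{\widetilde{Z}^i,U^i}(W;V_i)$, and because $V_i\indp(\widetilde{Z}^i,U^i)$ the chain-rule identity used in the proof of Corollary~\ref{cor:hypercmi-fast-rate} further gives $I(\Delta\bar{L}_i;V_i)\le I^{\widetilde{Z}^i,U^i}(\Delta\bar{L}_i;V_i)$, so collapsing $W$ to the loss difference only shrinks the information content.
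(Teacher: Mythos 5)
Your proposal is correct and takes essentially the same route as the paper's own (sketched) proof, which likewise replays the argument of Theorem~\ref{thm:main-cmi} with the functions $g$ and $h$ collapsed to the scalar test functions $(-1)^{V_i}\Delta\bar{L}_i$ and $(-1)^{U^{i,V_i}_j}\Delta L^i_j$, applying Donsker--Varadhan and Hoeffding to the bounded, zero-mean (under the independent-copy product measure) loss differences. Your handling of the conditioning asymmetry---unconditional mutual information for the participation term, disintegration over $V_i$ for the out-of-sample term---matches the theorem statement exactly, and your closing data-processing/chain-rule observation mirrors the paper's remark that Theorem~\ref{thm:main-ldcmi} tightens Theorem~\ref{thm:main-cmi}.
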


\begin{proof}[Proof Sketch.]
    Following the proof of Theorem~\ref{thm:main-cmi}, the function $g(\tilde{z}^i,u^i,w,v_i)=(-1)^{v_i}\pr{\ell(w,\tilde{z}_{1,u_1^{i,1}}^{i,1})-\ell(w,\tilde{z}_{1,u_1^{i,0}}^{i,0})}$ transforms into
    \[
    g(\Delta\bar{L}_i,V_i)=(-1)^{V_i}\Delta \bar{L}_i,
    \]
    Similarly, the function $h(\tilde{z}^i,v_i,w,u^{i,v_i}_j)=(-1)^{u^{i,v_i}_j}\pr{\ell(w,\tilde{z}^{i,v_i}_{j,1})-\ell(w,\tilde{z}^{i,v_i}_{j,0})}$ now becomes
    \[
    h(V_i,\Delta L^i_j,U^{i,V_i}_j)=(-1)^{U^{i,V_i}_j}\Delta L^i_j.
    \]

    The remaining steps follow the same structure as the original proof, with the expectation now taken over the newly defined loss difference random variables.
\end{proof}

By data-processing inequality and chain rule, Theorem~\ref{thm:main-ldcmi} is tighter than Theorem~\ref{thm:main-cmi}.

\section{Additional Experiment Details}
\label{sec:exp-details}
We adapt the code from \hyperlink{https://github.com/hrayrhar/f-CMI}{https://github.com/hrayrhar/f-CMI} for supersample construction and CMI computation, and we use the FL training code from \hyperlink{https://github.com/vaseline555/Federated-Learning-in-PyTorch}{https://github.com/vaseline555/Federated-Learning-in-PyTorch}. We now state the complete experiment details below.

\paragraph{MNIST} We train a CNN with approximately $170$K parameters. The model consists of two $5\times 5$ convolutional layers---the first with $32$ channels and the second with $64$---each followed by $2\times 2$ max pooling. These are followed by a fully connected layer with $512$ units and ReLU activation, and a final softmax output layer. Each local training algorithm $\mathcal{A}_i$ trains this model using full-batch GD with an initial learning rate of $0.1$, which decays by a factor of 0.01 every 10 steps. At each FL round, clients train locally for $5$ epochs before sending their models to the central server. The entire training process spans communication $300$ rounds between clients and the central server, reducing the commonly used $1000$ rounds to lower computational costs.

\paragraph{CIFAR-10} We use the same CNN model from \citet{mcmahan2017communication}, which has approximately $10^6$ parameters. The architecture consists of two convolutional layers, followed by two fully connected layers and a final linear transformation layer. The CIFAR-10 images are preprocessed as part of the training pipeline, including cropping to $24\times 24$, random horizontal flipping, and adjustments to contrast, brightness, and whitening. Each local training algorithm $\mathcal{A}_i$ trains the CNN model using SGD with a mini-batch size of $50$ and follows the same learning rate schedule as in the MNIST experiment. As in the MNIST setup, clients train locally for five epochs per round before sending their models to the central server, with training spanning $300$ communication rounds. 

\paragraph{Non-IID Setting and Evaluation Metric} For both of the classification tasks, we evaluate prediction error as our performance metric, i.e. we utilize the zero-one loss function to compute generalization error. During training, we use cross-entropy loss as a surrogate to enable optimization with gradient-based methods. To introduce a non-IID setting, we apply a pathological non-IID data partitioning scheme as in \citet{mcmahan2017communication}. Specifically, all images are first sorted by their labels, divided into shards, and then assigned to clients such that each client receives two shards, resulting in most clients having examples from only two digit classes. Additionally, since we pre-define the non-participating clients from the superclient, all participating clients are included in every round of training.

\paragraph{CMI Estimation} For supersample and superclient construction, when analyzing generalization behavior concerning the sample size $n$, we fix the superclient size at $100$, leading $50$ participating clients and $50$ non-participating clients randomly selected by $V$. The sample size per client varies within $n\in\{10,50,100, 250\}$. When analyzing generalization behavior with respect to the number of participating clients $K$, we set $n=100$ for MNIST and $n=50$ for CIFAR-10, varying the number of clients as $K\in\{10,20,30,50\}$. To estimate the CMI terms, for both tasks, we draw three samples of $\widetilde{Z}$ and $V$, and $15$ samples of $U$ for each given $\tilde{z}$ and $v$. In each experiment, for a fixed sample size and client number, the individual CMI term $I(L_j^{i+};U_j^{i,V_i}|V_i=v_i)$ is estimated using $15$ samples, while $I(\bar{L}_i^+;V_i)$ is estimated using $45$ samples, utilizing a simple mutual information estimator for discrete random variables as in \citet{harutyunyan2021informationtheoretic}. In total, we conduct $720$ runs of FedAvg for these classification tasks. All experiments are performed using NVIDIA A100 GPUs with $40$ GB of memory.

\end{appendices}

\end{document}